\documentclass{article}

\usepackage[preprint]{neurips_2026}

\usepackage[utf8]{inputenc} % allow utf-8 input
\usepackage[T1]{fontenc}    % use 8-bit T1 fonts
\usepackage{hyperref}       % hyperlinks
\usepackage{url}            % simple URL typesetting
\usepackage{booktabs}       % professional-quality tables
\usepackage{amsfonts}       % blackboard math symbols
\usepackage{nicefrac}       % compact symbols for 1/2, etc.
\usepackage{microtype}      % microtypography
\usepackage{xcolor}         % colors
\usepackage{wrapfig}        

\author{Vladimir R. Kostic \\ {CSML, Istituto Italiano di Tecnologia} \\ {University of Novi Sad} \\ {\tt \small vladimir.kostic@iit.it} \And
Karim Lounici \\
CMAP-Ecole Polytechnique \\
{\tt \small karim.lounici@polytechnique.edu} \And
H\'el\`ene Halconruy \\ {SAMOVAR, T\'el\'ecom Sud-Paris} \\ 
{MODAL'X, Universit\'e Paris Nanterre}\\
{\tt \small helene.halconruy@telecom-sudparis.eu } \And 
Timoth\'ee Devergne \\
{CSML \& ATSIM, Istituto Italiano di Tecnologia} \\ {\tt \small timothee.devergne@iit.it} \And
Michele Parrinello \\ {ATSIM, Istituto Italiano di Tecnologia} \\ {\tt \small michele.parrinello@iit.it} \And
Massimiliano Pontil \\ {CSML, Istituto Italiano di Tecnologia} \\ {AI Centre, University College London} \\ {\tt \small massimiliano.pontil@iit.it}
}

\usepackage{microtype}
\usepackage{graphicx}
\usepackage{subcaption}
\usepackage{booktabs} % for professional tables
\usepackage{enumitem}
\usepackage{cancel}

\usepackage{dsfont}

\usepackage{thm-restate}
\usepackage{hyperref}

\usepackage{amsmath}
\usepackage{amssymb}
\usepackage{mathtools}
\usepackage{amsthm}

\theoremstyle{plain}
\newtheorem{theorem}{Theorem}[section]
\newtheorem{proposition}[theorem]{Proposition}
\newtheorem{lemma}[theorem]{Lemma}

\theoremstyle{definition}

\theoremstyle{remark}

\usepackage[disable,textsize=tiny]{todonotes}
\usepackage{amsmath,amsfonts,bm}

\def\ceil#1{\lceil #1 \rceil}
\def\floor#1{\lfloor #1 \rfloor}
\def\1{\bm{1}}

\def\eps{{\varepsilon}}

\DeclareMathAlphabet{\mathsfit}{\encodingdefault}{\sfdefault}{m}{sl}
\SetMathAlphabet{\mathsfit}{bold}{\encodingdefault}{\sfdefault}{bx}{n}

\newcommand{\R}{\mathbb{R}}

\newcommand{\softmax}{\mathrm{softmax}}

\DeclareMathOperator*{\esssup}{\ensuremath{\text{\rm ess\,sup}}}

\DeclareMathOperator*{\dom}{\ensuremath{\text{\rm dom}}}
\DeclareMathOperator*{\range}{\ensuremath{\text{\rm Im}}}

\DeclareMathOperator{\tr}{\ensuremath{\text{\rm tr}}}

\DeclareMathOperator*{\Spec}{\ensuremath{\text{\rm Sp}}}

\providecommand{\norm}[1]{\lVert#1\rVert}

\providecommand{\abs}[1]{\lvert#1\rvert}

\newcommand{\scalarp}[1]{{\langle #1\rangle}}

\newcommand{\N}{\mathbb N}

\renewcommand{\top}{{\raisebox{-0.25ex}{\scalebox{0.6}{$\mathsf{T}$}}}}
\newcommand{\EE}{\ensuremath{\mathbb E}}
\newcommand{\PP}{\ensuremath{\mathbb P}}

\newcommand{\Id}{I}
\newcommand{\w}[1]{\widehat{#1}} %General estimator
\newcommand{\TZ}{Z}  %Koopman injection   
\newcommand{\TP}{P}  % projector 
\newcommand{\TPt}{P_{\param}}  % projector on RKHS param
\newcommand{\TPT}{P_{\trunc}}  % projector truncated
\newcommand{\TPTo}{P_{\trunc}^{\bot}}  % projector truncated orthogonal
\newcommand{\X}{\mathcal{X}} % state space
\newcommand{\RKHS}{\mathcal{H}} % RKHS
\newcommand{\ErrSpc}{\error_m(\param)} % Spectral error

\newcommand{\Errwhp}{\mathcal{E}_{\trunc,n}(\delta)} % Error whp
\newcommand{\ErrwhpXi}{\mathcal{E}^{'}_{\trunc,n}(\delta)} % Error whp
\newcommand{\Errwhpv}{\mathcal{E}^{''}_{n}(\delta)} % Error whp 

\newcommand{\im}{\pi} %invariant measure
\newcommand{\bim}{\pi'} %invariant biased measure
\newcommand{\ebim}{\widehat{\pi}'} %emp biased invariant measure

\newcommand{\Lii}{\mathcal{L}^2_\im(\X)}

\newcommand{\Liii}{\mathcal L^2_\im}
\newcommand{\Wiix}{\mathcal{W}^{\shift}_{\im}(\X)}
\newcommand{\Wii}{\mathcal{W}^{\shift}_{\im}}
\newcommand{\Wiip}[1]{\mathcal{W}^{\shift,#1}_{\im}} % p-fractional energy space

\newcommand{\bWii}{\mathcal W^{1}_{\bim}(\X)}
\newcommand{\Liishort}{\mathcal{L}^2_\im} % L2 space

\newcommand{\TO}{T} %Transfer operator
\newcommand{\GE}{L} %Infinitesimal generator
\newcommand{\bGE}{L'} %Infinitesimal generator bieased
\newcommand{\EGE}{\widehat{\GE}} %Infinitesimal generator estimator
\newcommand{\bw}{v}  %bias weight

\newcommand{\ebw}{\widehat{\bw}}  %bias weight

\newcommand{\PT}{U} %potential
\newcommand{\bPT}{U'} %biased potential
\newcommand{\bias}{V} %bias potential

\newcommand{\regnn}{\alpha}

\newcommand{\error}{\mathcal{E}}
\newcommand{\shift}{\mu}
\newcommand{\trunc}{m} % truncation choose m or r
\newcommand{\truncbis}{m+1} % truncation choose m or r
\newcommand{\energy}{\mathfrak E}

\newcommand{\energykernel}{\energy^\shift}

\newcommand{\resolventinv}{(\Id \!-\! \shift^{-1}\GE) }
\newcommand{\eresolventshort}{\widehat{R}_\shift}
\newcommand{\resolventshort}{R_\shift} 
\newcommand{\resolventinvshort}{R_\shift}
 
\newcommand{\cv}{z}
\newcommand{\CV}{Z}
\newcommand{\NNek}{\cv^\param}
\newcommand{\NNe}{\cv_\param}
\newcommand{\NNE}{\CV_\param}

\newcommand{\NNop}{ \TZ_\param}
\newcommand{\EVop}{ \MX{D}_\param}
\newcommand{\param}{\theta}
\newcommand{\Param}{\Theta}
\newcommand{\cst}{\tau}
\newcommand{\exploss}{{\mathcal L}}

\newcommand{\emploss}{\widehat{\mathcal L}}

\newcommand{\Wiishort}{\mathcal{W}^{\shift}_\im}

\newcommand{\MX}[1]{\textsc{#1}}

\newcommand{\refun}{\psi} % resolvent eigenfunction
\newcommand{\reval}{\nu} % resolvent eigenvalue
\newcommand{\erefun}{\w{\psi}} % resolvent eigenfunction
\newcommand{\ereval}{\w{\nu}} %empirical resolvent eigenvalue
\newcommand{\gefun}{\refun}
\newcommand{\egefun}{\erefun}
\newcommand{\eval}{\lambda}
\newcommand{\evalp}{\eval^\param}
\newcommand{\geval}{\lambda} % generator eigenvalue
\newcommand{\egeval}{\widehat{\geval}}

\newcommand{\gap}{\ensuremath{\text{\rm gap}}}
\newcommand{\idfun}{f}%identity function gradient

\newcommand{\Cx}{\MX{C}}

\newcommand{\ECx}{\widehat{\MX{C}}}

\newcommand{\Wx}{\MX{W}}
\newcommand{\wx}{w_\param}
\newcommand{\EWx}{\widehat{\MX{W}}}

\newcommand{\Cxt}{\Cx_{\param}} %covariance NN param
\newcommand{\Wxt}{\Wx_{\param}} % injection NN param
\newcommand{\ECxt}{\w{\Cx}_{\param}} % empirical covariance NN param
\newcommand{\EWxt}{\w{\Wx}_{\param}} % empirical injection NN param
\newcommand{\Nt}{{N}_{\param}}
\newcommand{\ENt}{\w{N}_{\param}}

\newcommand{\Qt}{Q_{\param}}
\newcommand{\EQt}{\w{Q}_{\param}}

\newcommand{\massi}[2][]{\todo[color=brown!20,#1]{{\bf MP:} #2}}

\newcommand{\MP}[1]{\textcolor{brown}{#1}}

\title{Langevin-Informed Transfer Learning:\\ Replacing 
Target Samples by Black-Box Feedback}

\begin{document}

\maketitle

\begin{abstract}
Many scientific and machine learning systems, from molecular dynamics to diffusion models and beyond, are governed by stochastic dynamics with low-dimensional structure, evolving on slow timescales. However, target trajectories, used to identify and interpret such dynamics, are often inaccessible: only biased or static samples that explore the underlying manifold are available. 
%dynamics' manifold are available.
We introduce Langevin-Informed Transfer Learning (LITL), a framework for recovering target Langevin dynamics from biased source samples using only black-box feedback. \massi{bit unclear what black-box feedback} LITL learns the leading spectral structure of the target infinitesimal generator and the projected drift through Dirichlet representation learning, enabling kinetic reconstruction in spectral form and slow-manifold gradient field estimation. 
We further introduce a spherical variant well suited to steering normalized latent representations commonly used in learning systems  toward desired objectives.  We establish finite-sample guarantees for eigenvalue, eigenfunction, and projected drift estimation in Sobolev norms, thereby ensuring generalization of these quantities and their first-order derivatives.
%thus proving generalization of all objects in function and first order derivatives values. 
Empirically, LITL recovers physical transition timescales from biased molecular simulations, builds kinetic structure from static samples of generative models, reconstructs spherical symmetries of physical systems, and enables post-hoc latent steering of trained neural networks under black-box feedback. Together, these results position spectral operator learning as a practical framework for recovering stochastic dynamics under distribution shift and unlock applications across machine learning and the physical sciences.
\end{abstract}

\section{Introduction}
%\helene[]{below, a proposition adapted from Vladi’s Slack notes}
Many systems across science and modern machine learning evolve through stochastic dynamics shaped by a small number of slow collective modes. In molecular dynamics, these modes govern metastable states and transition pathways \cite{Schutte2023, Bonati2021}; in modern machine learning, they relate to semantic structures in generative \cite{zhu2023boundary} and transformer models \cite{geshkovski2025mathematical, kim2025metastable}. In both contexts, Langevin-type dynamics underlie these stochastic gradient flows, further being backbone of Bayesian sampling, diffusion models, and latent generative processes \cite{chaudhari2018stochastic,ma2015complete,sohl2015deep,song2021score,welling2011bayesian}.  Across these domains, the key object controlling long-timescale behavior is the \textbf{infinitesimal generator}, whose dominant spectral structure encodes geometry, kinetics, and transition pathways between metastable configurations of the underlying dynamics. \massi{add 1-2 reference} 
Understanding such systems requires recovering this spectral structure, a task intractable with classical methods
%a task well beyond classical numerical methods 
due to the curse of dimensionality that calls for data-driven approaches.    
%\helene[]{For this part, I’m reusing one of the intro paragraphs in red.}
%Understanding such systems requires recovering the spectral structure of the underlying infinitesimal generator, whose dominant eigenfunctions encode long-timescale organization, coarse-grained dynamics, and transition mechanisms. 
In practice, however, the target dynamics are often inaccessible: one may only observe \textit{static or biased samples together with black-box evaluative feedback}, while target trajectories, gradients, or direct dynamical observations remain unavailable. 
\noindent
So, a fundamental challenge persists:
\begin{center}
%\vspace{-.1truecm}
\textit{How can one recover a target system’s dynamics without access to its trajectories?}
%\vspace{-.1truecm}
\end{center} 
This setting arises naturally in biased molecular simulations \cite{laio_metad}, generative models producing equilibrium samples without kinetics \cite{bioemu2025}, physical systems where potentials are reconstructed from measurements \cite{jamet2001magnetic}, and post-hoc latent steering problems with purely evaluative black-box feedback. Existing transfer-learning and adaptation methods predominantly rely on retraining, fine-tuning, or direct optimization of parametric objectives \cite{pan2010survey,long2015learning, ganin2016domain}. In contrast, recent spectral approaches in dynamical systems focus on learning transfer operators or infinitesimal generators from trajectory data \cite{LGBPP12,KLUS2020132416,colbrook2023rigorous,Bruder2021, kostic2024laplace}, but do not address transfer under distribution shift or recovery of target dynamics from static biased samples. Moreover, existing generator-learning methods at best provide guarantees only in $\mathcal L^2$-type geometries and all fail to recover projected drift fields \cite{KLUS2020datadrivenkoopmangenerator, Cabannes_2023_Galerkin, ZHANG2022diffusionNN, devergne2024biased, kostic2024learning}, limiting controlled reconstruction of target dynamics and kinetic transfer.

\noindent
We introduce \textbf{Langevin-Informed Transfer Learning (LITL)}, a framework for recovering and transferring target Langevin dynamics from biased source samples using only black-box feedback. Rather than addressing transfer learning in full generality, 
%\helene[]{to distinguish LITL from general transfer learning}\massi{very nice!}
LITL focuses specifically on transferring stochastic dynamics by recovering the target generator’s slow spectral structure together with a projected drift on a learned slow manifold. The method combines importance weighting, Sobolev/Dirichlet representation learning, and generalized eigendecomposition to reconstruct physically meaningful dynamics from static or biased data without requiring target trajectories, differentiable objectives, or retraining. In this sense, LITL reframes transfer learning as transfer of stochastic dynamics: by learning the generator geometry rather than task-specific parameters, it enables post-hoc steering through geometry-aware Langevin flows in latent spaces. This perspective is particularly compelling for modern machine learning systems operating on normalized embeddings and diffusion-like latent dynamics, where semantic organization is intrinsically geometric and metastable. In this setting, we further introduce a streamlined spherical formulation enabling efficient gradient-free transfer on normalized latent manifolds. To relate LITL to existing transfer learning methods, see App. \ref{app:related_work}.

\noindent
On the theoretical side, we establish finite-sample guarantees for eigenvalue, eigenfunction, and projected-drift estimation in Sobolev norms. Empirically, LITL recovers physical transition timescales from biased molecular simulations, extracts kinetic structure from static GenAI samples, reconstructs spherical dynamics of cobalt nanoparticles from black-box potential evaluations, and enables post-hoc fairness-aware latent steering without retraining. Together, these results position spectral operator learning as a practical framework for recovering and transferring stochastic dynamics under distribution shift, while highlighting broader implications for machine learning systems governed by Langevin-type dynamics.

\textbf{Contributions.} This paper makes three main contributions. %spanning methodology, theory, and applications:
\textit{(i) Methodology:} We introduce LITL, a framework for recovering and transferring stochastic dynamics from biased samples using black-box feedback.
\textit{(ii) Theory:} To our knowledge, we provide the first finite-sample guarantees for eigenvalue, eigenfunction, and projected-drift estimation 
%with neural networks 
in Sobolev norms.
\textit{(iii) Applications:} We demonstrate effectiveness across molecular modeling, physical systems, and post-hoc alignment tasks.

\textbf{Paper organization.}~ Sec.~\ref{sec:problem} provides background on Langevin dynamics and generator learning. Sec.~\ref{sec:method} presents our method. Sec.~\ref{sec:bounds} establishes finite-sample bounds, and Sec.~\ref{sec:experiments} reports  experiments. The appendix covers related work, additional background, and proofs of the main results.
%of the main results presented in the main body of the paper. %Each section is expanded in the supplementary material.% on molecular dynamics and a fairness application.

\section{Background and problem setup}
\label{sec:problem}
%\massi{If space permits add a short opening sentence}

\textbf{Langevin dynamics and spectral learning.} 
{Many systems exhibit multiple timescales, with long-term behavior governed by a few slow modes encoded in the generator’s spectrum. We recover this structure from samples via the generator and its resolvent.
A canonical model for such dynamics is the} overdamped Langevin equation 
describes a system evolving in a state space $\mathcal X\subseteq\mathbb R^d$ under a potential $U:\mathbb R^d\to\mathbb R$ at {inverse} temperature $\beta>0$:
\begin{equation}
\label{eq:langevin}
dX_t=-\nabla \PT(X_t),dt+\sqrt{2\beta^{-1}}\,dW_t,\qquad X_0=x,
\end{equation}
where $(W_t)_{t}$ is a standard $d$-dimensional Brownian motion. It admits a unique strong solution defining a Markov process $X\,=\,(X_t)_{t\ge0}$ with transfer operators $(\TO_t)_{t\ge0}$ given by 
%satisfying 
$\TO_t f(x)\,=\,\mathbb{E}[f(X_t)| X_0=x]$. The law of $X_t$ converges to the Boltzmann measure
%distribution 
$\pi(dx)=e^{-\beta U(x)}dx/(\int e^{-\beta U(x)}dx)$, and the evolution is governed by the \textit{infinitesimal generator} $\GE:=\lim_{t\to0^+}(\TO_t-I)/t$, which acts on smooth functions as
$
\GE f=-\langle\nabla U,\nabla f\rangle+\beta^{-1}\Delta f,
$
is linked to the transfer operators through $\TO_t=e^{t\GE}$, and is self-adjoint on $\Liishort$ by integration by parts. If the spectrum is discrete, the generator has the decomposition
\begin{equation}
\label{eq:evd}
\GE=\textstyle{\sum_{i\in{\mathbb N}_0}}\lambda_i\,\gefun_i\otimes\gefun_i,
\end{equation}
%\helene[]{as suggested by Massi, I add a physical interpretation of the spectrum of IG}
where the eigenvalues $(\lambda_i)_{i\in\mathbb N_0}$ determine the characteristic timescales of the system. In particular, eigenvalues close to $0$ correspond to slow, metastable dynamics, while large negative eigenvalues describe rapidly decaying modes, making the spectral decomposition a natural tool for identifying coherent structures and long-time behavior in the dynamics. While learning $\GE$ directly is challenging due to its unbounded nature,  its resolvent $\resolventshort =(I -\shift^{-1}\GE)^{-1}$, $\mu > 0$, is compact for confining potentials and shares the same eigenfunctions, making it more amenable for spectral learning, see App.~\ref{app:background}.

\textbf{Transfer learning via biased dynamics.}~
Sampling from complex distributions is typically impacted by slow mixing due to metastability. To accelerate exploration, we can simulate dynamics under a biased potential $\bPT = \PT + \bias$, where $\bias: \mathbb{R}^d \to \mathbb{R}$ flattens energy barriers. Let $\bGE$ and $\bim$ denote the generator and invariant measure of the biased (source) dynamics. Our objective is to recover the spectral decomposition of the (unbiased) \textit{target} generator $\GE$ from trajectories generated under $\bGE$. At the generator level, the bias enters as a linear additive perturbation, $\bGE = \GE - \langle \nabla \bias, \nabla(\cdot)\rangle$, which is in principle straightforward to debias. This linear structure is lost at the level of transfer operators, since the associated semigroups do not factorize ($e^{\bGE} \neq e^{\GE} e^{-\langle \nabla \bias, \nabla(\cdot)\rangle}$), limiting the applicability of transfer-operator-based methods. A widely used debiasing strategy is importance weighting, which relies on the Radon--Nikodym derivative $\frac{d\im}{d\bim}(x) = \frac{e^{\beta \bias(x)}}{\int e^{\beta \bias(x)} d\bim(x)}$, see App.~\ref{app:biased_dynamics},
% \begin{equation}\label{eq:radon_nikodym}
% \frac{d\im}{d\bim}(x) = \frac{e^{\beta \bias(x)}}{\int e^{\beta \bias(x)} d\bim(x)},
% \end{equation}
and corrects for the distribution shift induced by the bias. 

\textbf{Sobolev spaces and energy norms.} {To learn the generator or its resolvent from data, we need a function space where they are \textit{well-behaved} and \textit{estimable from samples}. Since the generator acts through gradients, this leads to Sobolev (energy) norms combining values and derivatives, which both regularize the problem and admit gradient-based empirical estimation under importance weighting.}
Our approach relies on energy (equivalently graph) norms induced by the generator: $\|f\|_{\Wii}^2 = \langle{\resolventinv}f,f\rangle_{\Liishort} = \norm{f}_{\Liishort}^2+\norm{\nabla f}_{\Liishort}^2/(\shift\beta)$, which defines the Sobolev space
\begin{equation}\label{eq:sobolev_space}
\Wii =\bigl\{f\in\Lii \,\big|\, \norm{f}_{\Wiishort}^2=\norm{f}_{\Liishort}^2+{\norm{\nabla f}_{\Liishort}^2}/{(\shift\beta)}<\infty\bigr\}.
\end{equation}
More generally, for $p>0$, we define the fractional Sobolev norm $\|f\|_{\Wiip{p}}^2 := \langle{\resolventinv}^{p}f,f\rangle_{\Liishort}$ and space $\Wiip{p}=\{f\in\Lii\mid\|f\|_{\Wiip{p}}<\infty\}$. These norms form a Sobolev scale: they interpolate between the $\Lii$ norm (at $p=0$) and the energy norm (at $p=1$). Crucially, these norms can be expressed as expectations involving function gradients, enabling empirical estimation from biased trajectory data through importance weighting.

\section{LITL methodology}
\label{sec:method}

%\helene[]{I’m rewriting the beginning of the section here (up to the theorem 3.1). The structure has changed and follows the 2 steps of LITL (the previous version is shown below in black) to make its presentation more pedagogical. This is a response to the criticism from at least two reviewers regarding the “steep learning curve” of the presentation.}
%\color{blue}

\noindent
We now turn the operator perspective into a practical procedure for learning spectral structure of the target dynamics from biased data. 
%   In short, LITL learns a Sobolev-orthonormal basis of slow target modes from biased samples, and then estimates the target generator and projected force field in it.

\noindent
\textbf{LITL in a nutshell.}
%\massi{This section is nice though i am not sure it is good to refer to equations that have not yet appeared}
Given samples from the biased distribution $\bim$, LITL proceeds in two stages.
(1) \textit{Representation learning.} We learn neural features $\NNe =(\NNek_i)_{i=0}^{m}$ to approximate the leading invariant subspace of the resolvent $\resolventshort =(I-\shift^{-1}\GE)^{-1}$ with their span $\RKHS_\param$ by minimizing the loss \eqref{eq:population_loss}. 
(2) \textit{Projected spectral estimation and dynamics recovery.} We project the operator $\resolventinvshort$ onto $\RKHS_\param$ and compute its empirical spectral decomposition via \eqref{eq:empirical_gep}. Finally, using $\nabla\PT = \GE f$ for $f(x)=x$, we recover the projected dynamics by regressing coordinate functions in the learned spectral basis.

\noindent
\textbf{Step 1. Representation learning.}
We first construct a low-dimensional representation that captures the slow modes of the target dynamics. To this end, we consider the resolvent $\resolventshort$, whose leading eigenfunctions define the invariant subspace of interest, and correct for the distribution shift between the source $\bim$ and target $\im$ using importance weights $\bw=e^{\beta\bias}$ with mean $\overline{\bw}=\EE_{x'\sim\bim}\bw(x')$. We introduce neural networks $\NNek_i:\X\to\R$, $0\leq i\leq m$, defining the representation space $\RKHS_\param=\mathrm{span}(\NNek_i)_{i=0}^{m}\subseteq \Wii$, and aim to approximate a spectral decomposition of $\resolventshort$ in this space,
\[
\resolventshort\approx\NNE \EVop\NNE^*,\quad\text{with the orthonormality constraint}\quad \NNE^*\NNE \approx \MX{I}, 
\]
where $\NNE\colon \R^{m+1}\to\Wii$ is the injection operator mapping $v\in\R^{m+1}$ to function $ v^\top\NNe(\cdot)$,  
\[
\EVop=\mathrm{diag}\big(e^{-(d^\param_0)^2},\dots,e^{-(d^\param_m)^2}\big),\quad\text{and}\quad \NNE^*f=\Big(\langle\resolventshort^{-1}f,\NNek_j\rangle_{\Lii}\Big)_{0\leq j\leq m}.
\]
% for diagonal $\EVop=\mathrm{diag}\big(e^{-(d^\param_0)^2},\dots,e^{-(d^\param_m)^2}\big)$ and
% \begin{equation*}
% \NNE^*f=\Big(\langle\resolventshort^{-1}f,\NNek_j\rangle_{\Lii}\Big)_{0\leq j\leq m},
% \end{equation*}
% with the orthonormality constraint $\NNE^*\NNE = \MX{I}$.

Since $\resolventshort$ is a compact self-adjoint operator, this leads to the population loss
\begin{equation}
\exploss_{\regnn}(\param)
= \overline{\bw}\Big(\|\resolventshort{-}\NNE \EVop\NNE^*\|_{\mathrm{HS}(\Wii)}^2
-\|\resolventshort\|_{\mathrm{HS}(\Wii)}^2\Big)+\regnn\,\overline{\bw}\,\|\NNE^*\NNE-\MX{I}\|_{F}^2, \label{eq:population_loss_hs}
\end{equation}
where $\overline{\bw}$ accounts for the importance weighting induced by the bias potential. \massi{Should we add some more intuition/explanation for the factor $\bar{v}$ and how the loss is derived? Also double heck formula (4).}
All operator terms in this loss can be expressed as expectations under $\bim$. Indeed, for $1 \leq i,j\leq m$,
\[
\scalarp{\NNek_i,({-}\GE)\NNek_j}_{\Liishort}
=\EE_{x'\sim\bim} [\bw(x')\,\nabla \NNek_i(x')^\top \nabla \NNek_j(x')],
\]
implying that $\NNE^*\NNE$ corresponds to the covariance matrix in $\Wii$ space
\begin{equation}\label{eq:Wx}
\Wxt = \overline{\bw}\NNE^*\NNE=\EE_{x'\sim\bim}[ \wx(x')\wx(x')^\top], \;\text{with}\;
% \end{equation}
% where
% \begin{equation}\label{eq:w_features_main}
\wx(x')=\Big[\sqrt{\bw(x')}\NNe(x')\,\Big\vert\,\sqrt{\tfrac{\bw(x')} \shift\beta} J_\param(x')\Big],
\end{equation}
and {$J_\param(x')_=\big[\partial_1\NNe(x')\,\big\vert\cdots\big\vert\,\partial_d\NNe(x')\big]\in\R^{{(m+1)}\times d}$ is the Jacobian of the neural network embedding.}
%Similarly,
On the other hand, the embedding of the operator $\resolventshort$ in the $\Wii$ reduces to the $\Liishort$-covariance matrix
\begin{equation}\label{eq:Cx}
\Cxt=\overline{\bw}\,\NNE^*\resolventshort \NNE=\EE_{x'{\sim}\bim}[ \bw(x')\NNe(x')\NNe(x')^\top].
\end{equation}
This yields the equivalent form of the loss
\begin{equation}\label{eq:population_loss}
\exploss_\regnn(\param)
=\mathrm{tr}\big[(\EVop\Wxt)^2
-2\,\overline{\bw}\EVop\,\Cxt{+}\regnn (\Wxt {-}\overline{\bw}\,\MX{I})^2\big].
\end{equation}

%\textit{Representation learning loss.}
To estimate \eqref{eq:population_loss} from data, we construct an unbiased empirical loss using source samples. The easiest way is to take two independent samples to estimate the products of random variables $(\EVop\Wxt)^2$, $\overline{\bw}(\EVop\Cxt)$, $\Wxt^2$ and $\overline{\bw}\Wxt$. This amounts to splitting each batch in two and estimating the loss %in a balanced way 
as
\begin{align}
\emploss_\regnn(\param)=\tr\!\! \Big[& \tfrac{1}{2}(\EWx_\param^1\EVop \EWx^2_\param\EVop {+} \EWx_\param^2\EVop \EWx^1_\param\EVop) 
{-}\widehat{\bw}^1\ECx_\param^2\EVop{-}\widehat{\bw}^2\ECx_\param^1\EVop { +} \regnn(\EWx_\param^1-\widehat{\bw}^1\MX{I})(\EWx_\param^2{-}\widehat{\bw}^2\MX{I}) \Big],\label{eq:emploss_2} 
\end{align}
where~$\ECx_\param^k$~and~$\EWx_\param^k$~are the empirical covariances in~\eqref{eq:Wx} and~\eqref{eq:Cx}, respectively, for empirical distributions 
$\ebim_1=\ceil{ b/2}^{-1}\sum_{i=1}^{\ceil{ b/2}}\delta_{x_i'}$ and $\ebim_2=\floor{ b/2}^{-1}\sum_{\ceil{ b/2}+1}^b\delta_{x_{b+1}'}$, respectively, while $\widehat{\bw}^{k}=\EE_{x'\sim\ebim_k}\bw(x')$, $k{\in}[2]$. Notably, the computational complexity of the loss \eqref{eq:emploss_2} is of the order $\mathcal{O}(b\,m^2d)$, where $d$ is the state dimension and $b$ the sample batch size. Alternatively to \eqref{eq:emploss_2}, the products of expectations in the population loss \eqref{eq:population_loss} can be estimated from iid samples $(x'_i)_{i\in[b]}$ via U-statistics yielding computational complexity $\mathcal{O}(b^2\!md)$, see App. \ref{app:empirical_loss} for details.

The following result
provides theoretical guarantees for our approach. The proof is provided in~\ref{app:biased_learning}.

\begin{restatable}{theorem}{ThmMainDNN}\label{thm:main_dnn}
Given a compact operator $\resolventshort$, $\shift\,{>}\,0$, if for all $\param\,{\in}\,\Param$, $(\NNek_i)_{i=0}^{m}\,{\subseteq}\, \bWii$,   then 
%for all $\param\in\Param$
\begin{equation}\label{eq:dnn_bound}
\EE\big[\emploss_\regnn(\param)\big] = \overline{\bw}^2\,\exploss_\regnn(\param)\geq -{\textstyle{\sum_{i=0}^{m}}\tfrac{\shift^2\overline{\bw}^2}{(\shift-\geval_i)^2}},
\end{equation}
where, if $\eval_{m+1}{<}\eval_{m}$ equality holds iff $\mathrm{span}(\gefun_i)_{i=0}^{m}=\mathrm{span}(\NNek_i)_{i=0}^{m}$, or equivalently $\TP_m =\TP_\param$, with $\TP_m$ and $\TP_\param$ being the orthogonal projectors onto $\mathrm{span}(\gefun_i)_{i=0}^{m}$ and $\mathrm{span}(\NNek_i)_{i=0}^{m}$, respectively. Additionally, if $\regnn\,>\,0$ at optimum $\TP_m =\TP_\param = \NNE\NNE^*$ and $\evalp_i = d^\param_i$, $0\leq i \leq m$, 
up to the ordering of indices.
\end{restatable}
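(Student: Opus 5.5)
The argument has two parts. First an unbiasedness computation for the empirical loss, then a spectral (Eckart--Young / Ky Fan type) argument in the Hilbert space $\Wii$.

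\textbf{Unbiasedness.} The two half-batches $\ebim_1,\ebim_2$ are independent, so expectations of products factorize:
\[
\EE[\EWx^1_\param\EVop\EWx^2_\param\EVop]=\Wxt\EVop\Wxt\EVop,\qquad \EE[\widehat{\bw}^1\ECx^2_\param]=\overline{\bw}\,\Cxt,\qquad \EE[(\EWx^1_\param-\widehat{\bw}^1\MX{I})(\EWx^2_\param-\widehat{\bw}^2\MX{I})]=(\Wxt-\overline{\bw}\MX{I})^2 .
\]
These identities use that each empirical covariance is an unbiased estimator of \eqref{eq:Wx} and \eqref{eq:Cx}. They are legitimate because the features lie in $\bWii$, which makes the moments finite. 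Taking the trace gives $\EE[\emploss_\regnn(\param)]$ equal to \eqref{eq:population_loss}.

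Next I would check the algebra linking \eqref{eq:population_loss} and \eqref{eq:population_loss_hs}. Write $A=\NNE\EVop\NNE^*$ and compute in $\mathrm{HS}(\Wii)$:
\[
\|\resolventshort-A\|^2-\|\resolventshort\|^2=\tr(A^*A)-2\,\mathrm{Re}\,\tr(\resolventshort^*A).
\]
The first term is $\tr(\EVop\NNE^*\NNE\EVop\NNE^*\NNE)=\overline{\bw}^{-2}\tr[(\EVop\Wxt)^2]$. For the second term, note that $\resolventshort$ is self-adjoint on $\Wii$, and that $\NNE^*\resolventshort\NNE$ is the $\Liishort$ Gram matrix because $\langle \resolventshort^{-1}\resolventshort f,g\rangle_{\Liishort}=\langle f,g\rangle_{\Liishort}$. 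This gives $\overline{\bw}^{-1}\tr(\EVop\Cxt)$. Collecting the factors of $\overline{\bw}$ yields the stated relation $\EE[\emploss_\regnn(\param)]=\overline{\bw}^2\exploss_\regnn(\param)$, possibly after fixing normalization constants.

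\textbf{Lower bound.} Drop the nonnegative penalty term. Then
\[
\|\resolventshort-A\|^2_{\mathrm{HS}}-\|\resolventshort\|^2_{\mathrm{HS}}\ \ge\ \min_{\mathrm{rank}(A)\le m+1}\big(\|\resolventshort-A\|^2_{\mathrm{HS}}-\|\resolventshort\|^2_{\mathrm{HS}}\big)=-\sum_{i=0}^{m}\sigma_i^2 .
\]
Here $\sigma_i$ are the singular values of $\resolventshort$ on $\Wii$, and the equality is the Eckart--Young theorem. Since $\resolventshort$ is compact and self-adjoint on $\Wii$ with eigenfunctions $\gefun_i$, its eigenvalues are $\shift/(\shift-\geval_i)$, so $\sigma_i^2=\shift^2/(\shift-\geval_i)^2$. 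Multiplying by $\overline{\bw}^2$ gives the bound \eqref{eq:dnn_bound}.

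\textbf{Equality case.} Assume $\eval_{m+1}<\eval_m$. Then the best rank-$(m+1)$ approximation is unique, namely $P_m\resolventshort$ with $P_m$ the projector onto $\mathrm{span}(\gefun_i)_{i=0}^m$. Hence equality forces $\NNE\EVop\NNE^*=P_m\resolventshort$, and comparing ranges gives $P_\param=P_m$. Conversely, suppose the spans coincide. Choosing $\EVop$ optimally inside that span attains the minimum, because the minimum of $\|\resolventshort-A\|^2$ over operators $A$ with range in a fixed subspace is attained at $A=P\resolventshort P$. This equivalence is the step I expect to be the main obstacle. The loss is taken over the restricted family $\NNE\EVop\NNE^*$ with diagonal positive $\EVop$, not over all rank-$(m+1)$ operators, so one must check that this family can realize $P\resolventshort P$. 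I would handle it by reparametrizing $\NNE$ through a whitening and rotation of the Gram matrix.

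\textbf{Role of $\regnn>0$.} With $\regnn>0$, the optimum must also make the penalty vanish, so $\NNE^*\NNE=\MX{I}$. Then $\NNE\NNE^*$ is the orthogonal projector $P_\param$, and $\NNE\EVop\NNE^*=P_m\resolventshort$ becomes a genuine eigendecomposition. Its diagonal entries must therefore equal the eigenvalues of $\resolventshort$ restricted to the span. This identifies $e^{-(d_i^\param)^2}$ with $\shift/(\shift-\geval_i)$, which is the correspondence $\evalp_i\leftrightarrow d_i^\param$ up to ordering, and also gives $\NNek_i=\gefun_i$ up to rotations within degenerate eigenspaces.
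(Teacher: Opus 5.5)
\paragraph{A gap: you treat $\resolventshort$ as Hilbert--Schmidt.}
Your skeleton matches the paper's proof: unbiasedness from independence of the two half-batches, the trace form of the HS expansion, Eckart--Young for the bound, uniqueness under the gap, and the penalty forcing $\NNE^*\NNE=\MX{I}$. However, it omits the step the paper's argument is organized around. The statement only assumes $\resolventshort$ compact, whereas you apply Eckart--Young in $\mathrm{HS}(\Wii)$ to $\resolventshort$ itself. Since $\|\resolventshort\|_{\mathrm{HS}(\Wii)}^2=\sum_{i}\reval_i^2$, this norm is typically infinite. This already happens for the Ornstein--Uhlenbeck generator with $d\ge 2$, where the eigenvalue $-k$ has multiplicity of order $k^{d-1}$. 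In that case $\|\resolventshort-A\|_{\mathrm{HS}}^2-\|\resolventshort\|_{\mathrm{HS}}^2$ is $\infty-\infty$. The identity $\min_{\mathrm{rank}\,A\le m+1}(\cdots)=-\sum_{i\le m}\sigma_i^2$ is then not an instance of Eckart--Young; it is exactly the claim to be proved.

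The paper handles this as follows. It replaces $\resolventshort$ by the finite-rank truncation $P_k\resolventshort$, where $P_k$ projects onto the first $k$ eigenfunctions, and applies Eckart--Young there to get the bound for every $k>m$. It then lets $k\to\infty$ using $|\exploss(\param)-\exploss^k(\param)|\le 2\overline{\bw}\,\tr(\NNE\EVop\NNE^*)\,\|(I-P_k)\resolventshort\|\to0$. This needs only compactness and the fact that $A=\NNE\EVop\NNE^*$ has finite rank.

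\paragraph{How to close it.}
Read the loss as $\tr(A^2)-2\tr(\resolventshort A)$. This is finite because $A$ has finite rank, and it is what the covariance form \eqref{eq:population_loss} computes. Then either import the paper's truncation or argue directly. Let $Q$ be the $\Wii$-orthogonal projector onto $\mathrm{range}(A)$; then $QA=AQ=A$. Hence $\tr(A^2)-2\tr(\resolventshort A)=\|A-Q\resolventshort\|_{\mathrm{HS}}^2-\tr(Q\resolventshort^2Q)$. Since $\mathrm{rank}\,Q\le m+1$, Ky Fan gives that this is at least $-\sum_{i=0}^{m}\reval_i^2$.

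This identity also yields the equality case without appealing to uniqueness in Eckart--Young. Equality forces $A=Q\resolventshort$, and when $\reval_{m+1}<\reval_m$ it forces $Q=\TP_m$, hence $\TP_\param=\TP_m$.

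\paragraph{On the converse you flagged.}
For a fixed $\param$ the converse is false as literally stated. Equal spans do not constrain $\EVop$, nor $\NNE^*\NNE$ when $\regnn>0$. The paper's proof likewise only establishes necessity. Your reading, that the bound is attained by a suitable choice within that span, is the meaningful one.
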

This theorem provides a justification for minimizing the loss in \eqref{eq:population_loss} with stochastic optimization algorithms to obtain an approximation of the leading invariant subspace of the resolvent $\resolventshort$, with a natural metric to quantify the distance to the optimum being
\begin{equation}\label{eq:rep_learnig_error}
\error_m(\param)= \| \TP_m - \TP_\param\|_{\Wiishort{\to}\Wiishort}
%\|\TP_m\resolventshort {-} \NNE\EVop\NNE^*\|_{\Wiishort{\to}\Wiishort} 
+ \|\NNE^*\NNE-I\|.    
\end{equation}
The main advantage of this representation learning method is that it exploits knowledge of the underlying process. Knowing that the dynamics are Langevin, and given only the change between the source and target potentials (bias) $\bias$ and the diffusion coefficient $\beta$, we can learn representations that approximate the slow manifold of the target process from source samples. \\
We next show how, once the slow manifold is learned, to compute estimators of the generator’s spectral decomposition and the target potential’s gradient.

\noindent
\textbf{Step 2. Projected spectral estimation and dynamics recovery.}
Once the representation $\RKHS_\param$ approximates the leading invariant subspace, we estimate the spectral structure of the target dynamics within this space. Using the fact that the generator and resolvent share eigenfunctions, with eigenvalues related by $\geval_i=\shift(1-1/\reval_i)$, we consider the decomposition
\begin{equation}\label{eq:resolvent_dec_true}
\resolventshort=\textstyle{\sum_{j\in{\mathbb N}_0}}\,\reval_j\,\gefun_j\otimes\gefun_j.
\end{equation}
%We then project $\resolventinvshort$ onto $\RKHS_\param$ and estimate its empirical spectral decomposition from samples $(x_j')_{j\in[n]}$ by solving the generalized eigenvalue problem \eqref{eq:empirical_gep}. This yields estimators of the leading eigenvalues and eigenfunctions of $\GE$ restricted to the learned subspace.
Since after learning  $\param$ the bias of representation space $\error_m(\param)$ becomes sufficiently small, \massi{elaborate a bit on why we expect this to be the case?}we have that $\TP_m\resolventinvshort= \TP_m\resolventinvshort\TP_m{\approx}\TP_\param\resolventinvshort\TP_\param$, the task is to empirically estimate the operator $\TP_\param\resolventinvshort\TP_\param$ from some samples $(x_j')_{j=1}^n$. We can either use the same samples for training representations, or  a set of new independent samples, in which case the statistical analysis simplifies.  

Recalling that the orthogonal projector in $\Wii$ onto $\RKHS_\param$ is given by $\TPt=\NNE(\NNE^*\NNE)^\dagger\NNE^*$, we have that $\TP_\param\resolventinvshort\TP_\param=\overline{\bw}\,\NNE\Wxt^\dagger\Cxt\Wxt^\dagger\NNE^*$. But then, since due to the ortho-normality regularization of the loss ($\regnn>0$) we can assume that $\Wxt$ are invertible,  using the property of low-rank eigenvalue problems, see \cite{horn2012matrix},%\karim{missing reference here}\helene{is it Higham-Lin? Maybe classical reference is enough \cite{horn2012matrix}?}
we have that the non-zero eigenvalues $\reval^\param_i$ and their eigenfunctions $\gefun^\param_i$, $0\leq i \leq m$, of $\TP_\param\resolventshort\TP_\param$ are characterized by $\Cxt u_i = \reval^\param_i \Wxt u_i$, where $\gefun^\param_i =\NNe(\cdot)^\top u_i$, $0 \leq i \leq m$. Therefore, the slow eigenpairs of the target generator $\GE$ are estimated by solving generalized eigenvalue problem
\begin{equation}\label{eq:empirical_gep}
\ECxt \widehat{u}_i = \ereval_i \EWxt \widehat{u}_i,\quad\text{ s.t. }\quad \widehat{u}_i^\top\EWxt \widehat{u}_i =\ebw,\;0 \leq i \leq m,
\end{equation}
where $\ebw = \sum_{i=1}^n\bw(x_i')$ and defining  $\egeval_i=\shift(1-1/\ereval_i)$ and $\egefun_i =\NNe(\cdot)^\top \widehat{u}_i$ for $0 \leq i \leq m$.

As a result, we obtain the empirical operators that approximate $\TP_m\resolventshort$ and $\TP_m\GE$, respectively, as 
\begin{equation}\label{eq:empirical_operators}
\eresolventshort=\textstyle{\sum_{i=0}^m}\ereval_i\,\erefun_i\otimes\erefun_i,\quad\text{ and }\quad\EGE=\textstyle{\sum_{i=0}^m}\egeval_i\,\erefun_i\otimes\erefun_i.
\end{equation}

Finally, to recover the projected dynamics, we  empirically estimate the gradient using that $\partial_k \PT=-\GE \idfun_k$ holds for $k$-th coordinate projection of the state $\idfun_k\colon x\mapsto x_k$, $1 \leq k \leq d$. Namely, by definition of the $\Wiishort$-norm, we have that 
% \begin{equation*}
% \nabla \PT=-\textstyle{\sum_{i\ge0}}\,\geval_i\gefun_i
% \bigl[\EE_{x\sim\im}[\gefun_i(x)x_k{+}\tfrac{1}{\shift\beta}\partial_k\gefun_i(x)] \bigr]_{k\in[d]},
% \end{equation*}
\begin{equation*}
\partial_k \PT=-\textstyle{\sum_{i=0}^\infty}\,\geval_i\gefun_i
\EE_{x\sim\im}[\gefun_i(x)x_k +\tfrac{1}{\shift\beta}\partial_k\gefun_i(x)],\;1 \leq k \leq d,
\end{equation*}
which, by replacing $\GE$ by $\EGE$ and empirically estimating the  expectation, after some algebra, yields that the weight matrix $\w{G}$ of the  target gradient's projection onto the slow manifold can be computed as
\begin{equation}\label{eq:emp_gradient}
\widehat{\MX{G}}=-\textstyle{\sum_{j=1}^n} \tfrac{\bw(x'_j)}{\ebw}\big[x_j \NNe(x'_j)^\top +\tfrac{1}{\shift\beta}J_\param(x'_j)^\top\big]\,\,
%}{\sum_{j\in[n]}\bw(x'_j)}
\w{\Omega}\w{\Lambda}\w{\Omega}^\top\!,
\end{equation}
where, $%\w{\MX{U}}
\w{\Omega}= \left[\widehat{u}_0\mid\cdots  \mid \widehat{u}_{\trunc}\right]$, $\w{\Lambda} = \mathrm{diag}\left(\egeval_0,\cdots  ,\egeval_{\trunc}\right)$, and, as above,  $J_\param$ denotes the Jacobian of $\NNe$.

This provides an estimate of the gradient of the target potential on the slow manifold $\mathrm{span}(\gefun_i)_{i=0}^{m}$.

\textbf{Inference.} Before presenting high-probability error bounds for the resulting spectral decomposition and gradinet estimation in Sec.  \ref{sec:bounds}, we note that the target Langevin dynamics can be estimated by exponentiating $\EGE$ to forecast the evolution of conditional probabilities
$$
\PP[X_t {\in} B \vert X_0=x]=\textstyle{\sum_{i=0}^m}\, e^{\egeval_i t}\,\widehat{u}_i^\top\NNe(x)\;\sum_{x_j'\in B}\tfrac{\bw(x_j')}{\ebw}\,\widehat{u}_i^\top\NNe(x_j')
%}{\sum_{j\in [n]}\bw(x_j')},
$$
for any $t \ge 0$ and measurable set $B \subseteq \mathcal{X}$, under the same temperature $\beta$ as the source data. This approximation of the transfer operator semi-group $e^{t\EGE}$ is detailed in App. \ref{app:background}. Moreover, once the target gradient is estimated as $\nabla \PT \approx \widehat{\nabla \PT} = \widehat{\MX{G}}\NNe(\cdot)$ with $\widehat{\MX{G}} \in \mathbb{R}^{d \times (\truncbis)}$, it enables the simulation of new Langevin dynamics  $d\w{X}_t = -\widehat{\MX{G}}\NNe(\w{X}_t)dt + \sqrt{2\beta^{-1}}dW_t$. 
\color{black}

\textbf{Spherical LITL.} \massi{this subsection is maybe a bit fast: "mode collapse" is introduced here for the first time and a bit unclear. The last bits "This geometric formulation..." }Unlike classical Langevin dynamics in Euclidean space, which requires a confining potential to admit an invariant distribution, Langevin dynamics on a sphere admits a well-defined invariant measure even in the absence of a potential.  This provides a principled, geometry-respecting way to steer the system while avoiding the instability and mode collapse that can arise in unconstrained Euclidean settings. In particular, if the source dynamics is pure diffusion on the sphere ($\bPT\,=\,\,0$) the uniform measure is its invariant distribution. This is crucial in our setting since \textit{static samples from the reference distribution are readily available by direct normalization of isotropic random vectors}, avoiding long trajectories simulation or stochastic integrators tuning. 
In this context, transfer learning from the black-box potential is performed by short-horizon gradient updates on the sphere, 
$\nabla_{\mathbb{S}^{d-1}} \PT(x)=(\MX{I}{-}xx^\top)\nabla\PT(x)$,
thereby driving the system toward the wells of potential energy. 

This approach is motivated by the prevalence of normalized embeddings and cosine-based geometry in modern AI systems, models latent dynamics as spherical diffusion with a locally constructed, human-induced potential, enabling targeted, on-the-fly modulation of the invariant distribution without retraining the base model. This geometric formulation aligns naturally with contrastive representation learning, angular-margin classifiers, and embedding-based retrieval, where semantic information is encoded directionally rather than in magnitude. Thus, Spherical LITL provides a principled and sample-efficient mechanism for human control of learned representations, enhancing adaptability and robustness, while preserving the global structure of pretrained representations.

\section{Statistical bounds}\label{sec:bounds}

We recall that spectral estimation enables analysis and understanding of the dynamics, while gradient estimation enables efficient simulation of the learned dynamics. We establish statistical guarantees for both spectral estimation and gradient estimation of target dynamics based on biased samples. To this end, we make 
the following assumptions:
\vspace{-.015truecm}
\begin{enumerate}[label={\rm \textbf{(BE)}},leftmargin=7ex,nolistsep]
\item\label{eq:BE} \emph{Bounded embeddings.} There exist $\cst \geq 1$ such that $\esssup_{x'\sim\bim} |\partial_k \NNek_{j}(x')|\leq \cst<\infty$, $\forall\, k=0,\ldots,d$ and $j= 0, \ldots \trunc$, with the convention $\partial_0 \NNe = \NNe$;
\end{enumerate}
\vspace{-.195truecm}
\begin{enumerate}[label={\rm \textbf{(CP)}},leftmargin=7ex,nolistsep]
\item\label{eq:CP} \emph{Confining potential.} The potential $\PT$ satisfies $\PT(x)\ge 2\alpha\|x\|^{2}\mathbf{1}_{\{\|x\|\ge R\}}-c$, $\forall x$, for some $\alpha>0$, $R>0$, $c\in\R$, and the bias obeys $\|\bias\|_\infty<\infty$;
\end{enumerate}
\vspace{-.195truecm}
\begin{enumerate}[label={\rm \textbf{(RA)}},leftmargin=7ex,nolistsep]
\item\label{eq:RA} \emph{Gradient regularity.} $\exists\,p\geq 1$ such that $\forall k\in[d]$, $\norm{\partial_k \PT}_{\Wiip{p}}<\infty$, that is $\norm{\nabla \PT}_{\Wiip{p}}<\infty$.
\end{enumerate}

While these typical assumptions are discussed in detail in App.~\ref{app:bounds_ass}, here we  briefly note that they are all satisfied in the prototypical example of spherical Langevin dynamics. %Furthermore, crucially, \ref{eq:CP} implies that  $x\sim\im$ and $x'\sim\bim$ are $K$-sub-Gaussian with $K^2\lesssim(\beta \alpha)^{-1}$, allowing one to   derive concentration bounds for the gradient estimation. 

Let $\varepsilon_{n}(\delta){=} \cst e^{2\beta\norm{\bias}_\infty}\!\sqrt{\frac{m\,d}{\shift\,\beta\,n}\ln( \frac{m}{\delta})}$, $\sigma_{\im}^2 {=} \EE_{x \sim \im}\norm{x}^2$ and $\gap_{i}{=}\min_{\ell\neq i}\abs{\reval_\ell{-}\reval_i}{=}\tfrac{\shift}{(\shift {-} \geval_i)} \displaystyle{\min_{\ell \neq i}} \tfrac{|\geval_i {-} \geval_\ell|}{|\shift {-} \geval_\ell|}$ be the resolvent's spectral gap. The following key statistical result is proved in App.~\ref{app:proof_main_stat_1}.

\begin{theorem}
\label{thm:main_stat_1}
Let Assumptions \ref{eq:BE} and \ref{eq:CP} be satisfied. Assume that $\beta(\shift-\lambda_m) \geq 1$ and $n \gtrsim d\vee e^{3\beta\norm{\bias}_\infty/2}\cst^2 md/(\shift\beta)$.  Then w.p.a.l. $1-\delta$, for any $1 \leq i \leq m$:
\begin{align*}\label{eq:eval_rel_error_main_eig}
&\frac{|\geval_i{-}\egeval_i|}{|\geval_i|} \lesssim \Big(\sqrt{\tfrac{\shift}{|\geval_i|}} + \sqrt{\tfrac{|\geval_i|}{\shift}}\Big)^2\Big(\ErrSpc{+}\varepsilon_{n}(\delta)\Big)\;\text{ and }\;
%\\ 
%&
\norm{\erefun_i {-} \refun_{i}}_{\Wii} \lesssim \frac{\ErrSpc}{\gap_i}+ \frac{\varepsilon_{n}(\delta)}{[\gap_i {-} 3\ErrSpc]_+} \notag
\end{align*}

Moreover, if \ref{eq:RA} holds, then $\forall s{\in} [0,1]$  w.p.a.l. $1-\delta$:
\begin{equation*}\label{eq:eval_rel_error_main_force}
\norm{ \nabla \PT - \widehat{\nabla\PT}}_{\Wiip{s}} \lesssim  \reval_{\truncbis}^{\frac{p-s}{2}}
\norm{\nabla\PT}_{\Wiip{p}} +\shift\,\sigma_{\im}\,{ \reval_{\trunc}^{-\frac{3+s}{2}}} {\big(\ErrSpc+\varepsilon_{n}(\delta)\big)}.
\end{equation*}
\end{theorem}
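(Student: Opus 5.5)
The plan is to reduce everything to a perturbation argument in the energy space $\Wii$, where $\resolventshort$ is compact and self-adjoint and the learned operator $\eresolventshort$ is a finite-rank approximation of $\TP_m\resolventshort$. We split the error into two parts. The \emph{representation bias} compares $\TP_m\resolventshort$ with $\TP_\param\resolventshort\TP_\param$, and is controlled by $\ErrSpc$. The \emph{estimation error} compares $\TP_\param\resolventshort\TP_\param=\overline{\bw}\,\NNE\Wxt^{-1}\Cxt\Wxt^{-1}\NNE^*$ with its empirical counterpart built from $\ECxt,\EWxt,\ebw$.

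For the bias, we write $\TP_m\resolventshort-\TP_\param\resolventshort\TP_\param=(\TP_m-\TP_\param)\resolventshort\TP_m+\TP_\param\resolventshort(\TP_m-\TP_\param)$. Because $\|\resolventshort\|_{\Wii\to\Wii}\le 1$, this has norm at most $2\|\TP_m-\TP_\param\|$. The term $\|\NNE^*\NNE-I\|$ in $\ErrSpc$ accounts for the mismatch between the coordinate representation $\NNE^*\NNE$ and the true projector.

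For the estimation error, we apply matrix Bernstein separately to $\ECxt-\Cxt$, to $\EWxt-\Wxt$, and to $\ebw-\overline{\bw}$. The summands $\wx(x')\wx(x')^\top$ are bounded using \ref{eq:BE} and $\bw\le e^{\beta\norm{\bias}_\infty}$. Normalizing by $\overline{\bw}\ge e^{-\beta\norm{\bias}_\infty}$ then contributes a factor $e^{2\beta\norm{\bias}_\infty}$. The gradient block carries the factor $1/(\shift\beta)$ and has $d$ columns, which yields exactly the rate $\varepsilon_n(\delta)$, with a logarithmic factor in $m$ from the dimension of the matrices. The sample-size condition $n\gtrsim\cdots$ makes $\|\Wxt^{-1/2}(\EWxt-\Wxt)\Wxt^{-1/2}\|\le 1/2$. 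This lets us pass from the generalized eigenproblem \eqref{eq:empirical_gep} to a symmetric one, using $\EWxt^{-1/2}\ECxt\EWxt^{-1/2}$, with the perturbation still of order $\varepsilon_n(\delta)$. In total, $\|\eresolventshort-\TP_m\resolventshort\|_{\Wii\to\Wii}\lesssim \ErrSpc+\varepsilon_n(\delta)$.

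For the spectral part we argue as follows.
\begin{itemize}
\item \emph{Eigenvalues.} Weyl's inequality gives $|\reval_i-\ereval_i|\lesssim\ErrSpc+\varepsilon_n(\delta)$. Since $\geval=\shift(1-1/\reval)$, we get $|\geval_i-\egeval_i|=\shift|\reval_i-\ereval_i|/(\reval_i\ereval_i)$. Dividing by $|\geval_i|$ and using $\reval_i=\shift/(\shift-\geval_i)$ gives the factor $(\shift-\geval_i)^2/(\shift|\geval_i|)=(\sqrt{\shift/|\geval_i|}+\sqrt{|\geval_i|/\shift})^2$. We need $\ereval_i\gtrsim\reval_i$, which holds once the error is below $\reval_m/2$; the hypothesis $\beta(\shift-\lambda_m)\ge1$ and the sample-size condition are used to guarantee this.
\item \emph{Eigenfunctions.} We apply Davis--Kahan in two steps. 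From $\resolventshort$ to $\TP_\param\resolventshort\TP_\param$ the cost is $\ErrSpc/\gap_i$. From the population projected operator to the empirical one, the gap of the intermediate operator has shrunk by at most $3\ErrSpc$, which gives the term $\varepsilon_n(\delta)/[\gap_i-3\ErrSpc]_+$. The normalization $\widehat u_i^\top\EWxt\widehat u_i=\ebw$ fixes the $\Wii$-norm and hence the sign and scale of $\erefun_i$.
\end{itemize}

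For the force bound, we write $\partial_k\PT=-\GE\idfun_k$ and split
\[
\nabla\PT-\widehat{\nabla\PT}=(I-\TP_m)\nabla\PT+\big(\TP_m\GE-\EGE\big)\idfun.
\]
The first term is handled by spectral calculus: $\|(I-\TP_m)g\|_{\Wiip{s}}\le\reval_{\truncbis}^{(p-s)/2}\|g\|_{\Wiip{p}}$, using \ref{eq:RA}. For the second term, we express $\GE=\shift(I-\resolventshort^{-1})$ on the range and compare $\TP_m\resolventshort^{-1}$ with $\eresolventshort^{\dagger}$ via the resolvent identity $A^{-1}-B^{-1}=A^{-1}(B-A)B^{-1}$. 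On the leading subspace both inverses have norm $\lesssim\reval_m^{-1}$. Passing from $\Wii$ to $\Wiip{s}$ on this finite subspace costs a further factor $\reval_m^{-(1+s)/2}$, since the range consists of functions $\gefun_i$ with $\|\gefun_i\|_{\Wiip{s}}=\reval_i^{(1-s)/2}$ up to normalization, and these norms must be tracked carefully. Finally, $\|\idfun\|_{\Wii}\lesssim\sigma_\im$ when $\shift\beta\gtrsim d/\sigma_\im^2$, or via sub-Gaussianity under \ref{eq:CP}. The empirical expectation in \eqref{eq:emp_gradient} involves the unbounded function $x$, so its concentration requires a sub-Gaussian Bernstein bound on $\bw(x')x'$, again using \ref{eq:CP}.

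I expect this last step to be the main obstacle. It requires getting the exact power $\reval_m^{-(3+s)/2}$ while simultaneously handling the unbounded coordinate functions, which fall outside \ref{eq:BE}.
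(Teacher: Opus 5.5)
Your proposal is essentially the paper's proof. It uses the same split into the representation bias $\ErrSpc$ and an estimation error controlled by whitened matrix Bernstein bounds on $\ECxt,\EWxt$ and a scalar Bernstein bound on $\ebw$. It uses the same two-stage perturbation of the eigenpairs; your Weyl/Davis--Kahan argument on $\EWxt^{-1/2}\ECxt\EWxt^{-1/2}$ is equivalent to the Stewart--Sun pencil bounds the paper uses. For the drift it uses the same (RA) truncation term plus a pseudo-inverse comparison, and your ``main obstacle'' is resolved exactly as you sketch. The factor $\resolventshort^{(1-s)/2}$ acts on the exact $[\resolventshort^m]^\dagger$ to give $\reval_m^{-(1+s)/2}$, the empirical inverse contributes $\reval_m^{-1}$ (Lemma~\ref{lem:Qinv}), and the range mismatch is handled by $A^\dagger(B-A)B^\dagger=A^\dagger P_{\range(B)}-P_{\range(A)}B^\dagger$. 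This yields $\reval_m^{-(3+s)/2}$, not an extra factor on top of two $\reval_m^{-1}$'s.

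Two corrections. First, $\ereval_i\gtrsim\reval_i$ does not follow from $\beta(\shift-\lambda_m)\ge1$ or the sample-size condition. It needs a separate smallness condition on $\ErrSpc$ plus the estimation error relative to $\reval_m$; the paper states this condition in Lemma~\ref{lem:Qinv}, although it is not in the theorem's hypotheses. The paper uses $\beta(\shift-\lambda_m)\ge1$ only to absorb the $\widehat{\xi}$-concentration term into $\sigma_{\im}\varepsilon_n(\delta)$. Second, your displayed force decomposition omits that $\widehat{\xi}$-versus-$\xi$ term, though your final paragraph correctly adds it back.
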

\noindent
%\textbf{Discussion.}~
The above eigenvalue bound
demonstrates that the shift parameter $\shift$ acts as a \textbf{spectral focusing mechanism}. By tuning $\shift$ close to a target eigenvalue $|\geval_i|$, the relative error is minimized since the factor $(\sqrt{\shift/|\geval_i|} {+} \sqrt{|\geval_i|/\shift})^2$ achieves its minimum value of $4$ when $\shift \approx |\geval_i|$. This allows targeted estimation of specific spectral components by appropriately selecting $\shift$. Furthermore, the eigenfunction bound depend on whether the  representation learning error is dominated by the \textbf{spectral gap}, an expected necessary condition.  The gradient bound reveals two competing sources of error: the bias term $\reval_{\trunc}^{(p-s)/2}\norm{\nabla\PT}_{\Wiip{p}}$ from spectral truncation, and the statistical term \massi{there is also $\mathcal{E}_m(\theta)$} $\sigma_{\im}\overline{\varepsilon}_{n}(\delta)$ from finite-sample estimation. Our approach hinges on accurately learning the dominant eigenspace of the generator through representation learning: when the data exhibits low-dimensional structure, reflected in a spectral gap $\reval_{\trunc} {\gg} \reval_{\trunc+1}$, the leading $\trunc$ eigenfunctions capture the essential dynamics. This spectral truncation to the dominant $\trunc$-dimensional eigenspace is mathematically \textbf{analogous to coarse-graining} procedures in statistical mechanics \cite{peter2009multiscale}, where high-frequency modes are integrated out to yield effective dynamics on slow collective coordinates. \massi{I found this unclear} In this context, the only error to the approximated projected gradient is the second term. However, even if the task is to reconstruct the complete gradient, whenever $\reval_{\truncbis}^{2} \ll \sigma_{\im}{\varepsilon}_{n}(\delta)$, the bias becomes negligible so that the rate is dominated by ${\varepsilon}_{n}(\delta)$, which is essentially parametric. Finally, we also note that if the true gradient belongs to $\Wiip{p}$, then it can be accurately estimated in the Sobolev norm $\Wiip{s}$ for any $s<p$, which is a stronger result than a guarantee solely in the $\Lii$ norm.

\section{Experiments}
\label{sec:experiments}
In this section, we present four experiments, for classical and spherical Langevin dynamics, each highlighting the diverse capabilities of the LITL framework.

\textbf{Double well potential:} 
In this first experiment, we test our method on a 1D-toy system for which ground truth references can be numerically computed. \textit{The aim is to compare our method to the baselines on spectral information recovery, and demonstrate performance on recovery of distribution flow for which no baseline exists.} We work with a double well potential $U(x)$ to which we add a perturbation potential $V(x)$ which will induce more barrier crossing events from one minimum to the other, see Figure \ref{fig:double_well} in App. \ref{app:1d2w}. We simulate the dynamics under $U'(x)=(U{+}V)(x)$ and keep the values of the position and of $V(x)$ with time. %Our goal is to recover the eigenpairs of the infinitesimal generator of the dynamics.  
\begin{table}[h!]
\vspace{-0.0cm}
\centering\small
\setlength{\tabcolsep}{4pt} % default ~6pt
\begin{tabular}{|c|c|c|c|c|}
\hline
Method & $\lambda_1 {=} 0.22$ & $\lambda_2 {=} 15.37$ & $\lambda_3{=}16.06$ & $\lambda_4{=}47.31$ \\ 
\hline
\cite{devergne2024biased} & 0.621 $\pm$ 0.01  & 46.6 $\pm$ 0.3  & 49.1 $\pm$ 0.5 & 146 $\pm$ 2 \\ \hline
\cite{ZHANG2022diffusionNN} & 1 $\pm$ 2  & 2 $\pm$ 2 & 2 $\pm$ 2 & 8 $\pm$ 7\\ \hline
LITL & \textbf{0.23 $\pm$ 0.02} & \textbf{15.315 $\pm$ 0.01} &  \textbf{16.09 $\pm$ 0.04} &  \textbf{46.9 $\pm$ 0.1} \\ 
\hline
\end{tabular}
\caption{Eigenvalue estimation in the double well experiment.}
\label{tab:evals}
\end{table}
We compare the results obtained with LITL and with, up to our knowledge, the only two existing methods for the same task \cite{devergne2024biased} and \cite{ZHANG2022diffusionNN}. While Table \ref{tab:evals} shows that LITL is the only method that accurately estimated higher eigenvalues, in  Figure \ref{fig:double_well} of App.~\ref{app:1d2w} we show that the same holds for eigenfunction estimation. We note that results of \cite{ZHANG2022diffusionNN} suffer from
uncontrolled collapse of learned subspace to slower modes, likely caused by larger number of
sensitive hyper-parameters.
% In Figure \ref{fig:double_well}\vladi{maybe we keep only table here and have figures in the appendix?} we show the accuracy of the estimated eigenfunctions by computing uncertainties over 4 different trials with different random initialization. The results of \cite{ZHANG2022diffusionNN} are omitted due to frequent uncontrolled collapse of learned subspace to slower modes, likely caused by larger number of sensitive hyper-parameters. The other two methods perform very well on the first eigenfunction, which characterizes the transition between the two wells, the only slow mode of the system. But then, as we increase the eigenfunction index, the variance between trials for the second method increases, while LITL's one remains. 
% This is even clearer when we look at the eigenvalues estimation in Table \ref{tab:evals}, where \cite{devergne2024biased} and \cite{ZHANG2022diffusionNN} do not manage to estimate accurately higher eigenvalues. 

\noindent
Finally, we can also learn the forces acting on the system using $\widehat{\nabla\PT} {=} \widehat{\MX{G}}\NNe(\cdot)$ with $\widehat{\MX{G}}$ given in \eqref{eq:emp_gradient} and perform simulations of the true potential $\PT$ for a lower cost, which is shown in Figures \ref{fig:double_well_drift} and \ref{fig:1D2W_flow} of App.~\ref{app:exp}, however LITL remains a unique method that solves this task.

% \begin{figure}[t!]
%     \centering
%     \vspace{-0.3cm}
%     %\includegraphics[width=0.65\linewidth]{figs/figure_double_well.png}
%     \includegraphics[width=0.3\linewidth]{figs/double_well_2_legend.png}\includegraphics[width=0.65\linewidth]{figs/double_well_2.png}
%     \caption{Top left panel: the different potentials involved in this experiment in units of $1/\beta$. Other panels: comparison of the eigenfunctions obtained with LITL and with the methods presented in \cite{devergne2024biased} and \cite{ZHANG2022diffusionNN}. Error bars have been computed using 4 different models with different random initial parameters}
%     \label{fig:double_well}
%     \vspace{-0.7cm}
% \end{figure}

\textbf{Alanine-Dipeptide molecule:} \textit{The second experiment is a controlled molecular transfer benchmark,  demonstrating how LITL transfers data samples from biased (non-physical) simulations of Alanine-Dipeptide to recover physical transition rates.} The system has 21 dimensional state made of dihedral angles, and due to the curse of dimensionality spectral decompositions cannot be numerically computed. The key message is that this simple molecule displays two main slow modes, of which the  slower one is identified by the dihedral angle $\phi$ and cannot be observed without the addition of a bias potential during a simulation, which then {requires the transfer learning} method. Hence, we trained the LITL model with and without reweighting and observed that for the slowest mode, the eigenvalue changes from -0.56 in the non-reweighted model (learning without transfer yields the physically impossible  timescale $\approx$2) to -0.029 in the reweighted  model (giving the physically meaningful timescale $\approx$350).

\textbf{Chignolin miniprotein:} \textit{In this experiment we demonstrate how LITL can equip generative molecular models with kinetic capaabilities}. Modern GenAI, such as BioEmu~\cite{bioemu2025}, can now produce equilibrium protein conformations at unprecedented scale, but evaluating whether these samples reproduce the \emph{correct kinetics} remains a major challenge. Standard spectral methods such as TICA \cite{schutte_tica} require trajectory data and therefore cannot be applied to static samples generated by foundation models. In contrast, LITL learns slow dynamical modes directly from equilibrium samples, providing spectral collective variables that identify metastable states and transition pathways without costly simulations.

\begin{figure}
    \centering
    \includegraphics[width=0.75\linewidth]{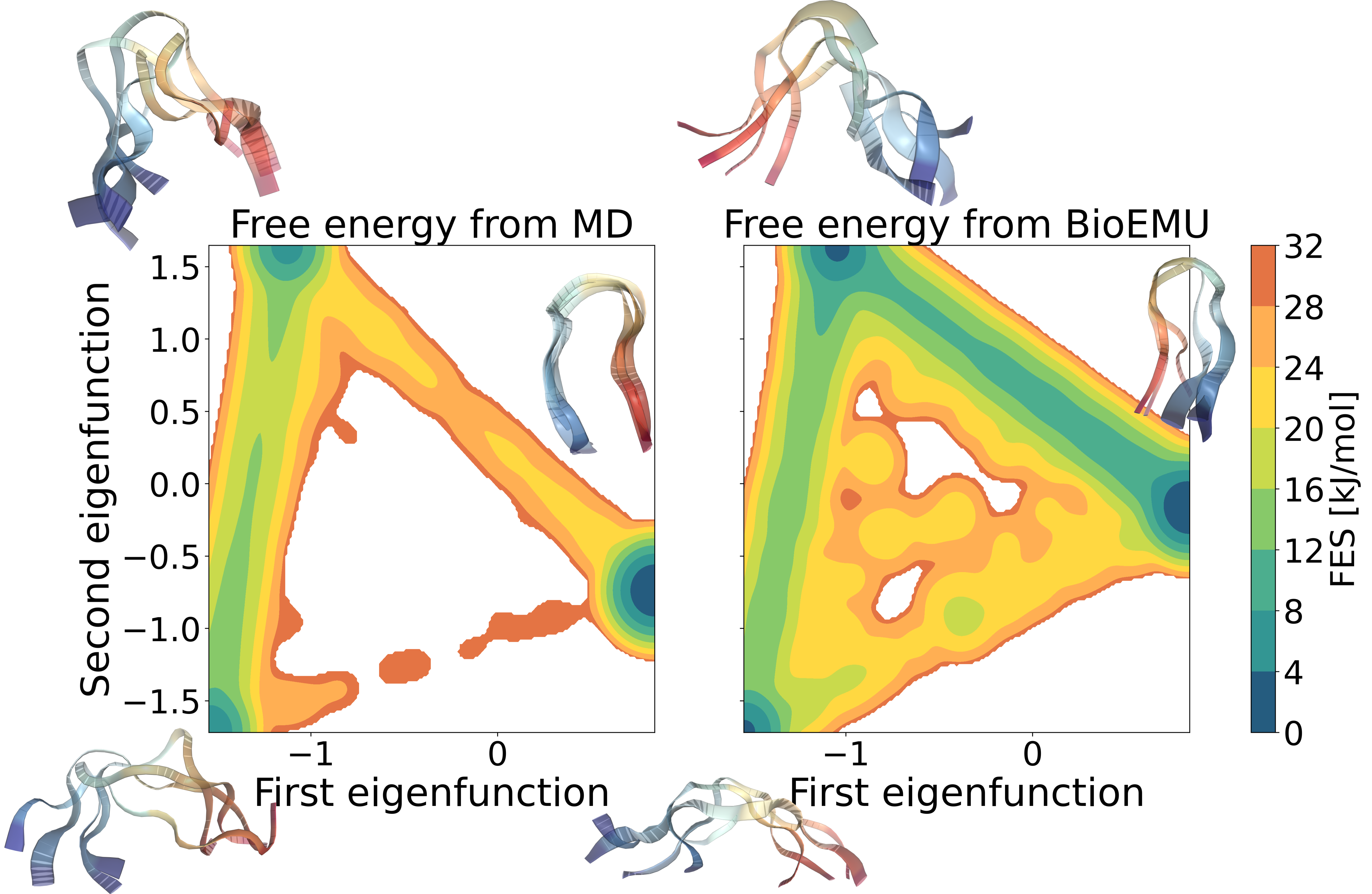}
    \vspace{-0.2cm}
    \caption{Free-energy landscapes learned from MD and BioEmu samples using LITL. State occupation probabilities computed from the recovered generators reveal kinetic mismatch between BioEmu and physical MD dynamics.}
    \label{fig:chignolin}
    \vspace{-0.5cm}
\end{figure}

We train LITL using graph neural networks \cite{schnet} on (i) long MD trajectories of the Chignolin miniprotein \cite{chignolin} and (ii) static samples generated by BioEmu \cite{bioemu2025}. Projecting onto the two slowest learned eigenfunctions reveals the underlying free-energy landscape and metastable organization (Fig.~\ref{fig:chignolin}).

Qualitatively, both models recover the same dominant folding mechanisms: the first mode corresponds to folding, while the second captures transitions between unfolded and intermediate hairpin states. Quantitatively, however, the recovered kinetics differ substantially. In particular, BioEmu underestimates the free-energy barrier between unfolded and folded states, leading to significantly different metastable occupancies (folded/intermediate/unfolded: $60/24/16\%$ for BioEmu versus $77/8/15\%$ for MD).

These results show that LITL not only extracts physically meaningful slow modes from static generative samples, but also enables direct kinetic comparison between generative and physical molecular dynamics models.

\textbf{Cobalt nanoparticle dynamics.} \textit{Here we demonstrate how spherical LITL can recover structured physics of magnetization using only measurements.} We learn the spectral properties of the Langevin generator on a sphere $\mathbb{S}^2$, modeling the magnetization dynamics of a cobalt nanoparticle under the potential $U(\mathbf{m}) \,{=}\, {-}K_u (\mathbf{m} \,{\cdot}\, \mathbf{n}_u)^2 \,{+}\, K_c (m_x^2 m_y^2 \,{+}\, m_y^2 m_z^2 \,{+}\, m_z^2 m_x^2)$, $\mathbf{n}_u$ being easy axis of uniaxal anisotropy. \citep{garanin1997fokker,garcia1998langevin}. For uniaxal and cubic anisotropy constants $K_u {=} 1.0$ and $K_c {=} 0.3$, respectively, four metastable states are separated by energy barriers. Understanding the thermodynamics and kinetics of cobalt nanoparticles, in particular the metastable states and transition rates revealed by the Langevin generator, helps set switching fields for memory devices, control blocking temperatures and data retention times, and optimize heat dissipation in hyperthermia applications. LITL's ability to infer these crucial objects using only potential evaluations is critical to experimental scenarios where anisotropy landscapes are reconstructed from measurements such as micro-SQUID experiments \cite{jamet2001magnetic}. 

\begin{figure}[h!]
    \centering
\includegraphics[width=0.85\linewidth]{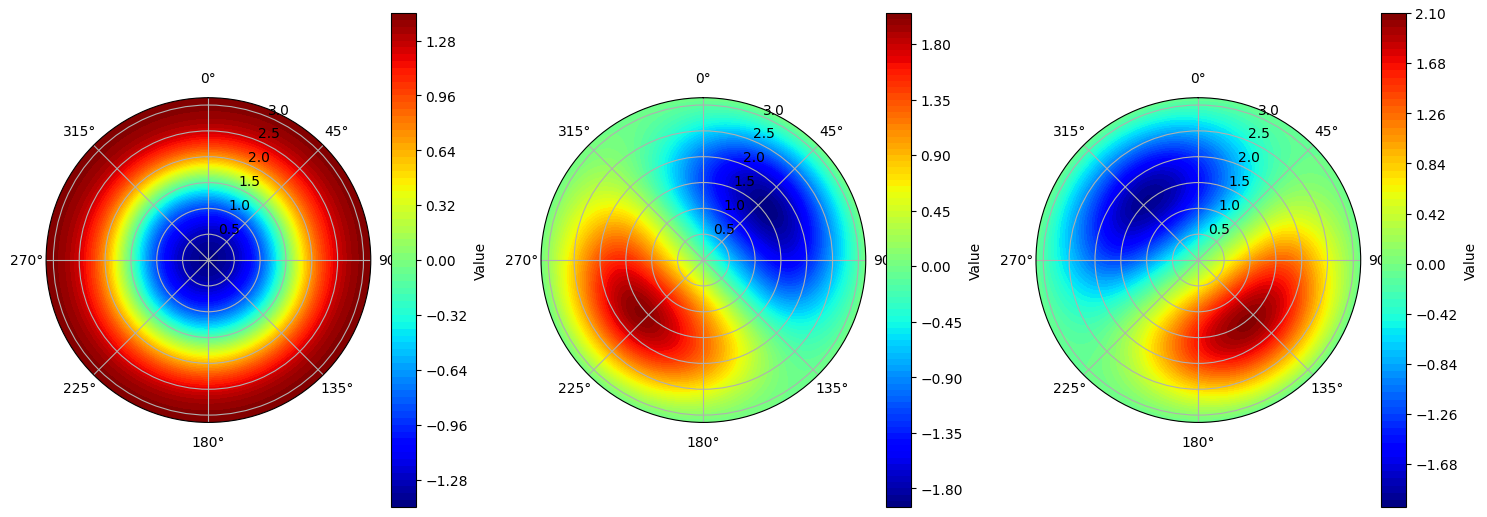}
    \caption{\textit{Leading eigenfunctions of the Langevin generator for a cobalt nanoparticle potential} in polar coordinate representation $\psi(\theta,\varphi)$ associated to eigenvalues $|\egeval_1|{\approx}1.36$ and $|\egeval_2|{\approx}|\egeval_3|{\approx}2.5$. The first non-trivial eigenfunction $\psi_1$ reveals the slowest transition pathway between metastable states, while couple $(\psi_2, \psi_3)$ captures faster symmetric modes orthogonal to the first one. 
    }
    \label{fig:eigenfunctions}
\end{figure}

Figure \ref{fig:eigenfunctions} shows results from the LITL estimator that achieves  $\approx 5\cdot10^{-3}$ errors w.r.t. numerically computed ground truth for both time-scales and their eigenfunctions (in $L^2$ norm) using $n=10^4$ uniform samples. Remarkably, {our method was able to recover over ten eigenfunctions with the correct symmetry groups of the model}, see App. \ref{app:cobalt}, {purely from black-box potential evaluations} and without prior knowledge of the functional form, directly impacting applications from data storage (bit stability) to hyperthermia (heat dissipation pathways). 

\textbf{Fairness-aware transfer via latent diffusion.} \textit{Finally, we present a proof-of-concept experiment in which we assess the LITL's performance in the challenging setting of transfer learning related to algorithmic fairness.} To this end, we chose the Adult Income dataset ({\it https://archive.ics.uci.edu/dataset/2/adult}), a standard benchmark for fairness in binary classification, and consider the difficult unawareness case \cite{chen2019fairness,donini2018empirical,zemel2013learning}. The task is to predict whether an individual’s annual income exceeds \$50K, using demographic and employment-related features. Following common practice, we consider \emph{gender} as the sensitive attribute and measure fairness by Equal Opportunity (EO) given by the \emph{True Positive Rate (TPR) difference} between males and females. The dataset is split into the \textit{training} (60\%), \textit{audit} (25\%) and \textit{test} (15\%), with the sensitive variable removed from the training set. The audit data is used to return black-box feedback on model's compliance with desired fairness criterion, typically at most $5\%$ TPR disparity between the groups.

The classifier $f_{\omega}(x)\,{=}\,\softmax(a_{\omega}^\top e_{\omega}(x))$ 
learned via empirical risk minimization on the training set, typically achieves strong predictive performance but does not pass the audit check. Here, $\omega$ denotes the classifier's parameters, associated with the latent (inner) representation $e_{\omega}(x) \in \mathbb{S}^{d-1}$, normalized to lie on the sphere, and the last layer weights $a_\omega\in\R^{d}$. To remedy this, we consider Spherical LITL method and transfer $n=5k$ random representations uniformly sampled on a sphere $\mathbb{S}^{31}$ to the representations that tend to minimize the TPR gap. That is, while $\bPT(e)=0$, the feedback from the audit set $\bias(e){=}{-}\PT(e)$ is computed for each $e$ as local TPR gap on the representation manifold of the audit set, see App.~\ref{app:fairness} for details. 

While the source generator $\bGE$ of pure diffusion, the Laplace-Beltrami operator, captures the intrinsic geometry and dominant modes of variation of the learned representation subject to pure randomness, the target Langevin generator $\GE$'s eigenfunctions define a hierarchy of fairness related slow and fast directions on the latent manifold. In particular, the leading nontrivial eigenfunctions correspond to large-scale, semantically meaningful variations in the representation, while higher modes capture localized or noisy fluctuations. This multiscale spectral structure provides a natural coordinate system for analyzing and adjusting the model’s fairness behavior in a controlled and interpretable way. In particular, we use learned gradient field $\w{\nabla}_{\mathbb{S}^{d{-}1}}\PT(e){=}(\MX{I}{-}ee^\top)\w{G}\NNe(e)$ to perform short-horizon, geometry-respecting updates on the latent sphere with a small $\gamma>0$ steps, steering the pretrained model $f_{\omega}$ toward configurations with reduced TPR disparity while preserving the dominant representational structure learned during initial training. 

Figure~\ref{fig:pareto_adult} reports the resulting Pareto curves for different step sizes $\gamma$, illustrating the trade-off between accuracy and fairness induced by the LITL intervention. Each point on the curve corresponds to a different number of LITL steps. As the intervention increases, the TPR difference significantly decreases to acceptable 1\%, while accuracy degrades gracefully. 

\begin{figure*}[t!]
    \centering
\includegraphics[width=0.95\linewidth]{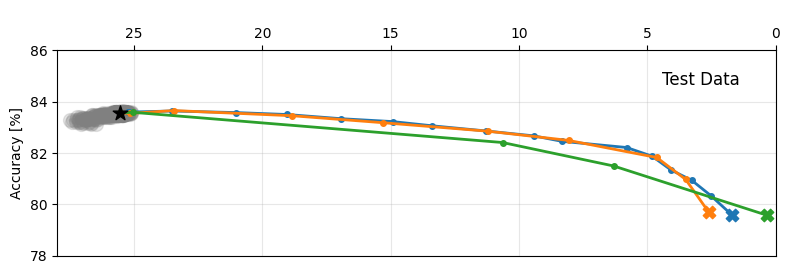}
    \caption{Accuracy--fairness Pareto curve for the Adult Income experiment. Each point corresponds to a different number of LITL steps. LITL hyperparameters are calibrated on audit set, and then used to achieve substantial reduction in TPR disparity with a limited decrease in accuracy on a test set.}
\label{fig:pareto_adult}
\end{figure*}

Moreover, as shown in App.~\ref{app:fairness}, LITL analysis reveals that around $50\%$ of the TPR gap is due to the model, which can be eliminated practically without the loss of accuracy by using 16 slowest eigenfunctions of $\GE$. On the other hand, the intrinsic bias in the data can be corrected with additional 16 eigenfucntions which efficiently balance competing objectives TPR gap and accuracy.

\smallskip

\begin{table}
\centering\small
\setlength{\tabcolsep}{4pt} 
\begin{tabular}{|l|c|c|}
\hline
Method & Rel. Acc. [\%] $\uparrow$ & Rel. TPR gap  [\%] $\uparrow$  \\
\hline \hline
\textbf{LITL (full)} & \textbf{97.11 $\pm$ 1.00} & \textbf{94.16 $\pm$ 4.04} \\ \hline
LITL (local proxy) & 99.91 $\pm$ 0.09 & 8.02 $\pm$ 2.41 \\
\hline
LITL (no feedback) & 99.27 $\pm$ 0.37 & 59.88 $\pm$ 5.08 \\ \hline
LITL (bad truncation) & 95.68 $\pm$ 0.89 & 70.68 $\pm$ 5.72 \\ \hline \hline
Retraining baseline & 101.32 $\pm$ 0.21 & 94.60 $\pm$ 4.30 \\ \hline
Fine-tuning baseline & 100.25 $\pm$ 0.17 & 90.11 $\pm$ 5.78 \\ \hline
Latent correction baseline & 98.34 $\pm$ 0.61 & 32.33 $\pm$ 2.51 \\ \hline
\end{tabular}
\caption{Ablation study and baseline comparisons in the fairness experiment over 10 trials with recorded relative accuracy and TPR gap change w.r.t. the original classifier.}
\label{tab:fairness}
\end{table}

To further support above conclusions, we perform ablation study by changing key components of LITL, and compare to adequate ML baselines.  As one can observe in Table \ref{tab:fairness}, (i) replacing the generator-based update collapses performance ($\leq10\%$),  showing that gains are not due to latent post-processing (here via local finite differences); (ii) removing audit signal degrades performance, yet still yields $\approx50 – 60\%$ improvement, significantly outperforming zero-order baselines; (iii) incorrect truncation degrades performance, confirming the importance of recovering the correct slow eigenspace. 
%In addition, we note that item (ii) is theoretically grounded: LITL learns Laplace–Beltrami eigenfunctions, and a constant spherical potential projected onto low-frequency modes induces a smooth, geometry-aware “spectral smoothing” flow, diffusing a substantial portion of the model bias.  To conclude, 
Together, these results show that \textit{importance weighting, spectral structure, and generator-based updates} are all necessary for the observed $\approx94\%$ fairness improvement. Concerning baselines, since no standard fairness methods operate under identical constraints, we follow \cite{donini2018empirical} and include retraining and fine-tuning in an unawareness setting (model excludes sensitive features but uses them in loss regularization). We also include a zero-order latent correction (nearest-neighbor weighted ridge regression of spherical gradients) under the same constraints. All methods use matched tuning budgets. As we can observe, LITL achieves performance comparable to retraining/fine-tuning but \textit{without access to sensitive features, gradients, or retraining}, and clearly outperforms black-box latent correction.

\section{Conclusions}
%Impact and limitations}
\label{sec:conclusions}

We introduced Langevin-Informed Transfer Learning (LITL), a framework for recovering and transferring target stochastic dynamics from biased samples using only black-box feedback. By learning the target generator’s slow spectral structure and projected drift with finite-sample guarantees, LITL enables principled coarse-graining, kinetic reconstruction without target trajectories, and post-hoc steering of ML models without retraining. Conceptually, LITL reframes transfer learning as the recovery and manipulation of stochastic dynamics rather than parameter adaptation.  In other words:
\begin{center}
\vspace{-.1truecm}
{\em Instead of transferring weights or features, LITL transfers generator structure, \\enabling geometry-aware steering through learned slow manifolds.}
\vspace{-.1truecm}
\end{center}
Our results suggest broader relevance beyond molecular systems. Since Langevin-type dynamics also underlie stochastic gradient flows and diffusion-based generative models, spectral operator learning may provide a new route for analyzing and steering latent stochastic dynamics in modern AI systems through gradient-free feedback.

{\bf Limitations.}
%At the same time, 
LITL currently focuses on reversible Langevin dynamics and assumes informative black-box evaluations together with identifiable low-dimensional slow structure. Extending the framework to non-reversible or time-dependent stochastic dynamics remains an important direction for future work.

\bibliography{neurips2026}
\bibliographystyle{plain}

%%%%%%%%%%%%%%%%%%%%%%%%%%%%%%%%%%%%%%%%%%%%%%%%%%%%%%%%%%%%%%%%%%%%%%%%%%%%%%%
%%%%%%%%%%%%%%%%%%%%%%%%%%%%%%%%%%%%%%%%%%%%%%%%%%%%%%%%%%%%%%%%%%%%%%%%%%%%%%%
% APPENDIX
%%%%%%%%%%%%%%%%%%%%%%%%%%%%%%%%%%%%%%%%%%%%%%%%%%%%%%%%%%%%%%%%%%%%%%%%%%%%%%%
%%%%%%%%%%%%%%%%%%%%%%%%%%%%%%%%%%%%%%%%%%%%%%%%%%%%%%%%%%%%%%%%%%%%%%%%%%%%%%%
%\newpage
\appendix

\vspace{.5truecm}
\begin{center}
{\Large \bf Appendix}    
\end{center}

%\onecolumn
The appendix is organized as follows.

\begin{itemize}
\item Sec.  \ref{app:related_work} provides a detailed discussion of related work and its comparison to LITL. 
%contains a discussion on positioning of LITL in the landscape of related works.
\item Sec.  \ref{app:background} presents additional details on the framework, with particular emphasis on Langevin dynamics and the definition of Dirichlet forms and energy-based norms.
\item Sec.  \ref{app:biased_dynamics} details the transfer learning setup and the importance weighting mechanism for debiasing.
\item Sec.  \ref{app:loss} presents details on the derivation of our training loss and spectral learning.
\item Sec.  \ref{app:bounds} establishes concentration bounds and proves our main statistical guarantees for eigenvalue, eigenfunction, and gradient estimation.
\item Sec.  \ref{app:exp} presents additional numerical experiments, expanding on those in the main body of the paper. 

\end{itemize}

\section{Related work and positioning} %of LITL}
\label{app:related_work}

\begin{table}[h!]
\centering
\small
\begin{tabular}{lcccc}
\hline
\textbf{Learning} 
& \textbf{Transfer} 
& \textbf{Needs Target} 
& \textbf{Needs} 
& \textbf{Representative} \\
\textbf{Method} 
& \textbf{Mechanism} 
& \textbf{Samples or} 
& \textbf{Retraining} 
& \textbf{References} \\
&&\textbf{Gradients}&&\\
\hline

Fine-tuning /  
& Parameters /  
& Yes 
& Yes/No 
& \cite{yosinski2014transferable}\\
Feature Transfer &Features&&& \cite{pan2010survey} \\

\hline
Domain Adaptation /  
& Reweighted Loss 
& Yes/No 
& Yes 
& \cite{ben2010theory}\\
Covariate Shift&&&& \cite{shimodaira2000improving} \\

\hline
Self-supervised /  
& Representations /  
& Yes 
& Yes 
& \cite{jing2020self}\\
Meta-learning&Updates&&&\cite{finn2017maml} \\

\hline
Adversarial Training 
& Robust Parameters 
& Yes 
& Yes 
& \cite{salman2020do} \\

\hline
SGLD /  
& Posterior  
& Yes 
& Yes 
& \cite{welling2011bayesian}\\
Bayesian Sampling&Samples&&&\cite{ma2015complete} \\

\hline 
Latent   
& Learned   
& Yes 
& No 
& \cite{sohl2015deep}\\
Diffusion&Latent&&& \cite{ho2020denoising}\\ 
Models&Dynamics&&& \cite{rombach2022high} \\

\hline
\textbf{LITL (Ours)} 
& \textbf{Generator Eigenspace} 
& \textbf{No} 
& \textbf{No} 
& -- \\
\hline\\
\end{tabular}
\caption{Comparison of LITL with related transfer learning, robustness, sampling, and diffusion-based approaches. Latent diffusion models transfer learned generative dynamics through retraining in latent spaces, whereas LITL transfers spectral structure of the underlying Langevin generator without requiring target gradients, retraining, or access to target labels.}
\label{tab:litl_comparison}
\vspace{-0.5cm}
\end{table}

\paragraph{Overview.}
Table~\ref{tab:litl_comparison} summarizes the relationship between \emph{Langevin Informed Transfer Learning (LITL)} and major paradigms in transfer learning, robustness, Bayesian sampling, and generative modeling. Existing approaches predominantly transfer \emph{parameters, representations, losses, or optimization heuristics} and almost universally require access to target gradients, labels, or explicit retraining. In contrast, LITL transfers \emph{spectral structure of the Langevin generator}, enabling post-training adaptation without target gradients, labels, or retraining.

\vspace{-0.5em}
\paragraph{Transfer learning, domain adaptation, and meta-learning.}
Classical transfer learning focuses on reusing parameters or representations learned on a source task to accelerate optimization or improve generalization on a target task, typically through fine-tuning or feature reuse \citep[see][and references therein]{pan2010survey, yosinski2014transferable}. Domain adaptation and covariate shift methods reweight losses or samples to correct distribution mismatch, assuming access to target data and gradients \cite{shimodaira2000improving, ben2010theory}. Meta-learning provides a different setting in which shared representations are learned from multiple related tasks \cite{baxter2000model,finn2017maml,jing2020self,maurer2016benefit}. While effective, these methods fundamentally rely on retraining and explicit optimization against a target objective. LITL departs from this paradigm by transferring \emph{dynamical structure rather than parameters or losses}, enabling adaptation to unseen objectives without optimization in the target domain.

\vspace{-0.5em}
\paragraph{Bayesian sampling and Langevin dynamics.}
Stochastic Gradient Langevin Dynamics (SGLD) and related MCMC methods provide approximate Bayesian inference by injecting noise into gradient-based optimization \cite{welling2011bayesian, ma2015complete}. These approaches assume direct access to the target energy or its stochastic gradients and are not designed for controlled transfer across objectives.  LITL is intrinsically linked to these methods,  but addresses a distinct inverse problem: learning and transferring the generator itself, rather than sampling from a known energy. This enables principled post-hoc adaptation without modifying step sizes, noise schedules, or loss functions.

\vspace{-0.5em}
\paragraph{Diffusion and score-based generative models.}
Score-based diffusion models and denoising diffusion probabilistic models learn gradients of log-densities (the potential) in order to generate samples by reversing a predefined noising process \cite{ho2020denoising}. Latent diffusion models further improve scalability by learning generative dynamics in compressed latent spaces \cite{rombach2022high}.  LITL is related to score-based diffusion models in that learn gradients of log-densities which can be used to generate samples. However, the similarity is superficial. Diffusion models learn pointwise, time-dependent score fields using samples from the target distribution or its noisy marginals \cite{sohl2015deep}. In contrast, LITL never observes samples from the target Gibbs/Boltzman distribution and does not learn a score field directly. Instead, it learns spectral representations of the Langevin generator from biased trajectories and forward-only side information, and reconstructs the target drift only after projection onto the dominant slow eigenspaces. This yields a \textit{spectrally filtered score} that governs long-time dynamics rather than pointwise density matching. As a result, LITL enables the creation of unseen kinetic models with finite-sample guarantees, capabilities that are fundamentally outside the scope of existing diffusion and score-based approaches. We summarize this comparison in Table \ref{tab:litl_vs_diffusion}.

\begin{table}[h!]
\centering
\small
\setlength{\tabcolsep}{5pt}
\begin{tabular}{l p{4.75cm} p{4.95cm}}
%\begin{tabular}{lcc}
\hline
\textbf{Aspect} 
& \textbf{Diffusion / Score-Based Models} 
& \textbf{LITL (Ours)} \\
\hline
Task
& Generate samples from a given data distribution
& Create dynamics from log density discrepancies \\
Observed data
& Target samples or noisy marginals
& Samples from a source Gibbs distribution \\
Learned object
& Score field $\nabla \log p_t(x)$
& Generator eigenspaces \& projected score field\\
Score estimation
& Full, pointwise
& Spectrally projected (slow manifold) \\
Time dependence
& Time-dependent
& Time-homogeneous \\
Underlying dynamics
& Artificial noising process
& Physical / optimization-induced Langevin dynamics \\
Target gradients
& Required
& Not required \\
Target samples
& Required
& Not required \\
Transfer across objectives
& Not supported
& Supported via transfer of the potential $U' \rightarrow U$ \\
Theoretical guarantees
& Limited
& Finite-sample spectral and gradient bounds \\
\hline
\end{tabular}
\vspace{.1truecm}
\caption{Compact comparison between diffusion/score-based generative models and Langevin-Informed Transfer Learning (LITL). Diffusion models learn scores to reproduce observed data distributions, whereas LITL transfers kinetic structure to construct unseen dynamics without access to target samples or gradients.}
\label{tab:litl_vs_diffusion}
\end{table}

\vspace{-0.5em}
\paragraph{Spectral methods, Koopman operators, and generator learning.}
A large body of work studies data-driven approximation of Koopman operators and infinitesimal generators using kernel methods and deep learning \cite{LGBPP12, KLUS2020132416, Bruder2021, Bevanda2021, Mardt2018, colbrook2023rigorous}. These methods successfully identify slow modes and metastable states but typically assume access to trajectories generated by the dynamics of interest, which is typical bottleneck of  these approaches. So, to accelerate rare event sampling, practitioners use biased simulations with methods such as metadynamics and its variants\cite{laio_metad,invernizzi2020rethinking}, umbrella sampling \cite{torrie_US}, and adaptive biasing force \cite{abf}. Furthermore, machine learning has been employed to learn collective variables from data \cite{kTICA,Bonati2021,chen2019}. While effective for sampling, these methods do not provide a principled framework or statistical guarantees for transferring learned spectral information from biased to unbiased dynamics. Crucially, transfer operator methods are fundamentally incompatible with biased sampling: while the infinitesimal generator depends linearly on the bias potential, the transfer operator does not, making principled debiasing impossible.  Importance reweighting techniques that correct distribution shift at the generator level fail to properly debias transfer operator estimates. Recent work therefore focuses on learning the generator directly, often via its resolvent, which is compact and shares eigenfunctions with the generator \cite{kostic2024learning, kostic2024laplace, ZHANG2022diffusionNN}. While \cite{kostic2024learning} provide statistical guarantees in RKHS settings without transfer, and \cite{devergne2024biased} address biased data using neural features but lacks strong finite sample guarantees. Notably, none of these works explore transfer learning across objectives without retraining. LITL extends this line of work by improving the performance and theoretical analysis, as well as providing a principled mechanism for transferring spectral structure across biased and unbiased Langevin systems, with explicit control of representation and optimization error.

\vspace{-0.5em}
\paragraph{Summary perspective.}
LITL operates at the level of distributional geometry and kinetic structure. By learning dominant eigenspaces of the Langevin generator from source samples or models and forward-only side information, it enables principled post-training adaptation to unseen objectives. This places LITL orthogonally to existing transfer learning, robustness, Bayesian sampling, and diffusion-based approaches, motivating one to further explore it in combination with those.

\section{Background: Langevin dynamics and Dirichlet forms}\label{app:background}

\paragraph{Langevin dynamics (LD) and evolution operators}
As a reminder, we consider the Langevin equation
\begin{equation}\label{Eq:langevin_2}
dX_t=-\nabla U(X_{t})dt+\sqrt{2\beta^{-1}}dW_t \quad \text{and} \quad X_0=x,
\end{equation}
which describes the dynamics of the random  vector $X_t$ in the state space $\X\subseteq\R^d$, governed by a potential $U:\mathbb{R}^d \to \mathbb{R}$, where $(W_t)_{t\ge 0}$ is a $\R^p$-dimensional standard Brownian motion. Under the usual conditions \citep[see e.g.][]{oksendal2013} that $a$ and $b$ are globally Lipschitz and sub-linear, the SDE \eqref{Eq:langevin_2} admits a unique strong solution $X=(X_t)_{t\ge 0}$ that is a Markov process to which we can associate the semi-group of Markov \textit{transfer operators} $(\TO_t)_{t\geq 0}$ defined, for every $t \geq 0$, as
\begin{equation}\label{eq:TO2}
[\TO_t f](x):=\EE[f(X_t)|X_0=x],\;\;x\in\X, \, f\colon \X\to \R.
\end{equation}

For stable processes, the distribution of $X_t$ converges to an \textit{invariant measure}
%, i.e. there exist a probability measure 
$\im$ on $\X$, such that $X_0\sim\im$ implies that $X_t\sim\im$ for all $t\geq0$. In such cases, one can define the semi-group on $\Lii$, and characterize the process  by the \textit{infinitesimal generator} of the semi-group $(\TO_t)_{t\ge 0}$,
\begin{equation}\label{eq:gen_lim}
\GE f:= \lim_{t\to0^+}\frac{\TO_t f-f}{t} 
\end{equation}
defined
on the Sobolev space $\mathcal W_{\pi}^{1,2}$ of functions in $\Lii$ whose gradients are in $\Lii$, too, i.e. $\GE\colon\Lii\to\Lii$ and $\dom(\GE)=\Wii$. The transfer operator and the generator are linked to each other by the formula $\TO_t=\exp(t\GE)$.

% We can evolve observables of the dynamical system in time through the TO. Indeed for any $f\in \Lii$,
% \begin{equation}
% \TO_t f
% = \EE[f(X_t)\mid X_0=x]
% = \textstyle{\sum_{i\in\mathbb N_0}} e^{t\geval_i}\,\gefun_i(x)\scalarp{\gefun_i,f}.
% \end{equation}
% The above represenation is called the Dynamic Mode decompoisition.

After defining the infinitesimal generator for Markov processes by \eqref{eq:gen_lim}, we provide its explicit form for solution processes of \eqref{Eq:langevin_2}. Given a smooth function $f\in\mathcal C^2(\X,\R)$, Itô's formula \citep[see for instance][p. 495]{Bakry2014} provides for $t\in\R_+$,
\begin{align}
f(X_t)-f(X_0)
\nonumber
&=\int_0^t\sum_{i=1}^d\partial_i f(X_{s})dX_s^i+\tfrac{1}{2}\int_0^t\sum_{i,j=1}^d\partial_{ij}^2 f(X_{s})d\langle X^{i},X^j\rangle_s\\
\nonumber
&=\int_0^t \nabla f(X_{s})^{\top}dX_s+\tfrac{1}{2}\int_0^t \mathrm{Tr}\big[X_s^\top (\nabla^2 f)(X_{s})X_s\big]ds,
\end{align}
where $\nabla^2 f$ is the Hessian matrix of $f$. Recalling \eqref{Eq:langevin_2}, we get
\begin{align}\label{Eq: Ito formula}
f(X_t)
\nonumber
&=f(X_0)+\int_0^t\bigg[-\nabla U(X_s)^\top \nabla f(X_s)+\beta^{-1}\mathrm{Tr}\big[ (\nabla^2 f(X_s))\big]\bigg]ds +\sqrt{2\beta^{-1}}\int_0^t\mathrm{div} f(X_s)dW_s.
\end{align}
Provided $f$ is smooth enough, the expectation of the last stochastic integral vanishes so that we get
\begin{equation*}
\EE[f(X_t)|X_0=x]=f(x)+\int_0^t\EE\big[-\nabla U(X_s)^\top\nabla f(X_s)+\Delta f(X_s)\big|X_0=x\big]ds
\end{equation*}
Recalling that $\GE f=\underset{t\rightarrow 0^+}\lim(\TO_tf-f)/t$, we get for every $x\in\X$,
\begin{align}\label{Eq: computation L}
\GE f(x)
&= \lim_{t\to 0^+} \frac{\EE[f(X_t)\mid X_0=x]-f(x)}{t} \nonumber \\
&= \lim_{t\to 0^+} \frac{1}{t}
\int_0^t 
\EE\Big[
-\nabla U(X_s)^\top \nabla f(X_s)
+ \beta^{-1}\Delta f(X_s)
\,\Big|\, X_0=x
\Big] ds \nonumber \\
&= -\nabla U(x)^\top \nabla f(x)
+ \beta^{-1}\Delta f(x).
\end{align}

which provides the closed formula for the IG associated with the solution process of \eqref{Eq:langevin_2}. The operator $\GE$ is closed on its domain $\mathrm{dom}(\GE)$ which is equal to the Sobolev space
defined for $\shift>0$ by
\begin{equation}\label{Eq:Sobolev_space}
\Wii=\Big\{f\in\Lii\;\vert\; \norm{f}_{\Wiishort}^2=\langle{\resolventinv}f,f\rangle_{\Liii}=\norm{f}_{\Liishort}^2+\frac{1}{\shift\beta}\, \norm{\nabla f}_{\Liishort}^2<\infty\Big\}.
\end{equation}
An integration by parts argument yields
\begin{equation*}
\int (\GE f)\, g \, d\pi
= - \beta^{-1} \int \langle \nabla f , \nabla g \rangle \, d\pi
= \int f\, (\GE g)\, d\pi ,
\end{equation*}
showing that $\GE$ is self-adjoint on $\Liishort$.

\paragraph{Spectral decomposition and resolvent.}
When the spectrum of $\GE$ is discrete, the generator admits the spectral decomposition
\begin{equation}\label{eq:evd_app}
\GE=\sum_{i\in\mathbb N_0}\lambda_i\,\gefun_i\otimes\gefun_i,
\end{equation}
which yields, for any $f:\mathcal X\to\mathbb R$,
\begin{equation}\label{eq:transfer_spectral}
\TO_t f = \EE[f(X_t)\mid X_0=x] = \textstyle{\sum_{i=0}^\infty} e^{t\geval_i}\,\gefun_i(x)\scalarp{\gefun_i,f}.
\end{equation}
The characteristic timescales of the system are determined by the inverse generator eigenvalues $1/|\lambda_i|$. 

The resolvent operator $(I-\shift^{-1}\GE)^{-1}$ for $\shift>0$ is a crucial object in our analysis. For confining potentials, the resolvent is a compact operator on $\Lii$, unlike the unbounded generator $\GE$. This compactness property makes the resolvent more tractable for statistical learning while preserving the spectral information of the generator. Indeed, the resolvent shares the same eigenfunctions as $\GE$ with transformed eigenvalues: if $(\lambda_i, \gefun_i)$ are eigenpairs of $\GE$, then $((1-\lambda_i/\shift)^{-1}, \gefun_i)$ are eigenpairs of the resolvent.

%If $\EGE$ is available, we can define estimators $\widehat{\TO}_t = e^{t\EGE}$ and for any observable $f\in \Lii$: \begin{equation}\label{eq:transfer_spectral_emp} \widehat{\TO}_t f = \textstyle{\sum_{i\in\N_0}} e^{t\egeval_i}\,\egefun_i(x)\scalarp{\egefun_i,f}. \end{equation}

\paragraph{Empirical approximation of the transfer operator.}
\massi{I rephrased a bit this paragraph}
If $\EGE = \sum_{i=0}^m \egeval_i \, \egefun_i \otimes \egefun_i$ is available, we can define estimators $\widehat{\TO}_t = e^{t\EGE}$ and for any observable $f\in \Lii$: \begin{equation}\label{eq:transfer_spectral_emp} \widehat{\TO}_t f = \textstyle{\sum_{i\in\N_0}} e^{t\egeval_i}\,\egefun_i(x)\scalarp{\egefun_i,f}. \end{equation}
In particular, for the indicator function $f = \mathds{1}_B$ of a measurable set $B \subseteq \mathcal{X}$, the conditional probability under the target dynamics is
\begin{equation}
\PP[X_t \in B \vert X_0=x] = \TO_t \mathds{1}_B(x) = \sum_{i=0}^m e^{\geval_i t}\,\gefun_i(x)\int_B \gefun_i(y)\,d\im(y).
\end{equation}
Since the target invariant measure $\im$ is unknown, we cannot compute this integral directly. Instead, using biased samples $\{x'_j\}_{j=1}^n \sim \bim$ and importance weights $\bw(x'_j)$, we approximate:
\begin{align}
\int_B \egefun_i(y)\,d\im(y) 
&= \int_B \egefun_i(y)\,\frac{d\im}{d\bim}(y)\,d\bim(y) \approx \frac{\sum_{j=1}^n \bw(x'_j)\mathds{1}_B(x'_j)\egefun_i(x'_j)}{\sum_{j=1}^n\bw(x'_j)} 
= \frac{\sum_{x'_j \in B}\bw(x'_j)\widehat{u}_i^\top\NNe(x'_j)}{\sum_{j=1}^n\bw(x'_j)}.
\end{align}
This yields the empirical conditional probability
\begin{equation}\label{eq:empirical_conditional_prob}
\widehat{\PP}[X_t \in B \vert X_0=x] = \sum_{i=0}^m e^{\egeval_i t}\,\widehat{u}_i^\top\NNe(x) \frac{\sum_{x'_j \in B}\bw(x'_j)\widehat{u}_i^\top\NNe(x'_j)}{\sum_{j=1}^n\bw(x'_j)}.
\end{equation}

\paragraph{Dirichlet forms and (fractional) Sobolev spaces.} 
Our approach relies on an energy-based formulation of the IG learning problem, grounded in the theory of Dirichlet forms. This framework provides a natural variational structure and allows us to exploit the functional-analytic properties of the underlying generator. We introduce the (shifted) energy functional defined for $f\in\Lii$ by
\begin{equation}\label{eq:energy_functional_app}
\energykernel_{\im}(f) := \lim_{t\to 0}\int_\X \shift^{-1}\bigl(f(f-\TO_t f)\bigr)\,\frac{d\im}{t},
\end{equation}
whenever the limit exists. The collection of such functions defines the domain $\mathrm{dom}(\energykernel)$ of the energy. The associated \emph{Dirichlet form} is the bilinear form obtained by polarization: for any $f,g \in \mathrm{dom}(\energykernel)$,
\begin{equation}\label{eq:dirichlet_bilinear_app}
\energykernel(f,g)
= - \int_\X f\, (\shift^{-1}\GE g)\, d\im
= \int_\X (-\shift^{-1}\GE f)\, g \, d\im.
\end{equation}
In particular, for every $f \in \mathrm{dom}(\energykernel)$, the energy satisfies $\energykernel(f) = \energykernel(f,f)$.

We define the \emph{energy norm} by
\begin{equation}\label{eq:energy_norm_app}
\|f\|_{\Wii}^2
:= \langle {\resolventinv}f, f\rangle_{\Liishort}
= \|f\|_{\Liishort}^2 + \frac{1}{\shift\beta}\|\nabla f\|_{\Liishort}^2,
\end{equation}
where ${\resolventinv}^{-1}$ denotes the resolvent associated with the generator $\GE$.
This norm coincides with the standard Sobolev norm and characterizes both the natural energy space and the domain of the infinitesimal generator through its associated Dirichlet form.

This definition extends naturally to (possibly higher) fractional orders. For any $p>0$, we introduce the $p$-th \emph{energy norm} by
\begin{equation}\label{eq:fractional_norm_app}
\|f\|_{\Wiip{p}}^2
:= \langle {\resolventinv}^{p}f,f \rangle_{\Liishort}.
\end{equation}
The resulting scale of norms interpolates between $\Liii$ regularity and higher-order smoothness induced by the generator. This construction leads to the definition of the \emph{fractional Sobolev spaces}: for any $p>0$,
\begin{equation}\label{eq:frac_sobolev_space_app}
\Wiip{p}
:= \big\{ f \in \Liii\,|\, \|f\|_{\Wiip{p}} < \infty \big\}.
\end{equation}
These spaces form the natural functional framework for our analysis, capturing the fractional regularity encoded by the generator and its associated energy structure.

\section{Transfer learning from biased dynamics}\label{app:biased_dynamics}

\paragraph{Biased Langevin dynamics.}
To accelerate exploration in the presence of metastability or rare events, practitioners introduce a bias potential $\bias:\mathbb{R}^d \to \mathbb{R}$ and simulate the modified dynamics under the perturbed potential
\begin{equation}\label{eq:biased_potential_app}
\bPT(x) := \PT(x) + \bias(x), \quad x \in \X.
\end{equation}
The resulting biased Langevin equation is
\begin{equation}\label{eq:biased_langevin_app}
dX_t' = -\nabla \bPT(X_t') dt + \sqrt{2\beta^{-1}} dW_t', \quad X_0' = x,
\end{equation}
where $(W_t')_{t\geq 0}$ is a standard $d$-dimensional Brownian motion. This biased process converges to the biased Boltzmann distribution
\begin{equation}\label{eq:biased_measure}
\bim(dx) = \frac{e^{-\beta \bPT(x)}}{\int_{\X} e^{-\beta \bPT(x)} dx} dx = \frac{e^{-\beta(\PT(x) + \bias(x))}}{\int_{\X} e^{-\beta(\PT(x) + \bias(x))} dx} dx.
\end{equation}
Let $\bGE$ denote the infinitesimal generator of the biased dynamics, with associated transfer operators $(\TO'_t)_{t\geq 0}$.

\paragraph{Generator perturbation.}
The effect of the bias on the generator is explicit and additive. From the general form \eqref{Eq: computation L}, the biased generator acts as
\begin{equation}\label{eq:biased_generator}
\bGE f = -\langle\nabla \bPT, \nabla f\rangle + \beta^{-1}\Delta f = -\langle\nabla \PT + \nabla \bias, \nabla f\rangle + \beta^{-1}\Delta f.
\end{equation}
Therefore, the perturbation decomposes as
\begin{equation}\label{eq:generator_perturbation}
\bGE = \GE - \langle\nabla \bias, \nabla(\cdot)\rangle.
\end{equation}
This linear dependence on the bias is a key structural property that enables principled debiasing at the generator level.

\paragraph{Transfer operator perturbation does not factorize.}
By contrast, the transfer operator perturbation is highly nonlinear. Recall that the transfer operators are related to the generators via $\TO_t = e^{t\GE}$ and $\TO'_t = e^{t\bGE}$. One might hope that
\begin{equation*}
\TO'_t = e^{t\bGE} = e^{t(\GE - \langle\nabla\bias, \nabla(\cdot)\rangle)} \stackrel{?}{=} e^{t\GE} \cdot e^{-t\langle\nabla\bias, \nabla(\cdot)\rangle} = \TO_t \cdot e^{-t\langle\nabla\bias, \nabla(\cdot)\rangle}.
\end{equation*}
However, since $\GE$ and $\langle\nabla\bias, \nabla(\cdot)\rangle$ do not commute in general (they do not share the same eigenstructure), the exponential does not factorize, so the above equality does not hold. 
%More precisely,
%\begin{equation}\label{eq:transfer_no_factorization}e^{t\bGE} = e^{t\GE - t\langle\nabla\bias, \nabla(\cdot)\rangle} \neq e^{t\GE} \cdot e^{-t\langle\nabla\bias, \nabla(\cdot)\rangle}.\end{equation}
This fundamental asymmetry between generator-level and transfer-operator-level perturbations explains why standard transfer operator learning methods--which rely on estimating covariance matrices of the form $\mathbb{E}[z(X_0)z(X_t)^\top]$--cannot be straightforwardly debiased through importance weighting. The temporal propagation encoded in the transfer operator obscures the simple additive structure available at the generator level.

\paragraph{Radon-Nikodym derivative and importance weighting.}
Comparing the unbiased invariant measure $\im$ with the biased measure $\bim$, we obtain
%\helene{same remark: $Z'$ instead of $\bar Z$ to be consistent with bias notation. \KLou{KL: I agree.}}
\begin{align*}\label{eq:radon_nikodym_derivation}
\frac{d\im}{d\bim}(x) & = \frac{e^{-\beta \PT(x)} / Z}{e^{-\beta \bPT(x)} / Z'}= \frac{e^{-\beta \PT(x)}}{e^{-\beta(\PT(x) + \bias(x))}} \cdot \frac{Z'}{Z} \\
& = e^{\beta \bias(x)} \cdot \frac{\int_{\X} e^{-\beta(\PT(x) + \bias(x))} dx}{\int_{\X} e^{-\beta \PT(x)} dx} = \frac{e^{\beta \bias(x)}}{\int_{\X} e^{\beta \bias(x)} d\bim(x)},
\end{align*}
where $Z = \int_{\X} e^{-\beta \PT(x)} dx$ and $Z' = \int_{\X} e^{-\beta \bPT(x)} dx$ are normalization constants. This Radon-Nikodym derivative makes the distribution shift explicit.

For any function $f \in L^1_{\im}(\X)$, we can reweight expectations under $\bim$ to recover expectations under $\im$:
\begin{equation}\label{eq:importance_reweighting}
\mathbb{E}_{x \sim \im}[f(x)] = \mathbb{E}_{x' \sim \bim}\left[\frac{d\im}{d\bim}(x') f(x')\right] = \mathbb{E}_{x' \sim \bim}\left[\frac{e^{\beta \bias(x')}}{\int_{\X} e^{\beta \bias(x)} d\bim(x)} f(x')\right].
\end{equation}
Defining the importance weight $\bw(x) := e^{\beta \bias(x)}$, this becomes
\begin{equation}\label{eq:importance_weight}
\mathbb{E}_{x \sim \im}[f(x)] = \frac{1}{\overline{\bw}} \mathbb{E}_{x' \sim \bim}[\bw(x') f(x')], \quad \text{where} \quad \overline{\bw} := \mathbb{E}_{x' \sim \bim}[\bw(x')].
\end{equation}
This importance weighting principle forms the foundation of our debiasing approach: quantities involving the unbiased generator $\GE$ and measure $\im$ can be estimated from biased trajectory data by appropriately reweighting with $\bw(x')$.

\section{Loss function derivation and spectral learning}\label{app:loss}

In this section we discuss in detail the proposed  \emph{representation learning} method to directly learn the dominant eigenspace (slow modes) of the generator through a spectral objective based on its resolvent. 

\paragraph{Spectral theory of the resolvent.}
We begin by establishing the spectral-theoretic foundation for our approach. Under standard confinement assumptions on the potential $\PT$, namely that $\PT(x) \geq 2\alpha|x|^2 \mathbf{1}_{{|x| \geq R}} - c$ for some $\alpha > 0$, $R > 0$, and $c \in \mathbb{R}$ (see Assumption \ref{eq:CP}), the generator $\GE$ admits a compact resolvent and a discrete spectrum. This follows from the compact embedding of the Dirichlet form domain into $\Lii$ \cite{davies1995spectral}. Consequently, $\GE$ is self-adjoint on $\Lii$ with eigenvalue decomposition
$$
\GE = \sum_{i \in \mathbb{N}_0} \geval_i\, \gefun_i \otimes \gefun_i,
$$
where $0 = \geval_0 > \geval_1 \geq \geval_2 \geq \cdots \to -\infty$ are the eigenvalues and $(\gefun_i)_{i \in \mathbb{N}_0}$ form an orthonormal basis of $\Lii$. The resolvent operator
$$
\resolventshort := (I - \shift^{-1}\GE)^{-1}, \quad \shift > 0,
$$
is compact and self-adjoint, sharing the same eigenfunctions as $\GE$ with eigenvalues
\begin{equation}\label{eq:resolvent_spectral_full}
\reval_i := \frac{\shift}{\shift - \geval_i} = \frac{1}{1 - \geval_i/\shift}.
\end{equation}
Since $\geval_i  \in  (-\infty,0]$ for all $i$, we have $\reval_i \geq 1$, with $\reval_0 = \shift$ (corresponding to the trivial constant eigenfunction) and $\reval_i \to 0$ as $i \to \infty$. The spectral decomposition of the resolvent is thus
$$
\resolventshort = \sum_{i \in \mathbb{N}_0} \reval_i\, \gefun_i \otimes \gefun_i.
$$
Crucially, the largest eigenvalues $\reval_0, \reval_1, \ldots, \reval_\trunc$ of $\resolventshort$ correspond to the smallest (in magnitude) eigenvalues $\geval_0, \geval_1, \ldots, \geval_\trunc$ of $\GE$. These are precisely the \emph{slow modes} that dominate the long-time dynamics and metastable structure of the system - the primary objects of interest in our learning framework.

\paragraph{Best rank-$(\trunc+1)$ approximation.}
We seek a rank-$(\trunc+1)$ approximation of $\resolventshort$ of the form
\begin{equation}\label{eq:rank_m_approx}
 \resolventshort\approx\NNE \EVop\NNE^*.
\end{equation}
Here, $\NNE = (\NNek_0,\dots,\NNek_\trunc)\colon \mathbb{R}^{\trunc+1} \to \Wii$ denotes the injection operator defined by $\NNE u = \sum_{j=0}^{\trunc} u_j \NNek_j$, where $(\NNek_j)_{j=0}^{\trunc}$ are neural network features parameterized by $\param$. The operator $\NNE^*\colon \Wii \to \mathbb{R}^{\trunc+1}$ is the adjoint, given by $(\NNE^* f)_j = \langle \resolventinv f, \NNek_j \rangle_{\Lii}$. Finally, $\EVop = \mathrm{diag}(e^{-(d^\param_0)^2}, \ldots, e^{-(d^\param_\trunc)^2})$ collects the trainable eigenvalue parameters.

By the Eckart-Young-Mirsky theorem, for a compact self-adjoint operator $A : \Wii \to \Wii$ with eigenvalues $(\sigma_i)_{i \in \mathbb{N}_0}$ and eigenvectors $(u_i)_{i \in \mathbb{N}_0}$, the best rank-$(\trunc+1)$ approximation is:
\begin{equation}\label{eq:eckart_young_op}
\min_{\mathrm{rank}(B) \leq \trunc+1} \|A - B\|_{\Wii \to \Wii}
= \left\|A - \sum_{i=0}^\trunc \sigma_i u_i \otimes u_i\right\|_{\Wii \to \Wii} = \sigma_{\trunc+1}.
\end{equation}
This holds for any unitarily invariant norm. For the Hilbert-Schmidt norm $\|A\|_{\mathrm{HS}(\Wii)}^2 := \sum_{i} \sigma_i^2$, the error is $\sum_{i > \trunc} \sigma_i^2$. While compactness and \eqref{eq:eckart_young_op} suffice theoretically, we use the Hilbert--Schmidt formulation because it admits a tractable reformulation via covariance matrices estimable from data.

\textbf{Application to the resolvent.}
Since the resolvent $\resolventshort$ is compact and self-adjoint in $\Wii$ under our confinement assumptions, the Eckart--Young--Mirsky theorem implies that its optimal rank-$(\trunc+1)$ approximation in operator norm is
\begin{equation}\label{eq:optimal_approx}
\resolventshort^{\trunc} = \sum_{i=0}^{\trunc} \reval_i \gefun_i \otimes \gefun_i,
\end{equation}
with approximation error $\|\resolventshort - \resolventshort^{\trunc}\|_{\Wii \to \Wii} = \reval_{\trunc+1}$. This same truncation is also optimal for the Hilbert-Schmidt norm when $\resolventshort$ has finite Hilbert-Schmidt norm in $\Wii$. Our neural network parameterization $\NNE \EVop \NNE^*$ is designed to recover this leading spectral subspace.

\subsection{Population loss derivation}

\paragraph{Hilbert-Schmidt formulation.}
Following \cite{Kostic2024ncp}, for compact operator $\resolventshort$ we formulate the population loss as:
\begin{align}
\exploss_{\regnn}(\param)
=& \overline{\bw}\lim_{k\to\infty}\Big(\|P_k\resolventshort{-}\NNE \EVop\NNE^*\|_{\mathrm{HS}(\Wii)}^2
-\|P_k\resolventshort\|_{\mathrm{HS}(\Wii)}^2\Big) + \regnn\,\overline{\bw}\,\|\NNE^*\NNE-\MX{I}\|_{F}^2, \label{eq:population_loss_hs_app}
\end{align}
where $\overline{\bw} = \EE_{x' \sim \bim}[e^{\beta\bias(x')}]$ accounts for importance weighting. The first term measures the approximation error; subtracting the constant $\|\resolventshort\|_{\mathrm{HS}(\Wii)}^2$ ensures the loss depends only on $\param$ and is minimized when $\NNE \EVop\NNE^*$ optimally approximates $\resolventshort$. The regularization term $\regnn\|\NNE^*\NNE-\MX{I}\|_{F}^2$ enforces orthonormality in the $\Wii$ metric.  

\paragraph{Key challenge.}
The loss \eqref{eq:population_loss_hs_app} cannot be directly computed because the resolvent is unknown and computing the Hilbert-Schmidt norm would require all eigenfunctions $\{\gefun_i\}_{i \in \mathbb{N}}$. The key insight is to reformulate using covariance matrices estimable from trajectory data.

\subsection{Reformulation via the $\Wii$ inner product}

\paragraph{The $\Wii$ inner product structure.}
Recall from Sec.  \ref{sec:problem} that the $\Wii$ inner product is defined by:
\begin{equation*}\label{eq:wii_inner_product}
\langle f, g \rangle_{\Wii} = \langle \resolventinv f, g \rangle_{\Lii} = \langle f, g \rangle_{\Lii} - \frac{1}{\shift }\langle f, \GE g \rangle_{\Lii}.
\end{equation*}
For Langevin dynamics with generator $\GE f = -\langle\nabla U, \nabla f\rangle + \beta^{-1}\Delta f$, integration by parts gives:
\begin{equation}\label{eq:wii_explicit}
\langle f, g \rangle_{\Wii} = \langle f, g \rangle_{\Lii} + \frac{1}{\shift \beta}\langle \nabla f, \nabla g \rangle_{\Lii} = \EE_{x \sim \im}\left[  f(x)g(x) + \frac{1}{\shift \beta}\nabla f(x)^\top \nabla g(x)\right].
\end{equation}

\paragraph{Computing $\NNE^*\NNE$ via the weighted feature matrix.}
The operator $\NNE^*\NNE: \mathbb{R}^{\trunc+1} \to \mathbb{R}^{\trunc+1}$ has matrix elements:
\begin{equation*}\label{eq:NNE_star_NNE}
(\NNE^*\NNE)_{ij} = \langle \resolventinv \NNek_i, \NNek_j \rangle_{\Lii} = \langle \NNek_i, \NNek_j \rangle_{\Wii}.
\end{equation*}
Using \eqref{eq:wii_explicit}:
\begin{equation}\label{eq:Wxt_population}
(\NNE^*\NNE)_{ij} = \EE_{x \sim \im}\left[ \NNek_i(x)\NNek_j(x) + \frac{1}{\shift\beta}\nabla \NNek_i(x)^\top \nabla \NNek_j(x)\right].
\end{equation}

\textbf{Importance reweighting.} Since we only have samples from $\bim$, not $\im$, we use the Radon-Nikodym derivative $\frac{d\im}{d\bim}(x) = e^{\beta\bias(x)}/\overline{\bw}$ where $\overline{\bw} = \int e^{\beta\bias(x)} d\bim(x)$. Define $\bw(x) = e^{\beta\bias(x)}$. Then:
\begin{equation}\label{eq:Wxt_biased}
\overline{\bw}(\NNE^*\NNE)_{ij} = \EE_{x' \sim \bim}\left[\bw(x')\left( \NNek_i(x')\NNek_j(x') + \frac{1}{\shift\beta}\nabla \NNek_i(x')^\top \nabla \NNek_j(x')\right)\right].
\end{equation}

\textbf{Matrix formulation.} Define the Jacobian matrix:
$$
J_\param(x') = [\partial_1\NNe(x') \vert \cdots \vert \partial_d\NNe(x')] \in \R^{(\trunc+1) \times d},
$$
and the weighted feature matrix:
\begin{equation*}\label{eq:w_features_detailed}
\wx(x') = \Big[\sqrt{\bw(x')}\NNe(x') \quad \sqrt{\tfrac{\bw(x')}{\shift\beta}}J_\param(x')\Big] \in \R^{(\trunc+1) \times (d+1)}.
\end{equation*}

Then:
\begin{align*}
\wx(x')\wx(x')^\top &= \begin{bmatrix} \sqrt{\bw(x')}\NNe(x') &\sqrt{\tfrac{\bw(x')}{\shift\beta}}J_\param(x') \end{bmatrix} \begin{bmatrix} \sqrt{\bw(x')}\NNe(x')^\top \\ \sqrt{\tfrac{\bw(x')}{\shift\beta}}J_\param(x')^\top \end{bmatrix}\nonumber\\
&= \bw(x')\NNe(x')\NNe(x')^\top + \frac{\bw(x')}{\shift\beta}J_\param(x')J_\param(x')^\top.
\label{eq:wx_expansion}
\end{align*}

The $(i,j)$-th element is:
\begin{align*}
[\wx(x')\wx(x')^\top]_{ij} &= \bw(x')\NNek_i(x')\NNek_j(x') + \frac{\bw(x')}{\shift\beta}\sum_{k=1}^d \partial_k\NNek_i(x')\partial_k\NNek_j(x')\\
&= \bw(x')\left(\NNek_i(x')\NNek_j(x') + \frac{1}{\shift\beta}\nabla\NNek_i(x')^\top\nabla\NNek_j(x')\right).
\end{align*}

Comparing with \eqref{eq:Wxt_biased} (noting the factor $\shift$):
\begin{equation}\label{eq:Wxt_final}
\Wxt := \overline{\bw}\NNE^*\NNE = \EE_{x' \sim \bim}[\wx(x')\wx(x')^\top].
\end{equation}

\paragraph{Computing $\NNE^*\resolventshort\NNE$.}
The operator $\NNE^*\resolventshort\NNE: \mathbb{R}^{\trunc+1} \to \mathbb{R}^{\trunc+1}$ has matrix elements:
\begin{equation*}\label{eq:resolvent_action}
(\NNE^*\resolventshort\NNE)_{ij} = \langle \resolventinv \NNek_i, \resolventshort \NNek_j \rangle_{\Lii}.
\end{equation*}

For any $f \in \Wii$, the action of $\NNE^*\resolventshort$ is, for any $j \in \{0, \dots, m\}$:
\begin{align}
(\NNE^*\resolventshort f)_j &= \langle \resolventinv \NNek_j, \resolventshort f \rangle_{\Lii}= \langle \NNek_j, f \rangle_{\Lii}= \EE_{x \sim \im}[\NNek_j(x) f(x)].\label{eq:resolvent_action_simplified}
\end{align}

Therefore:
\begin{equation*}\label{eq:Cxt_population}
(\NNE^*\resolventshort\NNE)_{ij} = \EE_{x \sim \im}[\NNek_i(x)\NNek_j(x)].
\end{equation*}

With importance reweighting \eqref{eq:Wxt_biased}:
\begin{equation}\label{eq:Cxt_final}
\Cxt := \overline{\bw}\,\NNE^*\resolventshort \NNE = \EE_{x' \sim \bim}[\bw(x')\NNe(x')\NNe(x')^\top].
\end{equation}

\subsection{Hilbert-Schmidt norm decomposition}

We derive the reformulation of the Hilbert-Schmidt norm in terms of $\Wxt$ and $\Cxt$. Expanding the squared norm:
\begin{align}
\|\resolventshort - \NNE\EVop\NNE^*\|_{\mathrm{HS}(\Wii)}^2 
&= \|\resolventshort\|_{\mathrm{HS}(\Wii)}^2 + \|\NNE\EVop\NNE^*\|_{\mathrm{HS}(\Wii)}^2 - 2\langle \resolventshort, \NNE\EVop\NNE^* \rangle_{\mathrm{HS}(\Wii)}.\label{eq:hs_expansion}
\end{align}

For the finite-rank operator $A = \NNE\EVop\NNE^*$ mapping to the $(\trunc+1)$-dimensional subspace $\mathrm{span}(\NNek_0, \ldots, \NNek_\trunc)$:
\begin{align}
\|\NNE\EVop\NNE^*\|_{\mathrm{HS}(\Wii)}^2 &= \sum_{j=0}^{\trunc} \|\NNE\EVop\NNE^*\NNek_j\|_{\Wii}^2
= \sum_{j=0}^{\trunc} \langle \EVop\NNE^*\NNek_j, \NNE^*\NNE\EVop\NNE^*\NNek_j \rangle_{\mathbb{R}^{\trunc+1}}\nonumber\\
&= \mathrm{tr}[(\EVop\NNE^*\NNE)^2]
= \overline{\bw}^{-2}\mathrm{tr}[(\EVop\Wxt)^2],\label{eq:approx_norm}
\end{align}
where we used the cyclic property of trace and $\Wxt = \overline{\bw}\NNE^*\NNE$.

Similarly, for the cross term:
\begin{align}
\langle \resolventshort, \NNE\EVop\NNE^* \rangle_{\mathrm{HS}(\Wii)} 
&= \sum_{i \in \mathbb{N}} \langle \resolventshort \gefun_i, \NNE\EVop\NNE^* \gefun_i \rangle_{\Wii}
= \sum_{i \in \mathbb{N}} \langle \gefun_i, \NNE\EVop\NNE^* \gefun_i \rangle_{\Lii}\nonumber\\
&= \mathrm{tr}[\NNE^*\resolventshort\NNE\EVop]
= \overline{\bw}^{-1}\mathrm{tr}[\Cxt\EVop].\label{eq:cross_term}
\end{align}

Substituting into \eqref{eq:hs_expansion}, multiplying by $\overline{\bw}$, and adding the orthonormality regularization $\regnn\|\NNE^*\NNE - \MX{I}\|_F^2 = \regnn\overline{\bw}^{-2}\mathrm{tr}[(\Wxt - \overline{\bw}\MX{I})^2]$ yields:
\begin{equation}\label{eq:population_loss_final_app}
\exploss_\regnn(\param)
=\mathrm{tr}\big[(\EVop\Wxt)^2
-2\,\overline{\bw}\EVop\,\Cxt{+}\regnn (\Wxt {-}\overline{\bw}\,\MX{I})^2\big],
\end{equation}
which is the loss presented in equation \eqref{eq:population_loss} of the main text.

\subsection{Empirical loss estimation}
\label{app:empirical_loss}

\paragraph{Challenge of unbiased estimation.}
The population loss \eqref{eq:population_loss_final_app} involves products of expectations: $(\EVop\Wxt)^2$, $\overline{\bw}\Cxt$, and $(\Wxt - \overline{\bw}\MX{I})^2$. Naively replacing expectations with sample means yields biased estimators.

\paragraph{Solution 1: Sample splitting.}
Split the batch into two independent subsets $\ebim_1, \ebim_2$ of size $\lceil b/2 \rceil$ and $\lfloor b/2 \rfloor$. Define empirical covariances $\EWx_\param^k = \EE_{x' \sim \ebim_k}[\wx(x')\wx(x')^\top]$, $\ECx_\param^k = \EE_{x' \sim \ebim_k}[\bw(x')\NNe(x')\NNe(x')^\top]$, and means $\widehat{\bw}^k = \EE_{x' \sim \ebim_k}[\bw(x')]$ for $k \in \{1,2\}$. For independent samples, $\EE[(\EVop\EWx_\param^1)(\EVop\EWx_\param^2)] = (\EVop\Wxt)^2$, and similarly for other terms. Averaging cross-terms for symmetry yields:
\begin{align}
\emploss_\regnn(\param)&=\mathrm{tr} \Big[ \frac{(\EWx_\param^1\EVop \EWx^2_\param\EVop {+} \EWx_\param^2\EVop \EWx^1_\param\EVop)}{2} 
{-}\widehat{\bw}^1\ECx_\param^2\EVop {-}\widehat{\bw}^2\ECx_\param^1\EVop { +} \regnn(\EWx_\param^1-\widehat{\bw}^1\MX{I})(\EWx_\param^2{-}\widehat{\bw}^2\MX{I}) \Big].\label{eq:emploss_balanced}
\end{align}

This simplicity allows one to exclude the trivial eigenpair $\geval_0{=}0$ and $\gefun_0{=}1$ by centering the features $\NNe(x')$ on each batch, as proposed in transfer operator setting in \cite{Kostic2024ncp}.

\textbf{Computational complexity:} 
Computing $\EWx_\param^k$ requires $\mathcal{O}((b/2)(\trunc+1)^2 d)$ operations for Jacobians and outer products. Computing $\ECx_\param^k$ requires $\mathcal{O}((b/2)(\trunc+1)^2)$ operations. Matrix multiplications require $\mathcal{O}((\trunc+1)^3)$ operations. Overall, the total computational cost is $\mathcal{O}\big(b(\trunc+1)^2 d + (\trunc+1)^3\big)$.

\paragraph{Solution 2: U-statistics (when centering features).}
Alternatively, constant features might be hard-coded with fixed weights, or when needed, c.f. \cite{PNAS}, learned from data. In these cases the products of expectations in the loss \eqref{eq:population_loss} can be estimated from iid samples $(x'_i)_{i\in[b]}$ from $\bim$ via U-statistics estimator:
\begin{align}
\emploss_\regnn(\param)&=\tfrac{1}{b(b{-}1)}\!\!\sum_{i\neq j\in[b]}\!\!\!\!\!\big[
\langle\EVop\wx(x_i'),\wx(x_j')\rangle_{F}^2{+}\regnn\bw(x_i')\bw(x_j'){-}2\bw(x_i')\bw(x_j')\|\EVop\NNe(x_j')\|^2\nonumber\\
&\quad{+}\regnn\langle\wx(x_i'),\wx(x_j')\rangle_F^2
{-} 2\regnn\,\bw(x_i')\|\wx(x_j')\|_F^2\big].\label{eq:ustats_loss_detailed}
\end{align}

\textbf{Computational complexity:} The double sum over $i \neq j$ requires $\mathcal{O}(b^2)$ evaluations. Each evaluation computes inner products $\langle\wx(x_i'),\wx(x_j')\rangle_F$ and norms at cost $\mathcal{O}((\trunc+1) d)$. Total: $\mathcal{O}(b^2 (\trunc+1) d)$.

Sample splitting is preferred for large batches ($b \gg \trunc$), while U-statistics are better when the latent dimension is large ($\trunc \gg b$).

\begin{table}[h!]
\centering
\begin{tabular}{lcc}
\toprule
Property & Sample Splitting & U-Statistics \\
\midrule
Computational complexity & $\mathcal{O}(b\trunc^2 d)$ & $\mathcal{O}(b^2\trunc d)$ \\
Handles trivial eigenpair & Yes & Requires centering \\
Batch efficiency & Better for large $b$ & Better for large $\trunc$ \\
\bottomrule
\end{tabular}
\caption{Comparison of empirical loss estimation strategies.}
\end{table}

\subsection{Proof of Theorem \ref{thm:main_dnn}}
\label{app:biased_learning}
\ThmMainDNN

\begin{proof}
We prove the theorem in four steps: (1-2) establish the lower bound using Eckart-Young-Mirsky and taking the limit $k\to\infty$, (3) characterize when equality holds, and (4) verify unbiasedness of the empirical estimator.

\textbf{Step 1: Lower bound via Eckart-Young-Mirsky.}
Let $P_k\colon\Wii\to\Wii$ be the orthogonal projector onto the span of the first $k$ eigenfunctions $\{\gefun_0, \ldots, \gefun_{k-1}\}$ of $\resolventshort$. Consider the truncated loss:
\begin{equation}\label{eq:truncated_loss}
\exploss^k(\param){=}\overline{\bw}\Big(\norm{P_k\resolventshort{-}\NNE \EVop\NNE^*}^2_{\mathrm{HS}(\Wii)} {-}\norm{P_k\resolventshort}^2_{\mathrm{HS}(\Wii)}\Big).
\end{equation}

By the Eckart-Young-Mirsky theorem, for any $k > \trunc$, the best approximation to $P_k\resolventshort$ in Hilbert--Schmidt norm is obtained by retaining its largest $\trunc+1$ eigenvalues, namely $P_{\trunc+1}\resolventshort$
whose spectral decomposition is $P_{\trunc+1}\resolventshort = \sum_{i=0}^{\trunc} \reval_i \gefun_i \otimes \gefun_i$, where $\reval_i = \shift/(\shift - \geval_i)$. Therefore:
\begin{align}
\exploss^k(\param) &\geq \overline{\bw}\left(\norm{P_{\trunc+1}\resolventshort - P_k\resolventshort}^2_{\mathrm{HS}(\Wii)} - \norm{P_k\resolventshort}^2_{\mathrm{HS}(\Wii)}\right)\nonumber\\
&= \overline{\bw}\left(\sum_{i=\trunc+1}^{k-1} \reval_i^2 - \sum_{i=0}^{k-1} \reval_i^2\right)= -\overline{\bw}\sum_{i=0}^{\trunc} \reval_i^2
= -\sum_{i=0}^{\trunc} \frac{\shift^2\overline{\bw}}{(\shift - \geval_i)^2}.\label{eq:lower_bound_k}
\end{align}

\textbf{Step 2: Taking the limit $k \to \infty$.}
Expanding the Hilbert-Schmidt norm in $\exploss^k$ via the trace:
\begin{align}
\exploss^k(\param) &= \overline{\bw}\Big(\norm{\NNE \EVop\NNE^*}^2_{\mathrm{HS}(\Wii)} - 2\langle \NNE \EVop\NNE^*, P_k\resolventshort \rangle_{\mathrm{HS}(\Wii)}\Big)\nonumber\\
&= \overline{\bw}\Big(\norm{\NNE \EVop\NNE^*}^2_{\mathrm{HS}(\Wii)} - 2\langle \NNE \EVop\NNE^*, \resolventshort \rangle_{\mathrm{HS}(\Wii)}\Big) + 2\overline{\bw}\langle \NNE \EVop\NNE^*, (I - P_k)\resolventshort \rangle_{\mathrm{HS}(\Wii)}\nonumber\\
&= \exploss(\param) + 2\overline{\bw}\langle \NNE \EVop\NNE^*, (I - P_k)\resolventshort \rangle_{\mathrm{HS}(\Wii)},\label{eq:loss_decomp}
\end{align}
where $\exploss(\param)$ is the loss without the subtracted constant term (i.e., the first two terms in \eqref{eq:population_loss_hs}).

By the H\"older inequality for the Hilbert-Schmidt inner product:
\begin{align}
\abs{\exploss(\param) - \exploss^k(\param)} &\leq 2\overline{\bw}\tr(\NNE \EVop\NNE^*)\norm{(I - P_k)\resolventshort}\nonumber\\
&= 2\tr(\Wxt \EVop)\reval_{k+1}.\label{eq:convergence_rate}
\end{align}
Since $\NNek_i \in \Wii$ for all $i \in \{0, \ldots, \trunc\}$ by assumption, we have $\tr(\Wxt\EVop) < \infty$. Moreover, $\geval_k \to -\infty$ as $k \to \infty$, and, hence
$$
\lim_{k \to \infty} \exploss^k(\param) = \exploss(\param).
$$

Taking the limit in \eqref{eq:lower_bound_k} and noting that $\exploss_{\regnn}(\param) \geq \exploss(\param)$ (since the orthonormality penalty is non-negative), we obtain:
\begin{equation}
\exploss_{\regnn}(\param) \geq -\sum_{i=0}^{\trunc} \frac{\shift^2\overline{\bw}}{(\shift - \geval_i)^2}.
\end{equation}

\textbf{Step 3: Characterizing equality.}
Equality in \eqref{eq:dnn_bound} requires:
\begin{enumerate}
\item $\exploss_{\regnn}(\param) = \exploss(\param)$, which implies the orthonormality penalty vanishes: $\|\NNE^*\NNE - I\|_F^2 = 0$, so $\NNE^*\NNE = I$.
\item $\exploss(\param) = -\sum_{i=0}^{\trunc} \frac{\shift^2\overline{\bw}}{(\shift - \geval_i)^2}$.
\end{enumerate}

By the Eckart-Young-Mirsky theorem, condition (2) holds if and only if $\NNE\EVop\NNE^*$ equals the optimal rank-$(\trunc+1)$ approximation $P_{\trunc+1}\resolventshort = \sum_{i=0}^{\trunc} \reval_i \gefun_i \otimes \gefun_i$. Combined with condition (1), this means:
$$
\mathrm{span}(\NNek_0, \ldots, \NNek_\trunc) = \mathrm{span}(\gefun_0, \ldots, \gefun_\trunc),
$$
or equivalently $\TP_\param = \TP_{\trunc+1}$.

If additionally $\regnn > 0$ and $\reval_{\trunc+1} < \reval_{\trunc}$ (i.e., $\geval_{\trunc+1} < \geval_{\trunc}$, ensuring a spectral gap), then the optimal rank-$(\trunc+1)$ approximation is unique. In this case, the orthonormality constraint $\NNE^*\NNE = I$ implies that $\{\NNek_i\}_{i=0}^{\trunc}$ form an orthonormal basis of $\mathrm{span}(\gefun_0, \ldots, \gefun_\trunc)$, and the eigenvalues satisfy $e^{-(d_i^\param)^2} = \reval_i$, i.e., $d_i^\param = \sqrt{-\log \reval_i}$, up to reordering.

\textbf{Step 4: Unbiasedness of empirical estimator.}
We now show that $\EE[\emploss_\regnn(\param)] = \overline{\bw}^2 \exploss_\regnn(\param)$. Recall the empirical loss \eqref{eq:emploss_2}:
\begin{align*}
\emploss_\regnn(\param) = \mathrm{tr}\Big[& (\EWx_\param^1\EVop \EWx^2_\param\EVop {+} \EWx_\param^2\EVop \EWx^1_\param\EVop)/2 
{-}\widehat{\bw}^1\ECx_\param^2\EVop{-}\widehat{\bw}^2\ECx_\param^1\EVop { +} \regnn(\EWx_\param^1-{\widehat{\bw}^1}\MX{I})(\EWx_\param^2{-}{\widehat{\bw}^2}\MX{I}) \Big].
\end{align*}

Since $\ebim_1$ and $\ebim_2$ are independent samples from $\bim$, we have:
\begin{align*}
\EE[\EWx_\param^1 \EVop \EWx_\param^2 \EVop]
&= \EE[\EWx_\param^1]\EVop \EE[\EWx_\param^2]\EVop
= (\EVop\Wxt)^2,\\
\EE[\widehat{\bw}^1 \ECx_\param^2]
&= \EE[\widehat{\bw}^1]\EE[\ECx_\param^2]
= \overline{\bw}\Cxt\\
\EE[\widehat{\bw}^2 \ECx_\param^1]
&= \EE[\widehat{\bw}^2]\EE[\ECx_\param^1]
= \overline{\bw}\,\Cxt,\\
\EE\big[(\EWx_\param^1-\widehat{\bw}^1 I)(\EWx_\param^2-\widehat{\bw}^2 I)\big]
&= (\EE[\EWx_\param^1]-\overline{\bw}I)(\EE[\EWx_\param^2]-\overline{\bw}I)
= (\Wxt-\overline{\bw}I)^2,
\end{align*}

Therefore:
\begin{align*}
\EE[\emploss_\regnn(\param)] &= \mathrm{tr}\big[(\EVop\Wxt)^2 - 2\overline{\bw}\EVop\Cxt + \regnn(\Wxt - \overline{\bw}I)^2\big]= \overline{\bw}^2 \exploss_\regnn(\param).
\end{align*}
\end{proof}

\subsection{Extension to general compact operators}

The formulation \eqref{eq:population_loss_hs_app} requires $\resolventshort$ to be Hilbert--Schmidt in $\Wii$ for the norm $\|\resolventshort\|_{\mathrm{HS}(\Wii)}^2$ to be finite. However, the equivalent form \eqref{eq:population_loss_final_app} expressed via covariance matrices $\Wxt$ and $\Cxt$ is well-defined whenever $\resolventshort$ is compact (guaranteed by confinement) and the neural network features $(\NNek_j)_{j=0}^\trunc$ and their gradients are in $\Wii$ (guaranteed by Assumption \ref{eq:BE}). This broader applicability is crucial since \textbf{compactness} (rather than Hilbert--Schmidt) is the natural condition ensured by standard confinement assumptions on the potential (Assumption \textbf{(CP)}). Indeed the covariance matrices $\Wxt$ and $\Cxt$ involve only $\Wii$-inner products between finitely many neural network features (always finite if features are in $\Wii$) and expectations under $\bim$ of bounded functions (bounded by Assumption \ref{eq:BE}). Crucially, no infinite sums or traces over the full spectrum of $\resolventshort$ appear in \eqref{eq:population_loss_final_app}, unlike in \eqref{eq:population_loss_hs_app}. Therefore, the covariance formulation extends naturally to all compact operators.

\subsection{Resolvent estimation in a given representation.}
%\massi{Should we elaborate this a bit more? maybe state it as a remark? Also comment on the difference from \cite{devergne2024biased} when the features are fixed}
Finally, we remark that our approach, defined by  equations \eqref{eq:empirical_gep} and \eqref{eq:empirical_operators},  differs fundamentally from regression-based methods that minimize the Hilbert-Schmidt approximation error of the resolvent (see in particular equation (16) in \cite{devergne2024biased}). In spirit our approach is closer to CCA approaches \cite{andrew2013deep,Kostic2024ncp,wu2020VAMP} based on $\Liishort$ geometry. However, a key novelty is that here we work in the Dirichlet energy space $\Wiishort$. This change of norm is instrumental in controlling gradient errors of the observables, and providing sharp finite sample guarantees for the estimation of the drift.

\section{Statistical bounds}\label{app:bounds}
\noindent

\subsection{Preliminaries: some concentration bounds}\label{app:concentration_bounds}
\paragraph{Useful concentration inequalities.}~All the statistical bounds we present will relay on two versions of Bernstein inequality. The first one is Pinelis and Sakhanenko inequality for random variables in a separable Hilbert space, see \citep[see e.g.][Proposition 2]{caponnetto2007}.

\begin{proposition}
\label{prop:con_ineq_ps}
Let $A_i$, $1 \leq i \leq n$ be i.i.d copies of a random variable $A$ in a separable Hilbert space with norm $\norm{\cdot}$. If there exist constants $\Lambda>0$ and $\sigma>0$ such that for every $m\geq2$, $\EE\norm{A}^m\leq \frac{1}{2} m! \Lambda^{m-2} \sigma^2$, 
then with probability at least $1-\delta$, 
\begin{equation}\label{eq:con_ineq_ps}
\left\|\frac{1}{n}\sum_{i=1}^n A_i - \EE A \right\|\leq \frac{4\sqrt{2}}{\sqrt{n}} \log\frac{2}{\delta} \sqrt{ \sigma^2 + \frac{\Lambda^2}{n} }.
\end{equation}
\end{proposition}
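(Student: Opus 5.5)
This is the Pinelis--Sakhanenko inequality, quoted from \cite{caponnetto2007}, and in the paper it may simply be invoked. If a self-contained argument is wanted, the plan is to prove a Bernstein-type bound for the norm of a sum of i.i.d.\ centered Hilbert-valued variables in two stages: first control the mean of $\norm{S_n}$, where $S_n=\sum_i (A_i-\EE A)$, and then control the deviations of $\norm{S_n}$ around that mean.

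\textbf{Step 1 (centering).} Set $\xi_i = A_i - \EE A$. Since $\norm{\EE A}\le (\EE\norm{A}^2)^{1/2}\le\sigma$, convexity gives $\EE\norm{\xi}^m\le 2^{m}\EE\norm{A}^m$. The centered variables therefore satisfy the same moment condition with constants $2\Lambda$ and $2\sigma$. Absorbing these factors of $2$ is what yields the constant $4\sqrt2$.

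\textbf{Step 2 (mean of the norm).} The Hilbert structure is used here. Orthogonality of independent centered summands gives
\[
\EE\norm{S_n}^2 = n\,\EE\norm{\xi}^2 \le 4n\sigma^2 .
\]
By Jensen, $\EE\norm{S_n}\le 2\sqrt{n}\,\sigma$.

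\textbf{Step 3 (deviation of the norm).} Consider the martingale $\norm{S_k}$, or use Pinelis' approach with the smooth function $\cosh\norm{\cdot}$, which is $2$-smooth in a Hilbert space. One shows
\[
\EE\cosh(\lambda\norm{S_n})\le \prod_i\bigl(1+\EE(e^{\lambda\norm{\xi_i}}-1-\lambda\norm{\xi_i})\bigr).
\]
The Bernstein moment condition bounds each factor by $\exp\!\big(\tfrac{\lambda^2\sigma'^2}{2(1-\lambda\Lambda')}\big)$. Markov's inequality and optimization over $\lambda$ then give, with probability $1-\delta$,
\[
\norm{S_n}/n\lesssim \sigma'\sqrt{\log(2/\delta)/n}+\Lambda'\log(2/\delta)/n .
\]
Finally, use $\log(2/\delta)\ge\sqrt{\log(2/\delta)}$ for $\delta\le1$, together with $a+b\le\sqrt2\sqrt{a^2+b^2}$, to put the bound in the stated form $\frac{4\sqrt2}{\sqrt n}\log\frac2\delta\sqrt{\sigma^2+\Lambda^2/n}$.

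The main obstacle is Step 3, namely obtaining a dimension-free exponential bound for the norm. The key input is the Pinelis inequality $\EE\cosh\norm{S_k+\xi}\le(1+\EE(e^{\norm{\xi}}-1-\norm{\xi}))\cosh\norm{S_k}$. It is proved by a second-order Taylor expansion along $t\mapsto\cosh\norm{S_k+t\xi}$, using that the Hessian of $\cosh\norm{\cdot}$ is bounded by $\cosh\norm{\cdot}$ in operator norm in a Hilbert space. Tracking the constants exactly would follow \cite{caponnetto2007}.
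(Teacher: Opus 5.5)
Your proposal is correct and matches the paper. The paper gives no proof: it simply invokes the Pinelis--Sakhanenko inequality via \cite{caponnetto2007}, and your centering step combined with Pinelis' $\cosh\norm{\cdot}$ bound and Bernstein optimization is the standard argument behind that citation (Step 2 is not actually needed for it). One small fix: $\log(2/\delta)\ge\sqrt{\log(2/\delta)}$ holds only for $\delta\le 2/e$, not for all $\delta\le 1$. This is harmless, because tracking constants gives $2\sqrt{2}\,\sigma\sqrt{\log(2/\delta)/n}+4\Lambda\log(2/\delta)/n$, which lies below the stated right-hand side whenever $\log(2/\delta)\ge 1/2$, and hence for every $\delta\le 1$.
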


\noindent We also recall that a version of the non-commutative Bernstein inequality for bounded matrices. See for instance \cite{Tropp2015}.

\begin{proposition}[Matrix Bernstein]\label{prop:con_ineq_rectangular}
Let $Z_i$, $1 \leq i \leq n$ be independent random matrices with common dimension $d_1 \times d_2$. Assume that $\EE Z_i = 0$ and $\|Z_i\| \leq c$ for each index $i$. Define the sum
$$\bar{Z} = \frac{1}{n}\sum_{i=1}^n Z_i.$$
Let $\sigma^2_Z$ be the matrix variance statistic:
$$
\sigma^2_Z = \max\left\{\left\|\frac{1}{n}\sum_{i=1}^n \EE[Z_i Z_i^*]\right\|, \left\|\frac{1}{n}\sum_{i=1}^n \EE[Z_i^* Z_i]\right\|\right\}.$$
Then with probability at least $1-\delta$:
$$\|\bar{Z}\| \leq \sqrt{2\sigma^2_Z \frac{\mathcal{L}(\delta)}{n}} + \frac{c}{3} \frac{\mathcal{L}(\delta)}{n},$$
where $\mathcal{L}(\delta) = \log\frac{d_1 + d_2}{\delta}$.

Furthermore, the expectation satisfies:
$$\EE\|\bar{Z}\| \leq \sqrt{2\sigma^2_Z \frac{\log(d_1 + d_2)}{n}} + \frac{c}{3}\frac{\log(d_1 + d_2)}{n}.$$
\end{proposition}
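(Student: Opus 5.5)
The proposition is the standard rectangular matrix Bernstein inequality (Tropp, 2015). My plan is to reduce it to the Hermitian case by the self-adjoint dilation, then run the matrix Laplace-transform method with Lieb's concavity theorem. Throughout I work with the unnormalized sum $S=\sum_{i=1}^n Z_i=n\bar Z$. Set $v:=\max\{\|\sum_i\EE Z_iZ_i^*\|,\|\sum_i\EE Z_i^*Z_i\|\}=n\sigma_Z^2$. The target is
\[
\|S\|\le\sqrt{2v\mathcal L(\delta)}+\tfrac{c}{3}\mathcal L(\delta),
\]
and dividing by $n$ gives exactly the displayed normalized bound.

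\textbf{Step 1: dilation.} Define the Hermitian $(d_1+d_2)\times(d_1+d_2)$ matrices
\[
Y_i=\begin{pmatrix}0&Z_i\\ Z_i^*&0\end{pmatrix}.
\]
These satisfy $\EE Y_i=0$, $\lambda_{\max}(Y_i)\le\|Y_i\|=\|Z_i\|\le c$, and $\lambda_{\max}(\sum_iY_i)=\|S\|$. Moreover $Y_i^2=\mathrm{diag}(Z_iZ_i^*,Z_i^*Z_i)$, so $\|\sum_i\EE Y_i^2\|=v$. It therefore suffices to prove a one-sided bound on $\lambda_{\max}$ for sums of independent, centered Hermitian matrices in dimension $D=d_1+d_2$.

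\textbf{Step 2: matrix Chernoff bound and mgf control.} For $\theta>0$, Markov's inequality gives
\[
\PP(\lambda_{\max}(\textstyle\sum_i Y_i)\ge t)\le e^{-\theta t}\,\EE\,\tr\exp(\theta\textstyle\sum_iY_i).
\]
The main obstacle is that the matrix exponential of a sum does not factor. I would handle it with Lieb's theorem: the map $A\mapsto\tr\exp(H+\log A)$ is concave on positive definite matrices. Conditioning on one summand at a time and applying Jensen's inequality yields the subadditivity of matrix cumulant generating functions,
\[
\EE\,\tr\exp(\theta\textstyle\sum_iY_i)\le\tr\exp\big(\textstyle\sum_i\log\EE e^{\theta Y_i}\big).
\]
For each factor I use the scalar inequality $e^{x}\le1+x+x^2 g(\theta c)$ for $x\le \theta c$, where $g(u)=(e^u-u-1)/u^2$, transferred to matrices. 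Combined with $\EE Y_i=0$ and $\log(I+A)\preceq A$, this gives $\log\EE e^{\theta Y_i}\preceq g(\theta c)\theta^2\EE Y_i^2$. Monotonicity of the trace exponential then yields the bound
\[
D\exp\big(g(\theta c)\theta^2 v\big).
\]

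\textbf{Step 3: optimize and invert.} Bound $g(u)\le\frac{1}{2(1-u/3)}$ for $u<3$ and choose $\theta=t/(v+ct/3)$. This gives the tail bound
\[
\PP(\|S\|\ge t)\le D\exp\Big(-\frac{t^2/2}{v+ct/3}\Big).
\]
Setting the right-hand side equal to $\delta$ requires solving the quadratic $t^2/2=\mathcal L(\delta)(v+ct/3)$. Using $\sqrt{a+b}\le\sqrt a+\sqrt b$ on its root gives $t\le\sqrt{2v\mathcal L(\delta)}+\tfrac{2c}{3}\mathcal L(\delta)$. Obtaining the constant $\tfrac{c}{3}$ as stated requires the slightly sharper inversion used by Tropp; otherwise one accepts an absolute-constant change in the second term, which is harmless for the later applications.

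\textbf{Step 4: expectation bound.} I would not integrate the tail. Instead, use $\EE\lambda_{\max}(\sum_i Y_i)\le\frac1\theta\log\EE\,\tr e^{\theta\sum_iY_i}$, which follows from Jensen's inequality and $e^{\theta\lambda_{\max}(A)}\le\tr e^{\theta A}$. Inserting Step 2 gives
\[
\EE\|S\|\le\inf_{0<\theta<3/c}\Big(\frac{\log D}{\theta}+\frac{\theta v}{2(1-\theta c/3)}\Big).
\]
Optimizing over $\theta$ yields $\sqrt{2v\log D}+\tfrac c3\log D$, and dividing by $n$ gives the claimed expectation bound.
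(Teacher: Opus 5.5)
The paper gives no proof of this proposition; it quotes it from Tropp's monograph. Your argument (self-adjoint dilation, Lieb subadditivity of the matrix cgf, Bernstein mgf bound) is the standard proof from that source. Steps 1, 2 and 4 are correct, including the expectation bound with $\frac{c}{3}\log(d_1+d_2)$.

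The one genuine gap is the constant in the high-probability bound. As written you establish it only with $\frac{2c}{3}$. Your remark that a ``slightly sharper inversion'' recovers $\frac{c}{3}$ is not right: no inversion of the tail bound $(d_1+d_2)\exp\bigl(-\frac{t^2/2}{v+ct/3}\bigr)$ can give it. At $t_0=\sqrt{2v\mathcal L}+\frac{c}{3}\mathcal L$ one has $2\mathcal L(v+ct_0/3)-t_0^2=\frac{c^2\mathcal L^2}{9}>0$, so that tail bound evaluated at $t_0$ strictly exceeds $\delta$. Equivalently, the exact root $\frac{c\mathcal L}{3}+\sqrt{\frac{c^2\mathcal L^2}{9}+2v\mathcal L}$ of your quadratic is larger than $t_0$. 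The loss happens earlier, in the non-optimal choice $\theta=t/(v+ct/3)$.

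The fix is already contained in your Step 4. Skip the closed-form tail. For each fixed $\theta\in(0,3/c)$, use Step 2 and Markov to get $\PP(\|S\|\ge t_\theta)\le\delta$ with
\[
t_\theta=\frac{\mathcal L}{\theta}+\frac{\theta v}{2(1-\theta c/3)},\qquad \mathcal L=\log\frac{d_1+d_2}{\delta}.
\]
Minimizing this is the same calculus as in Step 4, with $\log(d_1+d_2)$ replaced by $\mathcal L$. Take the deterministic choice $\theta^\star=\sqrt{\mathcal L}/\bigl(\sqrt{v/2}+\frac{c}{3}\sqrt{\mathcal L}\bigr)$, which lies in $(0,3/c)$ whenever $v>0$; if $v=0$ then $Z_i=0$ a.s. This gives $t_{\theta^\star}=\sqrt{2v\mathcal L}+\frac{c}{3}\mathcal L$ exactly, and dividing by $n$ yields the stated inequality. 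You are right that the weaker constant would be harmless for the later applications. But the proposition as stated asserts $\frac{c}{3}$, so this step is needed to prove it.
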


\paragraph{Key concentration results.} Our proofs of the statistical guarantees rely on the following concentration results.

\begin{proposition}
\label{prop:bound_leftright}
Let $\esssup_{x'\sim\bim} |\partial_k \NNek_{j}(x')|\leq \cst<\infty$, for all $k=0,\ldots,d$ and $0\leq j\leq m$, with the convention $\partial_0 \NNe = \NNe$, let $\norm{V}_\infty<\infty$, and denote $c_\param:=2(m+1)\cst^2(1+d/(\shift\beta))e^{\beta\norm{\bias}_\infty}$. Given $\delta>0$, with probability in the i.i.d. draw of $(x_i')_{i=1}^n$ from $\bim$, it holds that
\begin{equation}\label{eq:bound_leftright}
\PP\left\{ \norm{ \Wxt^{-1/2} (\EWxt - \Wxt) \Wxt^{-1/2}  } \vee \norm{ \Wxt^{-1/2} (\ECxt - \Cxt) \Wxt^{-1/2}  } \leq \Errwhp \right\} 
\geq 1-\delta,    
\end{equation}
where 
\begin{equation}\label{eq:reg_eta1}
\Errwhp = \frac{2(c_\param{+}1)}{3n}\ln\frac{2m+2}{\delta} +\sqrt{\frac{2(c_\param{-}1)}{n}\ln\frac{2m+2}{\delta}}.
\end{equation}

Moreover, 
\begin{equation}\label{eq:bound_simetric}
\PP\left\{ \norm{ \Wxt^{1/2} \EWxt^{-1} \Wxt^{1/2}  } \leq \frac{1}{1-\Errwhp} \right\} 
\geq 1-\delta.
\end{equation}
\end{proposition}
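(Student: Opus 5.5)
The plan is to write both empirical matrices as averages of i.i.d. rank-structured matrices, whiten them by $\Wxt^{-1/2}$, and apply the Matrix Bernstein inequality (Proposition~\ref{prop:con_ineq_rectangular}) with $d_1=d_2=m+1$. This choice gives the factor $\ln\frac{2m+2}{\delta}$.

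Concretely, set
\[
\xi_i := \Wxt^{-1/2}\wx(x_i')\wx(x_i')^\top\Wxt^{-1/2}, \qquad \zeta_i := \Wxt^{-1/2}\bw(x_i')\NNe(x_i')\NNe(x_i')^\top\Wxt^{-1/2}.
\]
These are positive semidefinite with $\EE\xi_i=\MX{I}$ and $\EE\zeta_i=\Wxt^{-1/2}\Cxt\Wxt^{-1/2}$. We then take $Z_i=\xi_i-\MX{I}$, and analogously $Z_i=\zeta_i-\EE\zeta_i$.

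\textbf{Step 1 (uniform bound).} Since $\xi_i\succeq 0$,
\[
\norm{\xi_i}=\tr\xi_i=\tr\big(\Wxt^{-1}\wx(x_i')\wx(x_i')^\top\big).
\]
I would bound this by the ``effective dimension''-type quantity $\esssup \tr(\Wxt^{-1}\wx\wx^\top)$. Bounding crudely by $\norm{\Wxt^{-1}}\,\norm{\wx}_F^2$ would require a lower bound on $\Wxt$, which is not given. The alternative is to absorb the normalization: by \ref{eq:BE}, $\norm{\wx(x')}_F^2\le \bw(x')(m+1)\cst^2(1+d/(\shift\beta))$, and $\bw/\overline{\bw}\le e^{\beta\norm{\bias}_\infty}$ after normalizing $\bw$ by its mean. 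I expect the intended reading is that $\Wxt$ is close to $\overline{\bw}\MX{I}$ (the orthonormality regularization), so that $\norm{\xi_i}\le c_\param$ up to the factor $2$ in $c_\param$. This step is the main obstacle: the constant $c_\param$ contains no $\lambda_{\min}(\Wxt)$. I would first try to interpret the $\Wxt$ in the whitening as normalized so that $\tr(\Wxt^{-1}\wx\wx^\top)\le c_\param$ holds directly. Failing that, I would state the bound with $\Wxt\succeq\frac12\overline{\bw}\MX{I}$ as a standing assumption, which explains the factor $2$. Given this bound, $\norm{Z_i}\le \max(c_\param,1)\le c_\param+1$.

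\textbf{Step 2 (variance).} Using $\xi_i\succeq0$ gives $\xi_i^2\preceq\norm{\xi_i}\,\xi_i\preceq c_\param\,\xi_i$. Hence
\[
\EE Z_i^2=\EE\xi_i^2-\MX{I}\preceq (c_\param-1)\MX{I}, \qquad \sigma_Z^2\le c_\param-1.
\]
The same argument works for $\zeta_i$: its trace is bounded by that of $\xi_i$ because $\bw\NNe\NNe^\top\preceq\wx\wx^\top$, and $\EE\zeta_i\preceq\MX{I}$ since $\Cxt\preceq\Wxt$. Substituting into Proposition~\ref{prop:con_ineq_rectangular} yields $\sqrt{2(c_\param-1)\ln\frac{2m+2}{\delta}/n}+\frac{c_\param+1}{3n}\ln\frac{2m+2}{\delta}$ for each matrix. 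A union bound over the two events (replacing $\delta$ by $\delta/2$, which is absorbed in the stated factor $\frac{2}{3}$ or $\ln$) gives \eqref{eq:bound_leftright}.

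\textbf{Step 3 (inverse bound).} On the same event, let $E:=\Wxt^{-1/2}(\EWxt-\Wxt)\Wxt^{-1/2}$, so that $\Wxt^{-1/2}\EWxt\Wxt^{-1/2}=\MX{I}+E$ with $\norm{E}\le\Errwhp<1$. Then $\MX{I}+E\succeq(1-\Errwhp)\MX{I}$ is invertible, and
\[
\Wxt^{1/2}\EWxt^{-1}\Wxt^{1/2}=(\MX{I}+E)^{-1},
\]
whose norm is at most $1/(1-\Errwhp)$ by a Neumann series argument. This gives \eqref{eq:bound_simetric}.
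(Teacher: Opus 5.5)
Your proposal is correct and follows the paper's proof essentially step for step: whitened PSD matrices $\Wxt^{-1/2}\wx\wx^\top\Wxt^{-1/2}$ with mean $\MX{I}$, matrix Bernstein with $d_1=d_2=m+1$, the variance bound from $\xi_i^2\preceq c_\param\,\xi_i$, the domination $\bw\NNe\NNe^\top\preceq\wx\wx^\top$ plus a union bound for $\ECxt$, and the Neumann-series bound for the inverse. The conditioning hypothesis you flag in Step~1 is also tacitly used by the paper, which bounds $\norm{A}_\infty\le c_\param$ only ``whenever'' $\Wxt$ is close to $\MX{I}$. Assuming $\Wxt\succeq\frac12\MX{I}$ gives $c_\param$ directly from $\bw\le e^{\beta\norm{\bias}_\infty}$, whereas your normalized variant $\Wxt\succeq\frac12\overline{\bw}\MX{I}$ only yields $\bw/\overline{\bw}\le e^{2\beta\norm{\bias}_\infty}$ in general, since $\overline{\bw}\ge e^{-\beta\norm{\bias}_\infty}$.
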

\begin{proof}
The idea is to apply Proposition \ref{prop:con_ineq_rectangular} for operator $A(x') = [\Wxt^{-1/2} w_\param(x')] \otimes [\Wxt^{-1/2}w_\param(x')]$, where
\begin{equation}\label{eq:w_features}
w_\param(x')=\Big[\,\NNe(x')\,\Big\vert\, \tfrac{1}{\sqrt{\shift\beta}}\partial_1\NNe(x')\,\Big\vert\,\ldots\,\Big\vert\tfrac{1}{\sqrt{\shift\beta}}\partial_d\NNe(x')\,\Big]e^{\beta\bias(x')/2}\in\R^{(m+1)\times(d+1)}   
\end{equation} 
First, note that we have $\EE_{x'\sim\bim}[A(x')] = I$ and that under the assumptions on boundedness we have that $\norm{A}_{\infty}:=\esssup_{x'\sim\bim}\norm{A(x')}\leq 2(m+1)\cst^2(1+d/(\shift\beta))e^{\beta\norm{\bias}_\infty}=c_\param$, whenever $\norm{\Wxt-I}^2<1/2$.  Next, since $A(x')$ is symmetric positive semidefinite, we have $A(x')^2 \preceq \|A\|_\infty\,A(x')$.
Consequently,
\[
(A(x') - I)^2
= A(x')^2 - 2A(x') + I
\preceq (\|A\|_\infty - 2)A(x') + I.
\]
Then, using $\EE[A(x')] = I$ yields
\[
\EE\big[(A(x') - I)^2\big]
\preceq (\|A\|_\infty - 1)\,I.
\]
Hence, the matrix variance statistic satisfies
\[
\sigma_Z^2
= \|\EE[Z_i^2]\|
\le \|A\|_\infty - 1.
\]
Applying then the Proposition~\ref{prop:con_ineq_rectangular} with $d_1=d_2=\trunc+1$, we obtain that, with probability at least $1-\delta$,
\[
\left\|
\frac1n\sum_{i=1}^n A(x'_i) - I
\right\|
\le
\sqrt{
\frac{2(c_\param - 1)}{n}
\log\Big(\frac{2(\trunc+1)}{\delta}\Big)}
+
\frac{c_\param + 1}{3n}\,
\log\Big(\frac{2(\trunc+1)}{\delta}\Big).
\]

To obtain the same for $\ECxt-\Cxt$, it suffices to observe that $\NNe(\cdot)\otimes \NNe(\cdot)\preceq w_\param(\cdot)\otimes w_\param(\cdot)$, and, hence \eqref{eq:bound_leftright} follows from union bound. 

To complete the proof, observe that
\[
\norm{I - \Wxt^{-1/2} \EWxt \Wxt^{-1/2}} = \norm{\Wxt^{-1/2} (\Wxt-\EWxt) \Wxt^{-1/2}}\leq \Errwhp,
\]
and, hence for $\Errwhp$ smaller than one we obtain 
\[
\norm{ \Wxt^{1/2} \EWxt^{-1} \Wxt^{1/2}} =  \norm{(\Wxt^{-1/2} \EWxt \Wxt^{-1/2})^{-1}}\leq \frac{1}{1-\norm{I - \Wxt^{-1/2} \EWxt \Wxt^{-1/2}}}.
\]
\end{proof}

\begin{proposition}
\label{prop:bound_xi}
Given $x'\in\R^d$, let $\xi(x') = \big[\NNe(x')(x')^\top +\tfrac{1}{\shift\beta}J_\param(x')\big]e^{\beta\bias(x')}\in\R^{(\truncbis)\times d}$, where the Jacobian of features is denoted by $J_\param(x')=[\partial_1 \NNe(x')\,\vert\,\ldots\,\vert \partial_d\NNe(x')]$. Let $\esssup_{x'\sim\bim} |\partial_k \NNek_{j}(x')|\leq \cst<\infty$, for all $k\in\{0,\ldots,d\}$ and $j\in\{0,\ldots, m\}$, with the convention $\partial_0 \NNe = \NNe$, let $\norm{V}_\infty<\infty$ and $\norm{\Wxt/\overline{\bw}-I}<1/2$. 
%\helene{$\norm{\Wxt/\overline{\bw}-I}<1/2$??}
Assume that the potential $U'$ is sufficiently confining such that $x'\sim\bim$ is $K$-sub-Gaussian with $K\in (0,\infty)$. Given $\delta>0$, with probability in the i.i.d. draw of $(x_i')_{i=1}^n$ from $\bim$, it holds that
\begin{equation}\label{eq:bound_xi}
\PP\left\{ \norm{ \Wxt^{-1/2} \Big[\tfrac{1}{n}\sum_{i=1}^n\xi(x'_i) - \EE_{x'\sim\bim}\xi(x')\Big] }_{\R^{\trunc+1}\to\R^d} \leq  \ErrwhpXi / \sqrt{\overline{\bw}}\right\} 
\geq 1-\delta,    
\end{equation}
where 
\begin{align}
\label{eq:reg_eta_xi}
 \ErrwhpXi   
 \nonumber
 &{=}C \cst e^{\beta \norm{V}_\infty}     \max\Big\{   \sqrt{\frac{\min(m+1,d)}{\shift^2\beta^2}}  \bigg(  \sqrt{ \frac{\log((m + d+1)/\delta)}{n} }  +  \frac{\log((m + d+1)/\delta)}{n}  \bigg)  , \\
    &\hspace{7cm} \sqrt{\frac{m+1}{n}}    \sqrt{ \EE[\norm{x'}_2^{2}] + \frac{K d}{n} }\, \log\frac{2}{\delta}  \Big\},
\end{align}
for a large enough absolute constant $C>0$.
\end{proposition}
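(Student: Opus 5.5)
The plan is to split $\xi$ into its two natural pieces, $\xi(x')=\xi_1(x')+\xi_2(x')$ with $\xi_1(x')=\NNe(x')(x')^\top e^{\beta\bias(x')}$ (the unbounded, position-dependent part) and $\xi_2(x')=\tfrac{1}{\shift\beta}J_\param(x')e^{\beta\bias(x')}$ (the bounded Jacobian part). By the triangle inequality it suffices to control $\norm{\Wxt^{-1/2}(\tfrac1n\sum_i\xi_k(x_i')-\EE\xi_k)}$ for $k=1,2$ separately, each at confidence $1-\delta/2$, and take the maximum of the two resulting rates, which gives the $\max\{\cdot,\cdot\}$ structure in $\ErrwhpXi$. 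Throughout I use $\norm{\Wxt/\overline{\bw}-I}<1/2$ to replace $\Wxt^{-1/2}$ by $\overline{\bw}^{-1/2}$ times an operator of norm at most $\sqrt2$, so it suffices to bound $\norm{\Wxt^{-1/2}}\le\sqrt{2/\overline{\bw}}$ and then concentrate the centered sums without the whitening. This is where the $1/\sqrt{\overline{\bw}}$ factor in \eqref{eq:bound_xi} comes from.

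For the Jacobian term I apply the matrix Bernstein inequality, Proposition~\ref{prop:con_ineq_rectangular}, with $d_1=m+1$ and $d_2=d$ to $Z_i=\xi_2(x_i')-\EE\xi_2$. Under \ref{eq:BE}, each entry of $J_\param$ is bounded by $\cst$, so $\norm{J_\param}\le\cst\sqrt{(m+1)d}$. A sharper handle on the variance comes from $\norm{J_\param J_\param^\top}$ and $\norm{J_\param^\top J_\param}$, whose expectations I bound by $\cst^2\min(m+1,d)$ up to constants, using the rank of $J_\param$ together with the entrywise bound; I expect the $\min(m+1,d)$ factor to come from here. Multiplying by $e^{2\beta\norm{\bias}_\infty}/(\shift\beta)^2$ and inserting into Proposition~\ref{prop:con_ineq_rectangular} gives the first branch, $\cst e^{\beta\norm{\bias}_\infty}\sqrt{\min(m+1,d)}/(\shift\beta)\,(\sqrt{\log((m+d+1)/\delta)/n}+\log((m+d+1)/\delta)/n)$, after absorbing the operator-norm constant $c$ into $C$.

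For the position term, $\xi_1$ is unbounded, so matrix Bernstein does not apply directly. Instead I view $\xi_1(x')$ as an element of the Hilbert space of $(m+1)\times d$ matrices with the Frobenius norm, which dominates the operator norm, and apply Proposition~\ref{prop:con_ineq_ps}. This requires verifying the moment condition $\EE\norm{A}^q\le\tfrac12 q!\Lambda^{q-2}\sigma^2$ for the centered variable. I will use $\norm{\xi_1(x')}_F\le\norm{\NNe(x')}\,\norm{x'}e^{\beta\norm{\bias}_\infty}\le\cst\sqrt{m+1}\,e^{\beta\norm{\bias}_\infty}\norm{x'}$. The $K$-sub-Gaussianity of $x'\sim\bim$ then supplies the moment growth: $\sigma^2\asymp\cst^2(m+1)e^{2\beta\norm{\bias}_\infty}\EE\norm{x'}^2$ and $\Lambda\asymp\cst\sqrt{(m+1)Kd}\,e^{\beta\norm{\bias}_\infty}$, where the dimension enters through the sub-Gaussian norm of $\norm{x'}$. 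Proposition~\ref{prop:con_ineq_ps} then yields $\sqrt{(m+1)/n}\,\sqrt{\EE\norm{x'}^2+Kd/n}\,\log(2/\delta)$, matching the second branch.

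The main obstacle is the moment verification for $\norm{x'}^q$ under sub-Gaussianity. One has to bound $\EE\norm{x'}^q$ in the Bernstein form $q!\,\Lambda^{q-2}\sigma^2$ with $\Lambda^2\sim Kd$ while keeping $\EE\norm{x'}^2$ as the variance proxy rather than the cruder $Kd$, and centering adds further constants. I would first try the standard route: write $\norm{x'}\le\EE\norm{x'}+(\norm{x'}-\EE\norm{x'})$, use concentration of the norm of a sub-Gaussian vector around its mean, and then apply the usual conversion from $\psi_2$ tails to factorial moments. Any slack from this step is absorbed into the absolute constant $C$.
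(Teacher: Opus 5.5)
Your architecture matches the paper's. You split $\xi$ into a Jacobian part and a position part, pull out $\norm{\Wxt^{-1/2}}\le\sqrt{2/\overline{\bw}}$, apply rectangular matrix Bernstein to the first and Pinelis--Sakhanenko in Frobenius norm to the second, and take a union bound. Two steps, however, do not go through as you describe them, and the paper's proof has the same two defects at the same places.

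\textbf{The factor $\min(\trunc+1,d)$.} The entrywise bound by $\cst$ plus the rank gives only $\norm{J_\param}^2\le\norm{J_\param}_F^2\le\cst^2(\trunc+1)d$. Rank enters as $\norm{J_\param}_F^2\le\mathrm{rank}(J_\param)\norm{J_\param}^2$, which is the wrong direction. The matrix $\cst\,\mathbf{1}_{\trunc+1}\mathbf{1}_d^\top$ has $\norm{JJ^\top}=\norm{J^\top J}=\cst^2(\trunc+1)d$, so your variance proxy $\cst^2\min(\trunc+1,d)$ is not justified. Independently, the almost-sure constant you computed, $c\propto\cst\sqrt{(\trunc+1)d}\,e^{\beta\norm{\bias}_\infty}/(\shift\beta)$, multiplies the $\log/n$ term and cannot be absorbed into an absolute $C$. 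The paper gets the $\min$ by asserting $\esssup_{x'\sim\bim}\norm{J_\param(x')}\le\cst\sqrt{\min(\trunc+1,d)}$, which is the same false inequality. What you can actually get is one of two things. The first is $\sqrt{(\trunc+1)d}$ in place of $\sqrt{\min(\trunc+1,d)}$. The second uses $\tfrac{1}{\shift\beta}\EE_{x'\sim\bim}[\bw(x')J_\param(x')J_\param(x')^\top]\preceq\Wxt$: this gives a whitened left variance $\le e^{\beta\norm{\bias}_\infty}/(\shift\beta)$ and, via the trace, a right variance at most $\trunc+1$ times that. That yields a branch of a different shape from the stated one.

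\textbf{The moment condition.} Your route cannot produce $\sigma^2\asymp\EE\norm{x'}^2$ together with $\Lambda^2\asymp Kd$. For a general $K$-sub-Gaussian vector, $\norm{x'}-\EE\norm{x'}$ is controlled only at scale $K\sqrt d$, so you get $\EE\norm{x'}^q\lesssim(\EE\norm{x'}^2)^{q/2}+(CK\sqrt d)^q q^{q/2}$, which fits the Bernstein form only with $\sigma^2\gtrsim K^2d$. The slack is the ratio $K^2d/\EE\norm{x'}^2$, not an absolute constant. No argument can remove it, because the bound itself fails.

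Here is the counterexample. Take $\trunc=0$, $d=1$, $\NNek_0\equiv1$, $\bias\equiv0$, so that $\Wxt=\overline{\bw}=1$, $J_\param\equiv0$ and $\xi(x')=x'$. Take $\shift\beta\ge n$, so the first branch is negligible. Let $\bim$ be a narrow smoothing of the law of $a\epsilon B$, where $\epsilon$ is a Rademacher sign, $B\sim\mathrm{Bernoulli}(p)$, $p=1/(n\log n)$ and $a=\log n$. Then $\EE x'^2=pa^2$, and $K^2=2a^2/\log(1/p)$ is a valid sub-Gaussian constant. With probability $\approx1/\log n$ exactly one sample is nonzero, and then the deviation equals $a/n$. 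With $\delta=1/(4\log n)$, the second branch is $O\big(C\sqrt{\log n}\,\log\log n/n\big)$ under either reading ($Kd$ or $K^2d$). For large $n$ this is below $a/n=\log n/n$.

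The paper crosses this point via $(\EE\norm{x'}^4)^{1/2}\lesssim\EE\norm{x'}^2$, which fails on this example since $(\EE x'^4)^{1/2}=\EE x'^2/\sqrt p$. What your route does prove is the second branch with $\EE\norm{x'}^2$ replaced by $K^2d$, i.e.\ with $\sigma^2\asymp\Lambda^2\asymp\cst^2(\trunc+1)e^{2\beta\norm{\bias}_\infty}K^2d$.
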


\begin{proof}
Recalling $\xi(x') = \big[\NNe(x')(x')^\top +\tfrac{1}{\shift\beta}J_\param(x')\big]e^{\beta\bias(x')}\in\R^{(\truncbis)\times d}$, where the Jacobian of features is denoted by $J_\param(x')=[\partial_1 \NNe(x')\,\vert\,\ldots\,\vert \partial_d\NNe(x')]$. For brevity we set $\norm{\cdot } = \norm{ \cdot }_{\R^{\trunc+1}\to\R^d}$ and $\norm{\cdot}_2 = \norm{\cdot}_{l_2(\R^{\trunc+1})}$. We have
$$
\Wxt^{-1/2}\xi(x') = \underbrace{e^{\beta\bias(x')}\Wxt^{-1/2}\NNe(x')(x')^\top}_{=:\bar{A}}  +\underbrace{e^{\beta\bias(x')} \tfrac{1}{\shift\beta} \Wxt^{-1/2} J_\param(x')}_{=:\tilde{A}}.
$$

We note that for any $t_1,t_2>0$ with $t = t_1 + t_2$, we have
$$
\PP\left( \norm{\sum_i A_i} >t \right)  \leq \PP\left( \norm{\sum_i\bar{A}_i} >t_1 \right) + \PP\left( \norm{\sum_i\tilde{A}_i} >t_2 \right).
$$
We will control $\norm{\sum_i\bar{A}_i} $ and $\norm{\sum_i\tilde{A}_i}$ separately.

\noindent
\textbf{Uniform bound on $\norm{\sum_i\tilde{A}_i}$.}~
Note that under the boundedness assumption on the entries of $\NNe(x')$ and its partial derivatives we have 
$\esssup_{x'\sim\bim}\norm{J_\param(x') } \leq \cst \sqrt{\min(m+1,d)}$. Hence we get the following $\bim$-almost surely upper bound on the Frobenius norm of $\tilde{A}$:
\begin{align}
\label{eq:opboundA2}
    \norm{\tilde{A}} &=  \tfrac{e^{\beta\bias(x')}}{\shift\beta}\norm{\Wxt^{-1/2}J_\param(x')}\leq \tfrac{e^{\beta\bias(x')}}{\shift\beta}\norm{\Wxt^{-1/2}}\norm{J_\param(x')}\leq\cst \sqrt{\frac{2\min(m+1,d)}{\overline{\bw}}}e^{\beta \norm{V}_\infty},
\end{align}
where we have used that $\norm{\Wxt/\overline{\bw}-I}<1/2$, implying that $\norm{\Wxt^{-1/2}} \leq \sqrt 2/\sqrt{\overline{\bw}}$.
Indeed, using that $\|A^{-1/2}\|=1/\sqrt{\lambda_{\min}(A)}\leq 1/\sqrt{1-\|I-A\|}$ with $A=\Wxt/\overline{\bw}$, we get
\begin{equation*}
\|\Wxt^{-1/2}\|
\le\frac{1}{\sqrt{\overline{\bw}\left(1-\left\|\Wxt/\overline{\bw}-I\right\|\right)}}\leq \frac{1}{\sqrt{\overline{\bw}\left(1-1/2\right)}}=\sqrt 2/\sqrt{\overline{\bw}}.
\end{equation*}
\color{black}

We define the random matrix 
$$Z_i :=  \tfrac{e^{\beta\bias(x_i')} }{\shift\beta} \Wxt^{-1/2} J_\param(x_i') - \EE\left[ \tfrac{e^{\beta\bias(x')} }{\shift\beta} \Wxt^{-1/2} J_\param(x')\right] \in \R^{(m+1) \times d}.
$$
Then $\EE[Z_i] = 0$ and in view of \eqref{eq:opboundA2}, we have $\norm{Z_i} \leq  2 \sqrt{2} \cst \sqrt{\min(m+1,d)/\overline{\bw}} \,\frac{e^{\beta \norm{V}_\infty}}{\shift\beta} =: c$, $\bim$-a.s.

Since $Z_i$ is not square, we need to use the non-commutative Bernstein matrix concentration inequality for rectangular matrices.

\noindent
\textbf{Left covariance operator.}
We have $Z_i \in \R^{(m+1) \times d}$, so $Z_i Z_i^* \in \R^{(m+1) \times (m+1)}$. Also
\begin{multline*}
    \EE[Z_i Z_i^*]  = \EE\left[   \tfrac{e^{2\beta\bias(x_i')} }{\shift^2\beta^2} \Wxt^{-1/2} J_\param(x_i')  J_\param(x_i')^*  \Wxt^{-1/2}\right]\\ - \EE\left[   \tfrac{e^{\beta\bias(x_i')} }{\shift\beta} \Wxt^{-1/2} J_\param(x_i') \right] \EE\left[   \tfrac{e^{\beta\bias(x_i')} }{\shift\beta}  J_\param(x_i')^*  \Wxt^{-1/2}\right].
\end{multline*}
Hence
\begin{align*}
\EE[Z_i Z_i^*] &\preceq \EE\left[\frac{e^{2\beta\bias(x')}}{\shift^2\beta^2} \Wxt^{-1/2} J_\param(x') J_\param(x')^* (\Wxt^{-1/2})^*\right] \\
&=\frac{e^{2\beta \norm{V}_\infty}}{\shift^2\beta^2} \Wxt^{-1/2} \EE[J_\param(x') J_\param(x')^*] \Wxt^{-1/2}\preceq  \frac{e^{2\beta \norm{V}_\infty}}{\shift\beta} I_{m+1}=:V_L,
\end{align*}
where we used $\frac{1}{\shift\beta}\EE_{x'\in\bim}[J_\param(x') J_\param(x')^*]= \Wxt - \Cxt \preceq \Wxt$ .

\noindent
\textbf{Right covariance operator.}
Similarly, $Z_i^* Z_i \in \R^{d \times d}$ and:
\begin{align}
 \EE[Z_i^* Z_i] &\preceq \EE\left[J_\param(x')^* (\Wxt^{-1/2})^* \frac{e^{2\beta\bias(x')}}{\shift^2\beta^2} \Wxt^{-1/2} J_\param(x')\right] \\
 &= \frac{e^{2\beta \norm{V}_\infty}}{\shift^2\beta^2} \EE[J_\param(x')^* \Wxt^{-1} J_\param(x')]  :=V_R 
% \notag \\
% &\preceq 2 \frac{e^{2\beta \norm{V}_\infty}}{\shift^2\beta^2} \cdot \EE[J_\param(x')^* J_\param(x')] 
% %\preceq 2 \frac{e^{2\beta \norm{V}_\infty}}{\shift^2\beta^2} \cst^2 m\, d \cdot I_d
% :=V_R
.
\end{align}
Note that 
\begin{align*}
\norm{V_L} &= \frac{e^{2\beta \norm{V}_\infty}}{\shift^2\beta^2}, \quad \norm{V_R} \leq 2 \frac{e^{2\beta \norm{V}_\infty}}{\shift^2\beta^2\overline{\bw}} \cst^2 \min(m+1,d)%, \quad\tr(V_L) = \tr(V_R) = \frac{e^{2\beta \norm{V}_\infty}}{\shift^2\beta^2} m
.
\end{align*}
Define $\sigma^2_Z := \max(\norm{V_L} ,\norm{V_R} )$. Since $\tau\geq 1 $, we have $\norm{V_L} \leq \norm{V_R} $, and consequently:
\begin{align}
\sigma^2_Z &= 2 \frac{e^{2\beta \norm{V}_\infty}}{\shift^2\beta^2\overline{\bw}} \cst^2 \min(m+1,d).%, \quad\tr(V_L) = \tr(V_R) = \frac{e^{2\beta \norm{V}_\infty}}{\shift^2\beta^2} m
\end{align}

\noindent
\textbf{Final concentration bound.}
With probability at least $1-\delta$:
\begin{multline}
\left\|\frac{1}{n}\sum_{i=1}^n Z_i\right\| \leq \sqrt{2\sigma^2_Z \frac{\mathcal{L}(\delta)}{n}} + \frac{c}{3} \frac{\mathcal{L}(\delta)}{n} \\
\leq  2 \sqrt{2} \cst \sqrt{\min(m+1,d)} \frac{e^{\beta \norm{V}_\infty}}{\overline{\bw}\shift\beta} \Big(  \sqrt{ \frac{\log((m + d+1)\delta^{-1})}{n} }  +  \frac{\log((m + d +1)\delta^{-1})}{3 n}  \Big)
\end{multline}

\noindent
\textbf{Uniform bound on $\norm{\sum_i\bar{A}_i}$.}~Since $\bar{A}$ is of rank one, we have $\norm{\bar{A}}_F = \norm{\bar{A}}$ and consequently:
\begin{align*}
   \norm{\bar{A}}_F = \norm{\bar{A}} &= \norm{\Wxt^{-1/2}\NNe(x')(x')^\top  e^{\beta\bias(x')}}\\
    &\leq \norm{\Wxt^{-1/2}\NNe(x')(x')^\top }e^{\beta\bias(x')}\\
    &\leq e^{\beta\bias(x')} \norm{\Wxt^{-1/2}\NNe(x')}_2\norm{x'}_2 \\
    &\leq \frac{\sqrt{2}}{\overline{\bw}}e^{\beta\bias(x')} \norm{\NNe(x')}_2\norm{x'}_2,
\end{align*}
where we have used that $\norm{\Wxt^{-1/2}\NNe(x')}_2\leq \norm{\Wxt^{-1/2}} \norm{\NNe(x')}_2$ and $\norm{\Wxt^{-1/2}} \leq \sqrt{2}/\overline{\bw}$.
Note that under the boundedness assumption on the entries of $\NNe(x')$, we have $\esssup_{x'\sim\bim}\norm{\NNe(x')}_2 \leq \cst \sqrt{m+1}$. Hence we get the following $\bim$-almost surely bound:
\begin{align}
\label{eq:bound_A1}
    \norm{\bar{A}}_F
    &\leq \cst\sqrt{\frac{2(m+1)}{\overline{\bw}}}e^{\beta \norm{V}_\infty}  \norm{x'}_2 .
\end{align}

\noindent
\textbf{Moment condition on $\norm{\bar{A}}$.}~
We need to verify that there exist constants $\Lambda > 0$ and $\sigma^2 > 0$ such that for every $p \geq 2$:
\begin{equation}
\label{eq:moment_cond}
\EE[\norm{\bar{A}}_F^p] \leq \frac{1}{2} p! \Lambda^{p-2} \sigma^2.
\end{equation}
From \eqref{eq:bound_A1}, we have
\begin{align}
\EE[\norm{\bar{A}}_F^p] &\leq \EE\left[\left(\cst\sqrt{m+1}e^{\beta \norm{V}_\infty}  \norm{x'}_2  \right)^p\right] = (\cst\sqrt{m+1}e^{\beta \norm{V}_\infty})^p \EE[\norm{x'}_2^p] .
\end{align}

Since $x' \sim \bim$ is $K$-sub-Gaussian, we have for any unit vector $u \in \R^d$, 
$\EE[e^{t\langle u, x' \rangle}] \leq e^{K^2 t^2/2}$, for all $t \in \R$. This implies that $\norm{x'}_2$ is $K\sqrt{d}$-sub-Gaussian. 

Consequently, 
for the moments of $\norm{x'}_2$, we use Cauchy-Schwarz for $p \geq 2$:
\begin{equation}
\label{eq:moment_x_subgauss}
\EE[\norm{x'}_2^p] = \EE[\norm{x'}_2^{2} \cdot \norm{x'}_2^{p-2}] \leq (\EE[\norm{x'}_2^{4}])^{1/2}(\EE[\norm{x'}_2^{2(p-2)}])^{1/2}.
\end{equation}

Since $\norm{x'}_2$ is $K\sqrt{d}$-sub-Gaussian, by equivalence of moments for sub-Gaussian random variables, we have:
$
(\EE[\norm{x'}_2^{4}])^{1/2} \lesssim \EE[\norm{x'}_2^{2}]
$, where the implicit constant is absolute. For higher-order moments with $p \geq 2$, the standard sub-Gaussian moment bound gives for some absolute constant $C > 0$:
\begin{equation}
\EE[\norm{x'}_2^p] \leq C (K\sqrt{d})^p \sqrt{p!}.
\end{equation}

Combining these, there exists an absolute constant $C' > 0$ such that for $p \geq 2$:
\begin{equation}
\EE[\norm{x'}_2^p] \leq C' \EE[\norm{x'}_2^{2}] \cdot (K\sqrt{d})^{p-2} \sqrt{(p-2)!}.
\end{equation}

For the moment condition \eqref{eq:moment_cond}, we substitute:
\begin{align}
\EE[\norm{\bar{A}}_F^p] &\leq  C \Big(\cst\sqrt{\frac{m+1}{\overline{\bw}}}e^{\beta \norm{V}_\infty}\Big)^p \EE[\norm{x'}_2^{2}] \cdot (K\sqrt{d})^{p-2} \sqrt{(p-2)!}.
\end{align}

Hence we observe that the moment condition is satisfied with 
\begin{align}
\Lambda &= C' \cst e^{\beta \norm{V}_\infty} \sqrt{\frac{K(m+1)d}{\overline{\bw}}},\quad
\sigma^2 = C' \cst^2 e^{2\beta \norm{V}_\infty} \frac{(m +1)}{\overline{\bw}} \EE[\norm{x'}_2^{2}],
\end{align}
for an appropriate absolute constant $C' > 0$.

\noindent
\textbf{Step 3: Application of concentration inequality.}
Applying Proposition \ref{prop:con_ineq_ps} with the verified moment condition, we obtain with probability at least $1-\delta$:
\begin{equation}
\left\|\frac{1}{n}\sum_{i=1}^n \bar{A}_i - \EE [\bar{A}] \right\|_F\leq \frac{4\sqrt{2}}{\sqrt{n}} \log\frac{2}{\delta} \sqrt{ \sigma^2 + \frac{\Lambda^2}{n} }.
\end{equation}

Therefore, there exists an absolute constant $C>0$, such that w.p.a.l. $1-\delta$,
\begin{equation}
\left\|\frac{1}{n}\sum_i\bar A_i  - \EE[\bar A]\right\|_F \leq \ErrwhpXi / \sqrt{{\overline{\bw}}} := C \; \cst e^{\beta \norm{V}_\infty}\, \sqrt{\frac{m+1}{{\overline{\bw}}n}}    \sqrt{ \EE[\norm{x'}_2^{2}] + \frac{K d}{n} }\, \log\frac{2}{\delta}.
\end{equation}

\end{proof}

\paragraph{Estimation of $\overline{\bw}$.}~ We derive an elementary concentration inequality for the estimation of the renormalizing constant $\overline{\bw}$.

\begin{lemma}[Relative error bound]
\label{lem:weight_concentration_bernstein}
Let $\bw(x) = e^{\beta \cdot \bias(x)}$ where $\|\bias\|_\infty < \infty$. Let $x_1, \ldots, x_n$ be i.i.d. samples from distribution $\mu$ on $\mathcal{X}$, and define:
$$
\overline{\bw} = \EE_{x \sim \mu}[\bw(x)],\qquad \widehat{\bw} = \frac{1}{n}\sum_{i=1}^n \bw(x_i).
$$
Let $\sigma^2 = \text{Var}(\bw(x))$. Then with probability at least $1-\delta$:
$$
\left|\frac{\widehat{\bw} - \overline{\bw}}{ \overline{\bw}}\right| \leq \frac{\sigma}{ \overline{\bw}}\sqrt{\frac{2\log(2/\delta)}{n}} + \frac{e^{\beta\|\bias\|_\infty}\log(2/\delta)}{3n  \overline{\bw}} = : \Errwhpv.
$$
\end{lemma}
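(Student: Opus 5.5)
This follows from the scalar Bernstein inequality applied to the centered weights, followed by division by $\overline{\bw}$. The variables $Y_i := \bw(x_i) - \overline{\bw}$, $1\le i\le n$, are i.i.d., centered, with variance $\sigma^2$. Since $\|\bias\|_\infty<\infty$, each weight satisfies $0 < e^{-\beta\|\bias\|_\infty}\le \bw(x_i)\le e^{\beta\|\bias\|_\infty}$, and the same bounds hold for $\overline{\bw}$. Hence $|Y_i|\le e^{\beta\|\bias\|_\infty}$ almost surely; the sharper bound $e^{\beta\|\bias\|_\infty}-e^{-\beta\|\bias\|_\infty}$ is not needed. Thus the $Y_i$ are bounded, centered, independent, and have known variance, which is exactly the setting of Bernstein's inequality.

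The key step is to invoke Proposition~\ref{prop:con_ineq_rectangular} with $d_1=d_2=1$, i.e.\ scalar Bernstein. Take $Z_i=Y_i$, $c=e^{\beta\|\bias\|_\infty}$ and $\sigma_Z^2=\sigma^2$. Then $\mathcal L(\delta)=\log(2/\delta)$, and with probability at least $1-\delta$,
\[
|\widehat{\bw}-\overline{\bw}| = \Big|\tfrac1n\textstyle\sum_{i=1}^n Y_i\Big| \le \sqrt{\tfrac{2\sigma^2\log(2/\delta)}{n}} + \tfrac{e^{\beta\|\bias\|_\infty}\log(2/\delta)}{3n}.
\]
Here $\|\bar Z\|$ is the absolute value, so this is a two-sided bound. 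The $1\times1$ case of the matrix statement yields precisely the two-sided $\log(2/\delta)$ form, because the union of the two tails is built into the $d_1+d_2$ factor.

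Dividing by $\overline{\bw}>0$ then gives the claimed $\Errwhpv$. The only point to check is that this constant matches the statement. The matrix inequality is used with the crude range bound $c$, which is what the lemma states. If one instead wanted the centered range, one would bound $|Y_i|\le \max(\bw)-\min(\bw)\le e^{\beta\|\bias\|_\infty}$, which gives the same constant. There is no real obstacle in this argument.

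As a remark for later use, the lemma is applied with $\mu=\bim$. One may also bound $\sigma\le e^{\beta\|\bias\|_\infty}$ and $\overline{\bw}\ge e^{-\beta\|\bias\|_\infty}$, which gives $\Errwhpv\lesssim e^{2\beta\|\bias\|_\infty}\sqrt{\log(2/\delta)/n}$. This is the form in which it presumably enters $\varepsilon_n(\delta)$ in Theorem~\ref{thm:main_stat_1}.
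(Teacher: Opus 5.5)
Your proposal is correct and follows the same route as the paper: apply Bernstein's inequality to the centered weights $\bw(x_i)-\overline{\bw}$, which are bounded by $e^{\beta\|\bias\|_\infty}$ and have variance $\sigma^2$, then divide by $\overline{\bw}$. The only difference is cosmetic: you read off the high-probability form from Proposition~\ref{prop:con_ineq_rectangular} with $d_1=d_2=1$, whereas the paper directly inverts the scalar tail bound $2\exp\bigl(-\tfrac{t^2/2}{n\sigma^2+e^{\beta\|\bias\|_\infty}t/3}\bigr)$. Inverting that particular tail form only yields the constant $\tfrac23$ in the second term; the stated $\tfrac13$ rests on the sharper sub-gamma (Bennett-type) inversion, which is also what makes the $\tfrac{c}{3}$ form of the proposition you cite valid.
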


\begin{proof}
Let $\bw_i = \bw(x_i) - \overline{\bw}$. Then $\EE[\bw_i] = 0$, $|W_i| \leq e^{\beta \|\bias\|_\infty}$ a.s., and $\text{Var}(W_i) = \sigma^2$. By Bernstein's inequality:
$$\mathbb{P}\left\{\left|\sum_{i=1}^n \bw_i\right| \geq t\right\} \leq 2\exp\left(-\frac{t^2/2}{\sigma^2 n + e^{\beta \|\bias\|_\infty}t/3}\right).$$
Setting the right side equal to $\delta$ and solving for $t$ yields 
\begin{equation*}
t = \sigma\sqrt{2n\log(2/\delta)} + \frac{e^{\beta \|\bias\|_\infty}\log(2/\delta)}{3}.
\end{equation*}
Dividing by $n \overline{\bw}$ gives the result.
\end{proof}

\subsection{Assumptions \ref{eq:BE}, \ref{eq:CP} and \ref{eq:RA}: discussion}\label{app:bounds_ass}
Before presenting the proofs for resolvent (App. \ref{app:resolvent_bounds}), generator (App. \ref{app:generator_bounds}), and gradient estimation (App. \ref{app:gradient_bounds}), we briefly discuss the assumptions used throughout the analysis, namely \ref{eq:BE}, \ref{eq:CP} and \ref{eq:RA}.\vspace{3pt}\\
\noindent
\textbf{Bounded embeddings: \ref{eq:BE}}
\textit{There exist $\cst \geq 1$ such that $\esssup_{x'\sim\bim} |\partial_k \NNek_{j}(x')|\leq \cst<\infty$, $\forall k=0,\ldots,d$ and $\forall j= 0, \ldots, \trunc$, with the convention $\partial_0 \NNe = \NNe$}.\\
Assumption \textbf{(BE)} requires then uniform boundedness of the embedding functions $(\NNek_j)_{j=0}^{\trunc}$ and their first-order derivatives under the biased sampling distribution $\bim$. This condition ensures finite variance of empirical spectral and gradient estimators and allows uniform concentration over basis indices and coordinates. It is satisfied by many standard choices of embeddings, including truncated trigonometric bases, bounded neural feature maps, or other smooth finite-dimensional representations. \\
When the dynamics evolve on a \textbf{compact manifold}, such as the sphere $\mathbb S^{d-1}$, this assumption can be substantially relaxed. Indeed, compactness implies that any continuous embedding is automatically bounded, and any $C^1$ embedding has bounded (intrinsic) first derivatives. Consequently, for spherical Langevin dynamics, a \ref{eq:BE}-type condition holds automatically for any fixed truncation level $\trunc$, provided the embeddings are $C^1$ on the manifold.\vspace{5pt}\\
\noindent
\textbf{Confining potential:} \ref{eq:CP} 
\textit{The potential $\PT$ satisfies $\PT(x)\ge 2\alpha\|x\|^{2}\mathbf{1}_{\{\|x\|\ge R\}}-c$, $\forall x$, for some $\alpha>0$, $R>0$, $c\in\R$, and the bias obeys $\|\bias\|_\infty<\infty$}.\\
Assumption \textbf{(CP)} ensures stability and the existence of an invariant distribution for Langevin dynamics in Euclidean space. The quadratic lower bound on $\PT$ outside a compact set guarantees exponential tail decay of the invariant measure, implying finite moments of all orders. This integrability is essential when working with \textbf{biased samples}, as it ensures that reweighting by a bounded bias does not lead to ill-conditioned estimators. Together with boundedness of the bias $\bias$, this assumption yields equivalence between the biased and unbiased measures, ergodicity of the dynamics, and consistency of empirical averages. Furthermore, crucially, \ref{eq:CP} implies that  $x\sim\im$ and $x'\sim\bim$ are $K$-sub-Gaussian with $K^2\lesssim(\beta \alpha)^{-1}$, allowing one to   derive concentration bounds for the gradient estimation.\\
In contrast, spherical Langevin dynamics evolve on a compact state space. Compactness prevents escape to infinity and ensures the existence of an invariant distribution under much weaker conditions. In this setting, no confining growth of $\PT$ is required; it suffices that the target density is well-defined with respect to the surface measure and that the drift is sufficiently regular.
\vspace{5pt}\\
\noindent
\textbf{Gradient regularity:} \ref{eq:RA} \textit{$\exists p\geq 1$ such that $\forall k\in[d]$, $\norm{\partial_k \PT}_{\Wiip{p}}<\infty$, that is $\norm{\nabla \PT}_{\Wiip{p}}<\infty$.}\\
Assumption \textbf{(RA)} requires the gradient of the potential to belong to a Sobolev space $\Wiip{p}$ for some $p\ge 1$. This ensures that $\nabla \PT$ is well-defined in a weak sense and has finite $p$-th moments under the invariant distribution. The condition is used to control the bias–variance trade-off in gradient estimation and to establish stability and convergence of the resulting gradient-based dynamics. Importantly, it does not require global Lipschitz continuity or pointwise smoothness, and is therefore compatible with nonconvex or moderately rough potentials.

\subsection{Resolvent estimation}\label{app:resolvent_bounds}

As a reminder, $\ECxt$ and $\EWxt$ are the empirical counterparts of the  covariance matrices associated to the parametrized features $\Cxt= {\overline{\bw}} \NNop^*\resolventshort\NNop$ and $\Wxt={\overline{\bw}} \NNE^*\NNE$, where $\NNop:\R^{{m+1}}\to\Wiix$ is the (parametrized) injection operators. Further we note that for $\error_m(\theta)<1$, $\Wxt$ is invertible implying that   $\sqrt{\overline{\bw}}\NNE\Wxt^{-1/2}$ is a partial unitary operator, that is  $[\sqrt{\overline{\bw}}\NNE\Wxt^{-1/2}]^*\sqrt{\overline{\bw}}\NNE\Wxt^{-1/2}=\MX{I}$, and, hence, partial isometry.

Let $\Nt = \Wxt^{-1}\Cxt\Wxt^{-1}$ and $\Qt=\Wxt^{-1/2}\Cxt\Wxt^{1/2}$, and further denote $\w{M}_\param=\Wxt^{1/2}\EWxt^{-1}\Wxt^{1/2}$ and $\ENt=\EWxt^{-1}\ECxt\EWxt^{-1}$. Recall that $\TPt= {\overline{\bw}} \NNE\Wxt^{-1}\NNE^*$ is the orthogonal projector on $\RKHS_\param:=\RKHS_{\NNE}$ and $\RKHS_{\NNE}$ is the space spanned by the set of basis functions $\NNek_j:\mathcal X\to\R$, $j\in{\{0,\dots,\trunc\}}$. Let $P_\trunc$ be the spectral projector associated with the set ${\{0,\dots,\trunc\}}$.
Using that
\begin{align*}
\norm{\TPt^\bot\TP_\trunc \resolventshort}_{\Wii\to\Wii} \vee \norm{\TPt\TP_\trunc^\bot\resolventshort}_{\Wii\to\Wii} 
\le
(1 \vee \reval_{\truncbis})\norm{\TPt-\TP_\trunc}_{\Wii\to\Wii} \leq \ErrSpc,
\end{align*}
since $\reval_{j} \in (0,1)$ for any $j\geq 0$,
we get
\begin{align}
\label{Eq:resolvent_bound_NN_2}
\norm{\resolventshort^\trunc- \overline{\bw} \NNE \Nt\NNE^*}_{\Wii\to\Wii}
\nonumber
&=\norm{(\TPt+\TPt^\bot)(\resolventshort^\trunc-\overline{\bw} \NNE \Nt\NNE^*)(\TPt+\TPt^\bot)}_{\Wii\to\Wii}\\
\nonumber
&\le  3\norm{\TPt^\bot\resolventshort^\trunc}_{\Wii\to\Wii} +\norm{\TPt\resolventshort^\trunc\TPt-\overline{\bw}\NNE\Nt\NNE^*}_{\Wii\to\Wii}
\\
\nonumber
&\leq 3 \norm{\TPt^\bot\resolventshort^\trunc}_{\Wii\to\Wii}+ \norm{\TPt \TP_\trunc^\bot\resolventshort}_{\Wii\to\Wii} \\
&\qquad+ \norm{\TPt\resolventshort\TPt-\overline{\bw}\NNE\Nt\NNE^*}_{\Wii\to\Wii} \\
&\le (3+ \reval_{\truncbis}) \ErrSpc{+} \norm{\TPt\resolventshort\TPt-\overline{\bw}\NNE\Nt\NNE^*}_{\Wii\to\Wii} \\
&\le 4 \,\ErrSpc,
\end{align}
where we have used in the last line that $\TPt\resolventshort\TPt=\overline{\bw}\NNE\Nt\NNE^*$ and that $\reval_{\truncbis}\le 1$.

Next, recalling {$\Wxt= \overline{\bw}\NNE^*\NNE$} and using the partial isometry property we have 
\begin{align}
\label{Eq:resolvent_bound_NN_2bis}
\norm{\overline{\bw} \NNE \Nt\NNE^*- \overline{\bw} \NNE \ENt\NNE^*}_{\Wii\to\Wii}
&=\norm{\Wxt^{1/2}[\Nt-\ENt]\Wxt^{1/2} } \\
&=  \norm{\Wxt^{-1/2}\Cxt\Wxt^{-1/2}- \w{M}_\param [\Wxt^{-1/2}\ECxt\Wxt^{-1/2}]\w{M}_\param } \notag\\
&\le  \norm{\MX{I}-\w{M}_\param}(1+\norm{\w{M}_\param}) + \norm{\w{M}_\param}^2\norm{\Wxt^{-1/2}[\ECxt-\Cxt]\Wxt^{-1/2}},
\end{align}
where we obtain the second line by adding and subtracting terms $\Qt\w{M}_\param$ and $\w{M}_\param\Qt \w{M}_\param$, and using that $\norm{\Qt}\leq1$. But since,  
\begin{align*}
\norm{I-\w{M}_\param}
&=\norm{I-\Wxt^{1/2}\EWxt^{-1}\Wxt^{1/2}}\\
&=\norm{\Wxt^{1/2}\Wxt^{-1}\Wxt^{1/2}-\Wxt^{1/2}\EWxt^{-1}\Wxt^{1/2}}\\
&=\norm{\Wxt^{1/2}\Wxt^{-1}(\EWxt-\Wxt)\EWxt^{-1}\Wxt^{1/2}}=\norm{\Wxt^{-1/2}(\EWxt-\Wxt)\Wxt^{-1/2}\Wxt^{1/2}\EWxt^{-1}\Wxt^{1/2}}\\
&\le \norm{\Wxt^{-1/2}(\EWxt-\Wxt)\Wxt^{-1/2}}\norm{\w{M}_\param} ,
\end{align*}
we conclude 
\begin{multline*}
\norm{\overline{\bw} \NNE \Nt\NNE^*- \overline{\bw} \NNE \ENt\NNE^*}_{\Wii\to\Wii}
\leq \norm{\w{M}_\param}\Big[ \norm{\w{M}_\param}\norm{\Wxt^{-1/2}(\ECxt{-}\Cxt)\Wxt^{-1/2}}\\
+(1{+}\norm{\w{M}_\param})\norm{\Wxt^{-1/2}(\EWxt{-}\Wxt)\Wxt^{-1/2}} \Big].
\end{multline*}
Finally, we observe that  $\norm{\overline{\bw} \NNE \Nt\NNE^*}_{\Wii\to\Wii}\leq 1$ implies
\[
\norm{\overline{\bw} \NNE \ENt\NNE^*- {\ebw} \NNE \ENt\NNE^*}_{\Wii\to\Wii}\leq \tfrac{|\ebw-\overline{\bw|}}{\overline{\bw}}\big[1+ \norm{\overline{\bw} \NNE \Nt\NNE^*- \overline{\bw} \NNE \ENt\NNE^*}_{\Wii\to\Wii}\big].
\]
By Lemma \ref{lem:weight_concentration_bernstein}, we have $|\ebw-\overline{\bw}|/\overline{\bw}\leq \Errwhpv$. Combining this with the above bounds and Proposition \ref{prop:bound_leftright}, we conclude that, for $\Errwhp\leq 1/3$, the following holds with probability at least $1-2\delta$:
\begin{equation}\label{eq:resolvent_bound}
\norm{\resolventshort^\trunc-\ebw \NNE \ENt\NNE^*}_{\Wii\to\Wii}\le 4\,\ErrSpc + 6\,\Errwhp + 3\, \Errwhpv.
\end{equation}

We can think of the empirical estimation of the spectral decomposition of $\resolventinvshort$ as a two step perturbation scheme, where each steps originates from the corresponding learning scheme of LITL. 

In the first phase we learn representations, yielding that the spectral decomposition of $\resolventinvshort$ is approximated by the one of $\TPt\resolventshort\TPt$,  which, using the properties of low rank eigenvalue problems \cite{SS1990}, is characterized by the definite generalized eigenvalue problem $\Cxt u_i^{\param} = \reval^{\param}_i \Wxt u_i^{\param}$ and $[u^\param_i]^\top\Wxt u_i^\param = \overline{\bw}$, $0{\leq} i{\leq} m$, where normalized eigenfunctions of $\TPt\resolventshort\TPt$ are given by $\gefun_i^\param {=} \NNE u_i^{\param}$. On the other hand, in the second phase  perturbations are induced by replacing $\Wxt$ and $\Cxt$ by their empirical counterparts. So, to control these errors and obtain the final spectral bounds, we use the following standard perturbation result on definite matrix pencils. 

\begin{theorem}[\cite{SS1990}]
\label{thm:stewart-sun}
Let $A$ and $B$ be compact self-adjoint operators on a Hilbert space, with $B$ positive definite. Consider the generalized eigenvalue problem $A x {=} \lambda B x$ with eigenvalues $\lambda_1 {\le} \lambda_2 {\le} \cdots$ and corresponding $B$-orthonormal eigenvectors $x_1, x_2, \dots$ satisfying $x_i^\top B x_j {=} \delta_{ij}$. Let $\widehat{A}$ and $\widehat{B}$ be self-adjoint perturbations, and denote
$\Delta A {=} \widehat{A} {-} A$ and $\Delta B {=} \widehat{B} {-} B$. If $\|\Delta B\|<|B^{-1}\|^{-1}$, then for eigenvalues $\widehat{\lambda}_i$ and $\widehat{B}$-orthonormal eigenvectors $\widehat{x}_i$ of the generalized eigenvalue problem $\widehat{A} \widehat{x} = \widehat{\lambda} \widehat{B} \widehat{x}$ the following bounds hold.
\begin{enumerate}
    \item \textbf{Eigenvalue perturbation:} For every $i\in\mathbb N_0$,
    \[
    |\lambda_i - \widehat{\lambda}_i|
    \le \frac{  \|\Delta A\| + \lambda_i \|\Delta B\| }
              {\|B^{-1}\|^{-1}- \|\Delta B\|}.
    \]
    \item \textbf{Eigenvector perturbation:} Let $\mathcal{X}$ and $\widehat{\mathcal{X}}$ be invariant subspaces corresponding to eigenvalues $\lambda_{i_1}, \dots, \lambda_{i_k}$ and $\widehat{\lambda}_{i_1}, \dots, \widehat{\lambda}_{i_k}$, respectively.
    Define the gap $\gap_{\{i_1,\ldots,i_k\}} = \min\bigl\{ |\mu - \lambda_{i_j}| : \mu \in \Spec(B^{-1}A)\setminus\{\lambda_{i_1},\ldots\lambda_{i_k}\}\,j\in[k]\bigr\}$, and assume $\gap_{\{i_1,\ldots,i_k\}}  > 0$. Let $\Theta_B$ be the $B$-based angles between $\mathcal{X}$ and $\widehat{\mathcal{X}}$ and $\|\Delta A\|\wedge \|\Delta B\|\le \eps$. Then
    \[
    \|\sin \Theta_B\|
    {\le} \frac{ \|\Delta A\|+ (\max_j |\lambda_{i_j}|) \|\Delta B\|}
              {\gap_{\{i_1,\ldots,i_k\}} \big(\|B^{-1}\|^{-1} {-} \|\Delta B\| \big)},\]
            and, consequently, 
    \[\|x - \widehat{x}\|_B
    {\le}  \frac{\sqrt{2} (1 + |\lambda|)\eps}
                        {\gap_{\{i_1,\ldots,i_k\}} \big( \|B^{-1}\|^{-1} {-}\eps \big)}
        {+ }{\tiny{\cal O}}(\eps^2).
    \]
 
\end{enumerate}
\end{theorem}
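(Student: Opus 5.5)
The plan is to reduce the definite pencil to the Rayleigh-quotient (inertia) picture for the eigenvalue bound, and to a standard self-adjoint problem in the $B$-geometry for the eigenvector bound. Throughout, write $\beta_0:=\|B^{-1}\|^{-1}=\lambda_{\min}(B)$. The hypothesis $\|\Delta B\|<\beta_0$ gives $\widehat B\succeq(\beta_0-\|\Delta B\|)I\succ0$, so $\widehat A\widehat x=\widehat\lambda\widehat B\widehat x$ is again a definite pencil. Its eigenvalues are real, and by the Courant--Fischer characterization of $x^\top\widehat Ax/x^\top\widehat Bx$ they can be ordered compatibly with the $\lambda_i$.

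\emph{Eigenvalues.} Rather than comparing Rayleigh quotients directly, where the quotient value $\rho$ varies along the optimizing subspace, I fix $i$ and use Sylvester's law of inertia. For a definite pencil, the number of eigenvalues below $t$ equals the number of negative eigenvalues of $A-tB$, and the same holds for $\widehat A-t\widehat B$. With $E:=\Delta A-\lambda_i\Delta B$, so that $\|E\|\le\|\Delta A\|+|\lambda_i|\|\Delta B\|$, one has
\[
\widehat A-t\widehat B=(A-\lambda_iB)+E-(t-\lambda_i)\widehat B .
\]
Take $t=\lambda_i\pm\eta$ with $\eta=\|E\|/(\beta_0-\|\Delta B\|)$. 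Then $\eta\widehat B\succeq\|E\|I$, so $\widehat A-(\lambda_i+\eta)\widehat B\preceq A-\lambda_iB$ and $\widehat A-(\lambda_i-\eta)\widehat B\succeq A-\lambda_iB$. Comparing negative inertia on both sides, counted with multiplicity and taking care with ties, places $\widehat\lambda_i\in[\lambda_i-\eta,\lambda_i+\eta]$. This is exactly the stated bound. In the compact infinite-dimensional setting the same count is done on the finitely many eigenvalues lying on the relevant side of $0$ (in the application $A,B$ are finite matrices anyway).

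\emph{Eigenvectors.} Let $X=[x_{i_1},\dots,x_{i_k}]$ with $X^\top BX=I$ and $AX=BX\Lambda$. Let $\widehat{\mathcal X}$ be the perturbed invariant subspace, with basis $\widehat X$ satisfying $\widehat A\widehat X=\widehat B\widehat X\widehat\Lambda$. Pass to the $B$-metric via $y=B^{1/2}x$ and $C=B^{-1/2}AB^{-1/2}$, whose spectrum is $\Spec(B^{-1}A)$; $B$-angles then become ordinary angles. The perturbed relation gives a residual
\[
C\widehat Y-\widehat Y\widehat\Lambda=B^{-1/2}\bigl(-\Delta A\,\widehat X+\Delta B\,\widehat X\widehat\Lambda\bigr),\qquad \widehat Y=B^{1/2}\widehat X .
\]
Its norm is bounded by $(\|\Delta A\|+\max_j|\widehat\lambda_{i_j}|\,\|\Delta B\|)\|\widehat X\|$. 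I then control $\|\widehat X\|$ through $\widehat X^\top\widehat B\widehat X=I$, which yields the factor $(\beta_0-\|\Delta B\|)^{-1}$. The Davis--Kahan $\sin\Theta$ theorem, applied with the gap between $\{\widehat\lambda_{i_j}\}$ and $\Spec(C)\setminus\{\lambda_{i_j}\}$, gives the displayed $\|\sin\Theta_B\|$ bound. By the eigenvalue step, that gap equals $\gap_{\{i_1,\dots,i_k\}}$ up to $O(\eps)$, and replacing $\widehat\lambda$ by $\lambda$ also costs only $O(\eps)$. For a single vector, after choosing signs, $\|x-\widehat x\|_B^2=2(1-\cos\theta)+O(\eps^2)\le2\sin^2\theta+O(\eps^2)$. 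Inserting $\|\Delta A\|,\|\Delta B\|\le\eps$ then gives the $\sqrt2(1+|\lambda|)\eps/(\gap(\beta_0-\eps))$ estimate, with the mismatch between $B$- and $\widehat B$-normalization absorbed into ${\cal O}(\eps^2)$.

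The main obstacle is this bookkeeping in the eigenvector step. The perturbed vectors are $\widehat B$-orthonormal while the angles are measured in the $B$-metric, and Davis--Kahan wants the gap at the perturbed eigenvalues rather than the unperturbed ones. I would first run the argument exactly with $\widehat\lambda$ and $\widehat B$-normalization. Only afterwards would I trade them for $\lambda$ and $B$, using the eigenvalue bound and $\|\widehat B-B\|\le\eps$, and push every such discrepancy into the denominator $\beta_0-\|\Delta B\|$ or the second-order remainder.
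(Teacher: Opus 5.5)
The paper gives no proof of this result; it only cites Stewart--Sun, so your argument has to stand on its own.

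\textbf{Eigenvalue step.} Your inertia/Weyl argument is sound. The identity $\widehat A-t\widehat B=(A-\lambda_iB)+E-(t-\lambda_i)\widehat B$ gives $\widehat A-(\lambda_i+\eta)\widehat B\preceq A-\lambda_iB\preceq\widehat A-(\lambda_i-\eta)\widehat B$. Monotonicity of the $i$-th eigenvalue together with Sylvester's law then places $\widehat\lambda_i$ in $[\lambda_i-\eta,\lambda_i+\eta]$. However, this proves the bound with $|\lambda_i|$ in the numerator, not ``exactly the stated bound.'' With $\lambda_i$ as written, the statement is false for negative eigenvalues. For example, the $1\times1$ pencil $A=-1$, $B=1$, $\Delta A=0$, $\Delta B=-\eps$ has $|\lambda-\widehat\lambda|=\eps/(1-\eps)$, while the stated right-hand side is negative. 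The two versions agree for $\lambda_i\ge0$, which is all the paper uses. Two smaller points:
\begin{itemize}
\item A compact $B$ on an infinite-dimensional space has $\|B^{-1}\|^{-1}=0$. So the hypothesis $\|\Delta B\|<\|B^{-1}\|^{-1}$ already forces finite dimension, and your remark about the infinite-dimensional count is moot.
\item You read $\|\Delta A\|\wedge\|\Delta B\|\le\eps$ as a bound on both norms. That is the only reading under which the last display can hold.
\end{itemize}

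\textbf{Eigenvector step: the genuine gaps.} There are two.

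\emph{The $\sin\Theta_B$ display.} This display has no remainder term. Davis--Kahan controls $\sin\Theta$ by the separation $\delta$ of the \emph{perturbed} $\widehat\lambda_{i_j}$ from $\Spec(B^{-1}A)\setminus\{\lambda_{i_j}\}$. Trading $\delta$ for $\gap$ and $\widehat\lambda$ for $\lambda$ ``up to $O(\eps)$'' cannot yield an exact inequality, and the display is in fact false. Take $B=\widehat B=I$, $A=\mathrm{diag}(1,0)$ and $\Delta A=\left(\begin{smallmatrix}-0.45&b\\ b&0.45\end{smallmatrix}\right)$ with $b=\sqrt{0.0475}$. Then $\|\Delta A\|=0.5$ and $\gap=1$, yet the top eigenvector turns by $\theta=\tfrac12\arctan(2b/0.1)\approx0.67$, so $\sin\theta\approx0.62>0.5$.

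Your route, done carefully for $k=1$, does prove a correct substitute. You need to keep the factor $\|B^{-1/2}\|$ you dropped from the residual. You also need to divide by $\|\widehat y\|\ge\beta_0^{1/2}\|\widehat x\|$, rather than treat $\widehat Y=B^{1/2}\widehat X$ as orthonormal, which it is not. This gives $\sin\theta_B\le(\|\Delta A\|+|\widehat\lambda|\,\|\Delta B\|)/\bigl(\beta_0(\gap-|\widehat\lambda-\lambda|)\bigr)$, which combines with your eigenvalue bound. For $k>1$ in spectral norm you also need interval-type separation of the clusters; otherwise work in Frobenius norm.

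\emph{The $\|x-\widehat x\|_B$ display.} The normalization mismatch is not second order. With $c=\|\widehat x\|_B$ one has $\|x-\widehat x\|_B^2=2c(1-\cos\theta)+(1-c)^2$ and $c^2=1-\widehat x^\top\Delta B\,\widehat x$. Your expansion ``$2(1-\cos\theta)+O(\eps^2)$'' is true, but both terms are of order $\eps^2$. After the square root the remainder is $O(\eps)$, not $O(\eps^2)$. Concretely, take $B=I$, $A=\mathrm{diag}(0,10)$, $\Delta A=0$, $\Delta B=\eps I$. Then $\theta=0$, but $\|x-\widehat x\|_B=1-(1+\eps)^{-1/2}\approx\eps/2$. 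This exceeds the claimed $\sqrt2\,\eps/(10(1-\eps))+O(\eps^2)$.

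Correct bookkeeping gives, to first order,
$\|x-\widehat x\|_B\approx\sqrt{(1-c)^2+\theta^2}\le\frac{\eps}{\beta_0}\bigl(\tfrac14+(1+|\lambda|)^2/\gap^2\bigr)^{1/2}$.
This fits under $\sqrt2(1+|\lambda|)\eps/(\gap\,\beta_0)$ exactly when $\gap\le2(1+|\lambda|)$. That condition holds in the paper's use, where resolvent gaps lie in $(0,1)$, but not in general.
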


First, recalling \eqref{Eq:resolvent_bound_NN_2}, we apply the above result in the space $\Wii$ when $\Delta B = 0$ and obtain
\begin{equation}\label{eq:spec_bound_param}
|\reval_i^\param-\reval_i|\leq 3\,\ErrSpc\quad\text{ and }\quad \norm{\gefun_i^\param - \gefun_i}_{\Wii\to\Wii}\leq\frac{3\,\ErrSpc}{\gap_i},
\end{equation}
where $\gap_i=\min_{\reval\in\Spec(\resolventinvshort)\setminus\{i\}}|\reval-\reval_i|$

Next, relying on the Proposition \ref{prop:bound_leftright} and, as before assuming $\ErrSpc\le1/2$ and $\Errwhp\le \overline{\bw}/3$, we have that $\|\Wxt^{-1}\|^{-1}\geq \overline{\bw}/2$ and w.p.a.l. $1-\delta$

\begin{equation}\label{eq:eval_bound_emp}
\frac{|\reval_i^\param-\ereval_i|}{1+\reval_i^\param}\leq \frac{6\,\Errwhp}{\overline{\bw}}\quad\text{ and }\quad \norm{u_i^\param /\sqrt{\overline{\bw}}- \w{u}_i/\sqrt{\ebw}}_{\Wxt}\leq\frac{6\sqrt{2}\,\Errwhp}{\overline{\bw}(\gap_i-3\,\ErrSpc)}(1+\reval_i^\param).
\end{equation}

Recalling that $\sqrt{\overline{\bw}}\NNE\Wxt^{-1/2}$ is partial isometry, we have 
\[
\norm{u_i^\param /\sqrt{\overline{\bw}}- \w{u}_i/\sqrt{\ebw}}_{\Wxt}=\sqrt{\overline{\bw}}\norm{\NNE [u_i^\param /\sqrt{\overline{\bw}}- \w{u}_i/\sqrt{\ebw}]}_{\Wii\to\Wii} = \norm{\gefun_i^\param - \egefun_i \sqrt{\overline{\bw}}/\sqrt{\ebw}}_{\Wii\to\Wii} 
\]
and, hence
\[
\norm{\gefun_i^\param - \egefun_i}_{\Wii\to\Wii}\leq\frac{6\sqrt{2}\,\Errwhp}{\overline{\bw}(\gap_i-3\,\ErrSpc)}(1+\reval_i^\param) + (1+\norm{\gefun_i^\param - \egefun_i}_{\Wii\to\Wii}) \Big|1-\sqrt{\tfrac{\overline{\bw}}{\ebw}}\Big|.
\]
Now, assuming that $\Errwhpv\le1/2$, we have that $|1-\sqrt{\overline{\bw}}/\sqrt{\ebw}|\leq \Errwhpv$, and we conclude
\begin{equation}\label{eq:efun_bound_emp}
\norm{\gefun_i^\param - \egefun_i}_{\Wii\to\Wii}\leq\frac{12\sqrt{2}\,(1+\reval_i^\param)}{\overline{\bw}(\gap_i-3\,\ErrSpc)} \, \Errwhp+  2\,\Errwhpv.
\end{equation}

Finally, collecting \eqref{eq:spec_bound_param}, \eqref{eq:eval_bound_emp} and \eqref{eq:efun_bound_emp}
we have shown that 
\begin{equation}\label{eq:spec_bound_empirical}
|\ereval_i-\reval_i|{\leq} 3(\ErrSpc{+}2\,\Errwhp/\overline{\bw})
\end{equation}
and
\begin{equation}\label{eq:spec_bound_empirical2}
\norm{\egefun_i {-} \gefun_i}_{\Wii\to\Wii}{\leq}\frac{3\,\ErrSpc}{\gap_i}{+}\frac{12\sqrt{2}\,(2{+}3\,\ErrSpc)}{\overline{\bw}(\gap_i{-}3\,\ErrSpc)} \, \Errwhp{+}2\,\Errwhpv.
\end{equation}

\subsection{Generator estimation}\label{app:generator_bounds}

Start by observing that 
\[
\EQt=\EWxt^{-1/2}\ECxt\EWxt^{-1/2}=\frac{1}{\ebw}\sum_{i=0}^m \ereval_i \EWxt^{1/2}\w{u}_i \w{u}_i^\top \EWxt^{1/2},
\]
which, if all $\ereval_i>0$, implies that 
\begin{multline*}
\EGE {=} \sum_{i=0}^m \egeval_i \egefun_i \otimes_{\Wii}\egefun_i = \shift \sum_{i=0}^m (1-1/\ereval_i) \NNE \w{u}_i\w{u}_i^\top\NNE^* \\=\shift\ebw \NNE\EWxt^{-1/2}\Big(\frac{1}{\ebw}\sum_{i=0}^m \EWxt^{1/2}\w{u}_i\w{u}_i^\top\EWxt^{1/2} -\EQt^{-1}\Big)\EWxt^{-1/2}\NNE^*.
\end{multline*}
But, then, since $\big(\tfrac{1}{\sqrt{\ebw}}\EWxt^{1/2}\w{u}_i\big)_{i=0}^{{m}}$
forms orthonormal basis, we obtain $\EGE {=} {\ebw}\shift \NNE[\EWxt^{-1}-\ECxt^{-1}]\NNE^*$. 

To proceed, we in the following lemma we guarantee that indeed $\ereval_i$'s are positive and $\EQt$ is invertible matrix with high probability. 

\begin{lemma}\label{lem:Qinv}
Under the assumptions of Proposition \ref{prop:bound_leftright} and Lemma \ref{lem:weight_concentration_bernstein}, if $\Errwhpv\leq1/2$ and $\, 12\,\Errwhp/\overline{\bw}+6\,\ErrSpc<\reval_\trunc$, then with probability in the i.i.d. draw of $(x_i')_{i=1}^n$ from $\bim$, it holds that, $\EQt$ is invertible and $\norm{\EQt^{-1}}\le 2 \reval_\trunc^{-1}$. Consequently, $\ebw\,\norm{\NNE\ECxt^{-1}\NNE^*}_{\Wiishort\to\Wiishort}\leq 4 \,/\reval_\trunc$ and  $\norm{\EGE}_{\Wiishort\to\Wiishort}\leq 6\shift /\reval_\trunc$. 
\end{lemma}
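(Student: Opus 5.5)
The plan is to control the smallest eigenvalue of $\EQt$ from below by a perturbation argument, and then to transport this bound to the operator level through the partial isometry $\sqrt{\ebw}\,\NNE\EWxt^{-1/2}$. Since $\EQt=\EWxt^{-1/2}\ECxt\EWxt^{-1/2}$ is symmetric positive semidefinite, its eigenvalues are, by the identity preceding the lemma, exactly the $\ereval_i/\ebw$-normalized generalized eigenvalues of the pencil $(\ECxt,\EWxt)$. Up to the normalization convention $\widehat u_i^\top\EWxt\widehat u_i=\ebw$, these are the $\ereval_i$. It therefore suffices to show $\min_i \ereval_i\ge \reval_\trunc/2$. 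This gives $\norm{\EQt^{-1}}\le 2/\reval_\trunc$, and positivity of all $\ereval_i$ implies invertibility.

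\textbf{Step 1 (lower bound on the eigenvalues).} We start from \eqref{eq:spec_bound_param}: for $0\le i\le m$ the eigenvalues $\reval_i^\param$ of $\TPt\resolventshort\TPt$ satisfy $\reval_i^\param\ge \reval_i-3\ErrSpc\ge\reval_\trunc-3\ErrSpc$. We then apply the eigenvalue part of Theorem~\ref{thm:stewart-sun}, in the form \eqref{eq:eval_bound_emp}. Here we use $\reval_i^\param\le 1$, which holds because $\norm{\TPt\resolventshort\TPt}\le1$, so that $1+\reval_i^\param\le 2$. This gives $|\reval_i^\param-\ereval_i|\le 12\,\Errwhp/\overline{\bw}$. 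The main care here is the normalization: the ratio $\ebw/\overline{\bw}$ enters when passing between the $\overline{\bw}$- and $\ebw$-normalized pencils, and this is where $\Errwhpv\le1/2$ is used to absorb a factor of at most $2$. Combining the two estimates, $\ereval_i\ge \reval_\trunc-3\ErrSpc-12\Errwhp/\overline{\bw}$. The hypothesis $12\Errwhp/\overline{\bw}+6\ErrSpc<\reval_\trunc$ then gives $\ereval_i>\reval_\trunc/2$, since $3\ErrSpc\le 6\ErrSpc/2$ and the remaining slack is half of $\reval_\trunc$. The conditions needed to invoke Theorem~\ref{thm:stewart-sun} ($\Errwhp\le\overline{\bw}/3$ and $\|\Wxt^{-1}\|^{-1}\ge\overline{\bw}/2$) are implied by the hypothesis, because $\reval_\trunc\le1$.

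\textbf{Step 2 (operator bounds).} First we write $\ECxt^{-1}=\EWxt^{-1/2}\EQt^{-1}\EWxt^{-1/2}$. The map $\sqrt{\ebw}\,\NNE\EWxt^{-1/2}$ is a partial isometry with respect to the empirical Gram matrix only approximately. To handle this, we write $\ebw\,\NNE\EWxt^{-1/2}=\sqrt{\ebw/\overline{\bw}}\,(\sqrt{\overline{\bw}}\NNE\Wxt^{-1/2})\,\w{M}_\param^{1/2}$-type factors. Using \eqref{eq:bound_simetric}, $\norm{\w{M}_\param}\le 1/(1-\Errwhp)\le 3/2$, together with $\ebw/\overline{\bw}\le 3/2$, we bound the product of these factors by $2$. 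This yields $\ebw\norm{\NNE\ECxt^{-1}\NNE^*}\le 2\norm{\EQt^{-1}}\le 4/\reval_\trunc$. In the same way $\ebw\norm{\NNE\EWxt^{-1}\NNE^*}\le 2$. Finally, $\EGE=\ebw\shift\NNE[\EWxt^{-1}-\ECxt^{-1}]\NNE^*$, so the triangle inequality gives $\norm{\EGE}\le\shift(2+4/\reval_\trunc)\le 6\shift/\reval_\trunc$, where we use $\reval_\trunc\le1$.

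The expected main obstacle is the constant bookkeeping in Step 2. The empirical partial-isometry factor must be bounded by exactly $2$. If the crude constants do not fit, the fallback is to work directly with $\w{M}_\param$ and the condition $\Errwhp\le1/3$ implied by the hypothesis, which gives $\norm{\w{M}_\param}\le 3/2$.
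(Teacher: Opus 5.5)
Your route is the paper's: lower-bound $\ereval_\trunc$ using the accumulated eigenvalue perturbation bounds, then carry $\norm{\EQt^{-1}}$ to the operator level through $\NNE\EWxt^{-1/2}=(\NNE\Wxt^{-1/2})(\Wxt^{1/2}\EWxt^{-1/2})$, at the price $\tfrac{\ebw}{\overline{\bw}}\norm{\w{M}_\param}$. However, Step~1 does not close as written. Your estimate is $\ereval_i\ge\reval_\trunc-3\ErrSpc-12\Errwhp/\overline{\bw}$, and the hypothesis is $12\Errwhp/\overline{\bw}+6\ErrSpc<\reval_\trunc$. Together they only give $\ereval_i>3\ErrSpc\ge 0$. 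That yields invertibility, but not $\norm{\EQt^{-1}}\le 2/\reval_\trunc$. For instance, with $\ErrSpc=0$ and $12\Errwhp/\overline{\bw}=0.9\,\reval_\trunc$ the hypothesis holds, yet your bound only gives $\ereval_i\ge 0.1\,\reval_\trunc$.

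Halving the hypothesis yields $3\ErrSpc+6\Errwhp/\overline{\bw}<\reval_\trunc/2$. So the coefficient of $\Errwhp/\overline{\bw}$ in the eigenvalue error must be $6$, not $12$; your justification ``$3\ErrSpc\le 6\ErrSpc/2$'' halves only the $\ErrSpc$ term. The paper obtains the $6$ by quoting \eqref{eq:spec_bound_empirical}, $|\ereval_i-\reval_i|\le 3(\ErrSpc+2\Errwhp/\overline{\bw})$, whose right-hand side is exactly half of the hypothesis. Your $12$ is in fact what \eqref{eq:eval_bound_emp} delivers once $1+\reval_i^\param\le 2$ is used, so you have implicitly noticed that \eqref{eq:spec_bound_empirical} drops that factor. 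But then you must either cite \eqref{eq:spec_bound_empirical} as stated, as the paper does, or strengthen the hypothesis to $24\Errwhp/\overline{\bw}+6\ErrSpc<\reval_\trunc$. You cannot keep both your constant and the stated hypothesis.

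Relatedly, $\Errwhpv$ plays no role in Step~1. We have $\EQt v=\lambda v$ iff $\ECxt(\EWxt^{-1/2}v)=\lambda\EWxt(\EWxt^{-1/2}v)$, so the eigenvalues of $\EQt$ are exactly the $\ereval_i$, regardless of how the $\widehat u_i$ are normalized. The condition $\Errwhpv\le 1/2$ is needed only in Step~2, through $\ebw/\overline{\bw}\le 1+\Errwhpv$. An extra factor $2$ inserted in Step~1 would make matters worse, not better.

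Step~2 has a smaller slip of the same kind: $\tfrac32\cdot\tfrac32=\tfrac94>2$. So under $\Errwhpv\le 1/2$ and $\Errwhp\le 1/3$, the product $\tfrac{\ebw}{\overline{\bw}}\norm{\w{M}_\param}$ is not bounded by $2$, and your fallback is the same estimate. Here the paper is no better. Its line $\ebw\norm{\NNE\ECxt^{-1}\NNE^*}\le 2\tfrac{1+\Errwhpv}{1-\Errwhp}\reval_\trunc^{-1}$ reaches $4/\reval_\trunc$ only if $(1+\Errwhpv)/(1-\Errwhp)\le 2$, e.g.\ $\Errwhp\le 1/4$. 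The same condition is needed for $\norm{\EGE}\le\shift\tfrac{\ebw}{\overline{\bw}}\norm{\w{M}_\param}(1+\norm{\EQt^{-1}})\le 6\shift/\reval_\trunc$. Otherwise the honest constants are $9/(2\reval_\trunc)$ and $27\shift/(4\reval_\trunc)$.

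Note also that the hypothesis controls $\Errwhp/\overline{\bw}$, giving only $\Errwhp<\overline{\bw}/12$. This implies $\Errwhp\le 1/3$ only when $\overline{\bw}\le 4$. These Step~2 issues are shared with the paper and affect only constants. The Step~1 implication, by contrast, is a claimed inequality that does not follow from what you wrote.
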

\begin{proof}
First, observe that due to \eqref{eq:spec_bound_empirical} and \eqref{eq:spec_bound_empirical2}, %$14 \Errwhp \KLou{\cancel{\reval_\trunc}}+ 8\ErrSpc<\reval_\trunc^{\KLou{\cancel{2}}}$ 
%\helene{this is inconsistent with lemma assumption \textcolor{blue}{$\, 12\,\Errwhp+6\,\overline{\bw}\,\ErrSpc<\reval_\trunc$}. Should be $3\ErrSpc + 6\Errwhp/\overline{\bw}<\reval_m/\textcolor{blue}{2\overline{\bw}}$}
$3\ErrSpc + 6\Errwhp/\overline{\bw}<\reval_m/2$ implies that $\lambda_{\min}(\EQt)=\ereval_m>\reval_m/2>0$. To conclude, observe that
\begin{multline*}
\ebw\norm{\NNE\ECxt^{-1}\NNE^*}_{\Wiishort\to\Wiishort}=\ebw\norm{\NNE\EWxt^{-1/2}\EQt^{-1}\EWxt^{-1/2}\NNE^*}_{\Wiishort\to\Wiishort}\\\leq \frac{\ebw}{\overline{\bw}}\norm{\w{M}_\param} \norm{\EQt^{-1}}\leq 2\frac{1+\Errwhpv}{1-\Errwhp}\reval_m^{-1}.
\end{multline*}
and $\norm{\EGE}_{\Wiishort\to\Wiishort}\leq \shift\,\frac{\ebw}{\overline{\bw}}\norm{\w{M}_\param}\,(1+\norm{\EQt^{-1}})$.
\end{proof}
Next, we obtain the bound on $\norm{\GE_\trunc - \EGE}_{\Wiishort\to\Wiip{s}}$, for $s\in[0,1]$. To that end, we split the analysis as 
\begin{equation}
\norm{\GE_\trunc - \EGE}_{\Wiishort\to\Wiip{s}}\leq \norm{\TP_\trunc^\bot\resolventshort^{\frac{1-s}{2}}[\GE_\trunc - \EGE]}_{\Wiishort\to\Wiishort} + \norm{\TP_\trunc\resolventshort^{\frac{1-s}{2}}[\GE_\trunc - \EGE]}_{\Wiishort\to\Wiishort}.    
\end{equation}
For the first term, observe that 
\begin{align*}
 \norm{\TP_\trunc^\bot\resolventshort^{\frac{1-s}{2}}[\GE_\trunc - \EGE]}_{\Wiishort\to\Wiishort}
 &=\norm{\TP_\trunc^\bot\resolventshort^{\frac{1-s}{2}}\TP_\param\EGE}_{\Wiishort\to\Wiishort}\le {\reval_{\truncbis}^{\frac{1-s}{2}}} \norm{\TP_\trunc^\bot\EGE}_{\Wiishort\to\Wiishort}\\
 &\le {\reval_{\truncbis}^{\frac{1-s}{2}}}\norm{[\TP_\trunc^\bot-\TP_\param^\bot]\EGE}_{\Wiishort\to\Wiishort}\le {\reval_{\truncbis}^{\frac{1-s}{2}}} \ErrSpc \norm{\EGE}_{\Wiishort\to\Wiishort} \\&\le {\reval_{\truncbis}^{\frac{1-s}{2}}}\frac{6\,\shift\,\ErrSpc }{\reval_\trunc}.
\end{align*}
For the second term, since $\GE_\trunc = \shift(\TP_\trunc - [\resolventshort^\trunc]^\dagger)$, start by bounding 
\begin{align*}
\norm{&\TP_\trunc\resolventshort^{\frac{1-s}{2}}[\TP_\trunc{-} {{\ebw}}\NNE\EWxt^{-1}\NNE^*]}_{\Wiishort\to\Wiishort}
 \\
 &\le {\norm{\TP_\trunc\resolventshort^{\frac{1-s}{2}}}_{\Wiishort\to\Wiishort}} \left(\norm{\TP_\trunc{-}\TP_\param}_{\Wiishort\to\Wiishort}+{{\ebw}}\norm{\NNE[\Wxt^{-1}{-}\EWxt^{-1}]\NNE^*}_{\Wiishort\to\Wiishort}\right)\\
 &\le \ErrSpc + \frac{\ebw}{\overline{\bw}}\norm{I-\w{M}_\param}\le\ErrSpc +\Errwhp \tfrac{1+\Errwhpv}{1-\Errwhp}\leq\ErrSpc +\Errwhp, 
\end{align*}
and, then,
\begin{align*}
\norm{[\resolventshort^\trunc&]^{\frac{1-s}{2}}[[\resolventshort^\trunc]^\dagger\!\!-\!\!  {\ebw} \NNE\ECxt^{-1}\NNE^*]}_{\Wiishort\to\Wiishort}\\
 & = \norm{[\resolventshort^\trunc]^{\frac{1-s}{2}}[[\resolventshort^\trunc]^\dagger\!\!-\!\! \TP_\trunc  {\ebw} \NNE\ECxt^{-1}\NNE^*]}_{\Wiishort\to\Wiishort}\\
& {\le} \norm{[\resolventshort^\trunc]^{\frac{1-s}{2}}[[\resolventshort^\trunc]^\dagger\TP_\param\!-\!  {{\ebw}}\TP_\trunc\NNE\ECxt^{-1}\NNE^*]}_{\Wiishort\to\Wiishort} {+} \norm{[\resolventshort^\trunc]^{\frac{-1-s}{2}}\TP_\param^\bot}_{\Wiishort\to\Wiishort} \\
 &\le \norm{[\resolventshort^\trunc]^{\frac{1-s}{2}}[[\resolventshort^\trunc]^\dagger\TP_\param\!-\!  {{\ebw}}\TP_\trunc\NNE\ECxt^{-1}\NNE^*]}_{\Wiishort\to\Wiishort}\!+\!\frac{
\norm{\TP_\trunc\TP_\param^\bot}_{\Wiishort\to\Wiishort}}{\reval_{\trunc}^{\frac{1+s}{2}}}.
\end{align*}
%\textcolor{red}{KARIM: $A^\dagger(B-A)B^\dagger = A^\dagger P_{\range(B)} - P_{\range(A)}B^\dagger$ }
But then, using $A^\dagger(B-A)B^\dagger = A^\dagger P_{\range(B)} - P_{\range(A)}B^\dagger$ with $A= \resolventshort^\trunc$ and
\begin{align*}
B&{=}({{\ebw}}\NNE\ECxt^{-1}\NNE^*)^{-1}{=} \big(\tfrac{\ebw}{\overline{\bw}} \sqrt{\overline{\bw}}\NNE\Wxt^{-1/2}[\Wxt^{1/2}\ECxt^{-1}\Wxt^{1/2}]\Wxt^{-1/2}\NNE^*\sqrt{\overline{\bw}}\big)^{-1} \\
&{=}\tfrac{\overline{\bw}^2}{\ebw}\NNE\Wxt^{-1/2}[\Wxt^{1/2}\ECxt^{-1}\Wxt^{1/2}]^{-1}\Wxt^{-1/2}\NNE^*=\tfrac{\overline{\bw}^2}{\ebw}\NNE\Wxt^{-1}\ECxt\Wxt^{-1}\NNE,
\end{align*}
with Lemma \ref{lem:Qinv}, we have 
\begin{align*}
 \norm{[\resolventshort^\trunc]^{\frac{1-s}{2}}[[\resolventshort^\trunc]^\dagger\!\!&-\!\!  {\ebw} \NNE\ECxt^{-1}\NNE^*]}_{\Wiishort\to\Wiishort}\\
 & {\le} \frac{\norm{\resolventshort^\trunc\!-\! \tfrac{\overline{\bw}^2}{\ebw} \NNE\Wxt^{-1}\ECxt\Wxt^{-1}\NNE^*}_{\Wiishort\to\Wiishort} \norm{{\ebw} \NNE\ECxt^{-1}\NNE^* }_{\Wiishort\to\Wiishort} \!+\!\ErrSpc}{\reval_{\trunc}^{\frac{1+s}{2}}}\\
  & {\le} 4\frac{\norm{\resolventshort^\trunc\!-\!\tfrac{\overline{\bw}^2}{\ebw}\NNE\Wxt^{-1}\ECxt\Wxt^{-1}\NNE^*}_{\Wiishort\to\Wiishort}}{\reval_{\trunc}^{\frac{3+s}{2}}} +  \frac{\ErrSpc}{\reval_{\trunc}^{\frac{1+s}{2}}}
\end{align*}

Now, by splitting the norm in the first term as
\begin{multline*}
\norm{\resolventshort^\trunc\!-\!\tfrac{\overline{\bw}^2}{\ebw}\NNE\Wxt^{-1}\ECxt\Wxt^{-1}\NNE^*}_{\Wiishort\to\Wiishort}{\leq}\norm{\resolventshort^\trunc\!-\!\overline{\bw} \NNE\Nt\NNE^*}_{\Wiishort\to\Wiishort}\\{+}\norm{\Wxt^{-1/2}[\ECxt{-}\Cxt]\Wxt^{-1/2}} {+}\norm{\Wxt^{-1/2}\ECxt\Wxt^{-1/2}}\big|1{-}\tfrac{\overline{\bw}}{\ebw}\big|, 
\end{multline*}
we conclude 
\begin{align*}
 \norm{[\resolventshort^\trunc]^{\frac{1-s}{2}}[[\resolventshort^\trunc]^\dagger\!\!-\!\!  {\ebw} \NNE\ECxt^{-1}\NNE^*]}_{\Wiishort\to\Wiishort}&{\le} 4\frac{4\ErrSpc+\Errwhp+\Errwhpv\frac{1+\Errwhp}{1-\Errwhpv}}{\reval_{\trunc}^{\frac{3+s}{2}}} +  \frac{\ErrSpc}{\reval_{\trunc}^{\frac{1+s}{2}}}\\
&\le 4\, \frac{4\ErrSpc+\Errwhp+3\Errwhpv}{\reval_{\trunc}^{\frac{3+s}{2}}} +  \frac{\ErrSpc}{\reval_{\trunc}^{\frac{1+s}{2}}}.
\end{align*}

Therefore, collecting the above, for all $s\in[0,1]$ we obtain 
\begin{multline*}
\shift^{-1}\norm{\GE_\trunc - \EGE}_{\Wiishort\to\Wiip{s}}{\leq} \frac{6{\reval_{\truncbis}^{\frac{1-s}{2}}}\,\ErrSpc }{\reval_{\trunc}}{+} {\ErrSpc} {+} \Errwhp\\{+}4\, \frac{4\ErrSpc+\Errwhp+3\Errwhpv}{\reval_{\trunc}^{\frac{3+s}{2}}} +  \frac{\ErrSpc}{\reval_{\trunc}^{\frac{1+s}{2}}},
\end{multline*}
and, hence, 
\begin{align}\label{eq:Lbound}
\norm{\GE_\trunc - \EGE}_{\Wiishort\to\Wiip{s}} &\le \shift\left[\frac{(7 \reval_\trunc +16)\,\ErrSpc+ 4\,\Errwhp + 12\Errwhpv}{\reval_{\trunc}^{\frac{3+s}{2}}}+ \ErrSpc + \Errwhp\right].
\end{align}

\subsection{Gradient estimation}\label{app:gradient_bounds}

To estimate the gradient, recall that $\w{\nabla \PT} = \w{\MX{G}}\NNe$, where, using the notation of Proposition \ref{prop:bound_xi}, 
\[
\w{\MX{G}}^\top=-\sum_{i=0}^{m}\egeval_i\,\w{u}_i\w{u}_i^\top \Big[\tfrac{1}{n\ebw} \sum_{j=1}^n\xi(x_j')\Big]=-\shift\,[\EWxt^{-1}-\ECxt^{-1}]\w{\xi},
\]
 where $\w{\xi}=\tfrac{1}{n}\sum_{i=1}^n\xi(x_i')=[\w{\xi}_1\,\vert\,\cdots\,\vert \w{\xi}_d]\in\R^{(m+1)\times d}$.

We recall also that for any $p\geq 1$
$$
\norm{{\nabla \PT} -\w{\nabla \PT} }_{\Wiip{p}} = \sqrt{\sum_{k\in[d]}\norm{\partial_k \PT -\w{\partial_k \PT}}^2_{\Wiip{p}}}. 
$$

\bigskip

In the following we consider the regularity assumption \textbf{(RA)}: There exists $p\geq 1$ such that for every $k\in[d]$, $\norm{\partial_k \PT}_{\Wiip{p}}<\infty$, that is $\norm{\nabla \PT}_{\Wiip{p}}<\infty$. Under this assumption, for every $s\leq p$ and every $k\in[d]$, we have that: 
\begin{align}
\norm{\TP_\trunc^\bot \partial_k \PT}_{\Wiip{s}} &= \norm{\resolventinvshort^{(p-s)/2}\TPTo\partial_k\PT}_{\Wiip{p}}\le \reval_{\truncbis}^{(p-s)/2}
\norm{\partial_k \PT}_{\Wiip{p}} 
\label{Eq:bound_projected_forces}
\end{align}

Next, recalling $\w{\xi}=[\w{\xi}_1\,\vert\,\cdots\,\vert \w{\xi}_d]\in\R^{(m+1)\times d}$ and denoting $\xi = \EE_{x'\sim\bim}\xi(x')= [\xi_1\,\vert\,\cdots\,\vert \xi_d(x')]\in\R^{(m+1)\times d}$, we have 
\[
-\EGE f_k {=}- \shift\,\frac{\ebw}{\overline{\bw}}\,\NNE[\EWxt^{-1}{-}\ECxt^{-1}]\EE_{x'\in\bim}[\xi_k(x')]\quad\text{ and }\quad \w{\partial_k\PT} {=} \shift\, \NNE[\EWxt^{-1}{-}\ECxt^{-1}]\w{\xi}_k,
\]
which than implies 
\begin{align}
\sqrt{\sum_{k\in[d]}\norm{\TPT\partial_k \PT -\w{\partial_k \PT}}^2_{\Wiip{s}}}
\nonumber
&\le \sqrt{2\sum_{k\in[d]}\norm{(\GE_\trunc - \EGE)f_k}^2_{\Wiip{s}}} + \sqrt{2\sum_{k\in[d]}\norm{\EGE f_k -\w{\partial_k \PT}}^2_{\Wiip{s}}}\\
\nonumber
& \le \norm{\GE_\trunc - \EGE}_{\Wiishort\to\Wiip{s}}\sqrt{2\sum_{k\in[d]}\norm{f_k}^2_{\Wiishort}}+ \sqrt{2\sum_{k\in[d]}\norm{\EGE f_k -\w{\partial_k \PT}}^2_{ {\Wiishort}} }\\
\nonumber
&=\sqrt{2 \EE_{x\sim\im}\norm{x}^2}\norm{\GE_\trunc - \EGE}_{\Wiishort\to\Wiip{s}}\\
\nonumber
&\qquad+ 
\sqrt{2}\shift\norm{\NNE[\EWxt^{-1}{-}\ECxt^{-1}](\w{\xi}-\tfrac{\ebw}{\overline{\bw}}\xi)} \\ \nonumber
&=\sqrt{2 \EE_{x\sim\im}\norm{x}^2}\norm{\GE_\trunc - \EGE}_{\Wiishort\to\Wiip{s}}\\
\nonumber
&\qquad+ 
\sqrt{2}\shift\norm{\NNE[\EWxt^{-1}{-}\ECxt^{-1}]\Wxt^{1/2}\Wxt^{-1/2}(\w{\xi}-\tfrac{\ebw}{\overline{\bw}}\xi)} \\
\nonumber
&\le \sqrt{2 \EE_{x\sim\im}\norm{x}^2}\norm{\GE_\trunc - \EGE}_{\Wiishort\to\Wiip{s}} \\
\nonumber
&\qquad+ \sqrt{2}\norm{\EGE}_{\Wiishort\to {\Wiishort}}\,\tfrac{\sqrt{\overline{\bw}}}{\ebw} \norm{\Wxt^{-1/2}[\w{\xi}-\tfrac{\ebw}{\overline{\bw}}\xi]}\notag\\
\nonumber
& \le \sqrt{2 \EE_{x\sim\im}\norm{x}^2}\norm{\GE_\trunc - \EGE}_{\Wiishort\to\Wiip{s}} \\
&\qquad+  \sqrt{2}\norm{\EGE}_{\Wiishort\to {\Wiishort}}\,\sqrt{\overline{\bw}} \norm{\Wxt^{-1/2}[\tfrac{1}{\ebw}\w{\xi}-\tfrac{1}{\overline{\bw}}\xi]}. 
\label{Eq:bound_partialU}
\end{align}

Using that $\Errwhpv<1/2$ from Lemma \ref{lem:weight_concentration_bernstein} we have that $|1/\ebw-1/\overline{\bw}|\leq 2\Errwhpv/\overline{\bw}$ and $1/\ebw \leq 2/\overline{\bw}$. Moreover, from the proof of Proposition \ref{prop:bound_xi} we have 
\begin{multline*}
\overline{\bw}\norm{\Wxt^{-1/2}\xi}^2\leq \overline{\bw}\EE_{x'\sim\bim}\norm{\Wxt^{-1/2}\xi(x')}^2\\\le \overline{\bw}(\EE_{x'\sim\bim}\norm{x'}^2\vee1) \esssup_{x'\sim\bim}\norm{\Wxt^{-1/2}\wx(x')\otimes\wx(x')\Wxt^{-1/2}} \leq  c_\param\,(\sigma_{\bim}^2\vee1) \overline{\bw},
\end{multline*}
where $\sigma_{\bim}^2 = \EE_{x'\sim\bim}\|x'\|^2$. 

Therefore, bounding the last term in \eqref{Eq:bound_partialU} as 
\[
\sqrt{\overline{\bw}} \norm{\Wxt^{-1/2}[\tfrac{1}{\ebw}\w{\xi}-\tfrac{1}{\overline{\bw}}\xi]}\leq \tfrac{1}{\ebw}\, \sqrt{\overline{\bw}} \norm{\Wxt^{-1/2}[\w{\xi}-\xi]}+ \sqrt{\overline{\bw}} \norm{\Wxt^{-1/2}\xi}|\tfrac{1}{\ebw}-\tfrac{1}{\overline{\bw}}|\leq \frac{2\ErrwhpXi}{\overline{\bw}}+\frac{2\Errwhpv \sigma_{\bim}\sqrt{c_\param}}{\sqrt{\overline{\bw}}}
\]
and collecting all bounds
%assuming that $\shift/\sqrt{\reval_\trunc}\leq1$ 
we conclude that
\begin{multline}
\norm{\nabla\PT {-} \w{\MX{G}}\NNe}_{\Wiip{s}}
{\le} \reval_{\truncbis}^{(p-s)/2}
\norm{\nabla\PT}_{\Wiip{p}} {+} \shift\,\sigma_{\bim}\, \sqrt{2}\Big[\frac{24\,\ErrSpc {+} 5\Errwhp{+}12\Errwhpv}{\reval_{\trunc}^{\frac{3+s}{2}}}\\  + 12\frac{\ErrwhpXi/\sigma_{\bim}{+}\Errwhpv \sqrt{c_\param\,\overline{\bw}}}{\overline{\bw}\,\reval_{\trunc}}\Big].
\label{Eq:bound_forces}
\end{multline}

\subsection{Proof of Theorem \ref{thm:main_stat_1}}
\label{app:proof_main_stat_1}

\begin{proof}[Proof of Theorem \ref{thm:main_stat_1}]

First, observe that, since $\geval_i = \shift(1 - 1/\reval_i)$, we have $\reval_i = \shift/(\shift - \geval_i)$, and, hence:
\begin{align*}
\gap_i
&= \min_{\ell \neq i} |\reval_\ell - \reval_i| 
= \min_{\ell \neq i} \left|\frac{\shift}{\shift - \geval_\ell} - \frac{\shift}{\shift - \geval_i}\right| 
= \shift \min_{\ell \neq i} \left|\frac{\geval_i - \geval_\ell}{(\shift - \geval_\ell)(\shift - \geval_i)}\right|\\
&= \frac{\shift}{{|\shift - \geval_i|}} \min_{\ell \neq i} \frac{|\geval_i - \geval_\ell|}{|\shift - \geval_\ell|} \in (0,1).
\end{align*}

Next, to prove spectral bounds we use \eqref{eq:spec_bound_empirical}, \eqref{eq:spec_bound_empirical2} and substitute definitions of $\ErrSpc$, $\Errwhp$ and $\Errwhpv$ as follows
\begin{align*}
\Errwhp/\overline{\bw}&\lesssim \varepsilon_{n}(\delta) := \cst e^{2\beta\norm{\bias}_\infty}\sqrt{\frac{m\, d}{\shift\,\beta\,n}\log\frac{m}{\delta}},\quad\text{ and}\\
\Errwhpv  &\lesssim \frac{\sigma_\bw }{\overline{\bw}} \sqrt{\frac{2\log(2/\delta)}{n}} \leq e^{\beta\norm{\bias}_\infty} \sqrt{\frac{2\log(2/\delta)}{n}}\lesssim \varepsilon_{n}(\delta),
\end{align*}
where we used that $\sigma_{\bim}^2 = \EE_{x'\sim\bim}\|x'\|^2\leq\sigma_{\im}^2e^{\beta\norm{\bias}_\infty}$, $\sigma_{v}^2 = \EE_{x'\sim\bim}\bw(x')^2\leq \overline{\bw}e^{\beta\norm{\bias}_\infty}$ and $\overline{\bw}\geq e^{-\beta\norm{\bias}_\infty}$.

Therefore, 
\begin{equation}\label{eq:spec_bound_empirical_bis}
|\ereval_i-\reval_i|\lesssim \ErrSpc+ \varepsilon_{n}(\delta) \quad\text{ and }\quad \norm{\egefun_i {-} \gefun_i}_{\Wii\to\Wii}\lesssim \frac{\ErrSpc}{\gap_i}{+}\frac{ \varepsilon_{n}(\delta)}{[\gap_i{-}3\,\ErrSpc]_{+}},
\end{equation}
and the bound follows, after some algebra,  by expressing gap in generator eigenvlues w.r.t. gap in the resolvent ones.

To conclude the proof,  observe that
\begin{align*}
\ErrwhpXi \reval_m^{\frac{1+s}{2}} / \overline{\bw}&\lesssim \cst e^{2\beta\norm{\bias}_\infty} \sqrt{\frac{\shift\,m}{(\shift-\eval_m)n}} \max\left\{\frac{1}{\shift\beta}\sqrt{\log\frac{m+d}{\delta}},\, \sigma_{\im}e^{\beta\norm{\bias}_\infty/2}\log\frac{1}{\delta}\right\} \\
&\lesssim \sigma_{\im}\varepsilon_{n}(\delta),
\end{align*}
whenever $\beta(\shift-\eval_m)\geq1$, while
\[
\sigma_{\bim}\Errwhpv\sqrt{c_\param/\overline{\bw}}\leq \sigma_{\bim}\cst e^{\beta\norm{\bias}_\infty}\sqrt{\frac{m\, d}{\shift\,\beta}}\Errwhpv\lesssim \sigma_{\bim} e^{2\beta\norm{\bias}_\infty} \sqrt{\frac{m\, d}{\shift\,\beta\,n}\log\frac{2}{\delta}}\lesssim\varepsilon_{n}(\delta).
\]

So, substituting this in \eqref{Eq:bound_forces} we obtain
\[
\norm{\nabla\PT - \w{\MX{G}}\NNe}_{\Wiip{s}}
{\le} \reval_{\truncbis}^{(p-s)/2}
\norm{\nabla\PT}_{\Wiip{p}} {+}   \shift\reval_{\trunc}^{-\frac{3+s}{2}} \sigma_{\im}\,\big(\varepsilon_{n}(\delta)+\ErrSpc\big).
\]

\end{proof}

\section{Experiments}\label{app:exp}

In this section we expand on the experimental validation of LITL and its applications in physics and ML. While details for each experiment are provided, the code for reproducing them is available upon request and will be included in the camera ready version of the paper.  

\subsection{1D double-well potential}\label{app:1d2w}

The goal of this experiment is to recover the spectral decomposition of the generator corresponding to the 1D double-well potential. We compared LILT to prior methods \cite{devergne2024biased} and \cite{ZHANG2022diffusionNN} for the estimation of the four eigenpairs $(\lambda_j, \refun_j)_{j=1}^{4}$. We note that for the method of \cite{ZHANG2022diffusionNN}, even in the best-performing trial, we were able to identify only two distinct eigenfunctions, with eigenvalues $0.6$ and $8$, which are close to the reference values.

For our method LILT in this experiment, we used standard feed forward neural networks, with tanh activation function. We used $\shift=100$, $\regnn=5$ and trained 20 features. One of the strengths of LITL is the estimation of the drift. In this section, we compare our estimation with respect to the ground truth. In Figure \ref{fig:double_well_drift}, left panel we show how the drift can be projected on the basis of the eigenfunctions of $\GE$ obtained by numerically solving the eigenvalue problem. We note, that already even with the ground-truth eigenfunctions, one needs a sufficiently high number of basis functions to express the drift. On the right panel, we show our estimation of the drift using LITL: it matches very well the ground truth on the training set. Finally, we compare the dynamics produced from the estimation of the drift from LITL with the true one in Figure \ref{fig:1D2W_flow}. We started 500 simulations in the left well and looked at the time evolution of the probability distribution, both with the target potential and with LITL. LITL reproduces very well the time evolution of the probability distribution, without having to be trained with data coming from the target potential.

In Figure \ref{fig:double_well}
%\vladi{maybe we keep only table here and have figures in the appendix?} 
we show the accuracy of the estimated eigenfunctions by computing uncertainties over 4 different trials with different random initialization. The results of \cite{ZHANG2022diffusionNN} are omitted due to frequent uncontrolled collapse of learned subspace to slower modes, likely caused by larger number of sensitive hyper-parameters. The other two methods perform very well on the first eigenfunction, which characterizes the transition between the two wells, the only slow mode of the system. But then, as we increase the eigenfunction index, the variance between trials for the second method increases, while LITL's one remains. 

\begin{figure}
    \centering
    \vspace{-0.3cm}
    \includegraphics[width=0.3\linewidth]{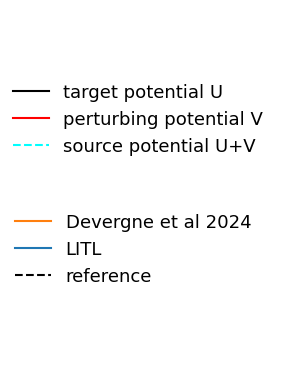}\includegraphics[width=0.65\linewidth]{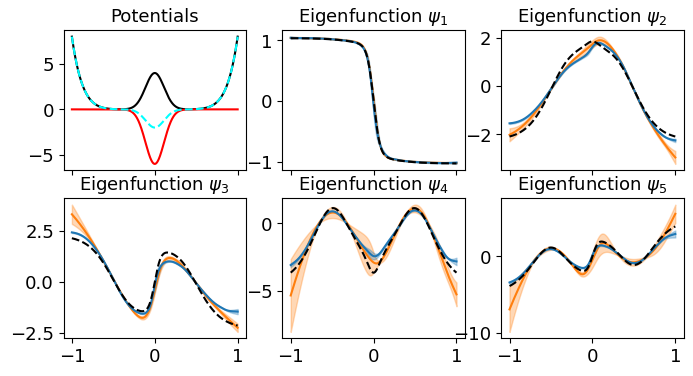}
    \caption{Top left panel: the different potentials involved in this experiment in units of $1/\beta$. Other panels: comparison of the eigenfunctions obtained with LITL and with the methods presented in \cite{devergne2024biased} and \cite{ZHANG2022diffusionNN}. Error bars have been computed using 4 different models with different random initial parameters}
    \label{fig:double_well}
    \vspace{-0.7cm}
\end{figure}

\begin{figure}
    \centering
    \includegraphics[width=\linewidth]{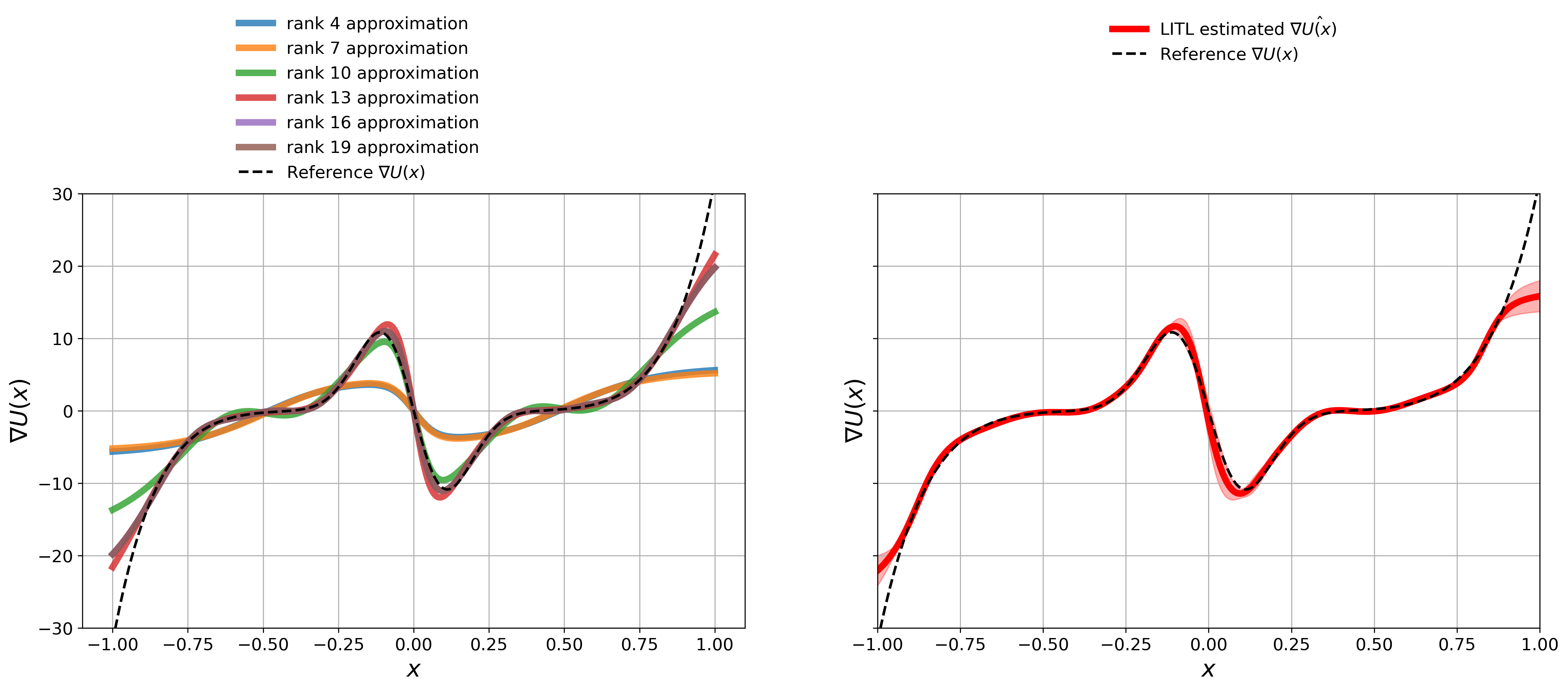}
    \caption{\textit{1D double-well Langevin dynamics.} Gradient of the target potential projected on the leading eigenspaces (left), and its estimation from data (right). Uncertainty is computed over 4 different models}
    \label{fig:double_well_drift}
\end{figure}

\begin{figure}[ht!]
    \centering
   \includegraphics[width=0.9\linewidth]{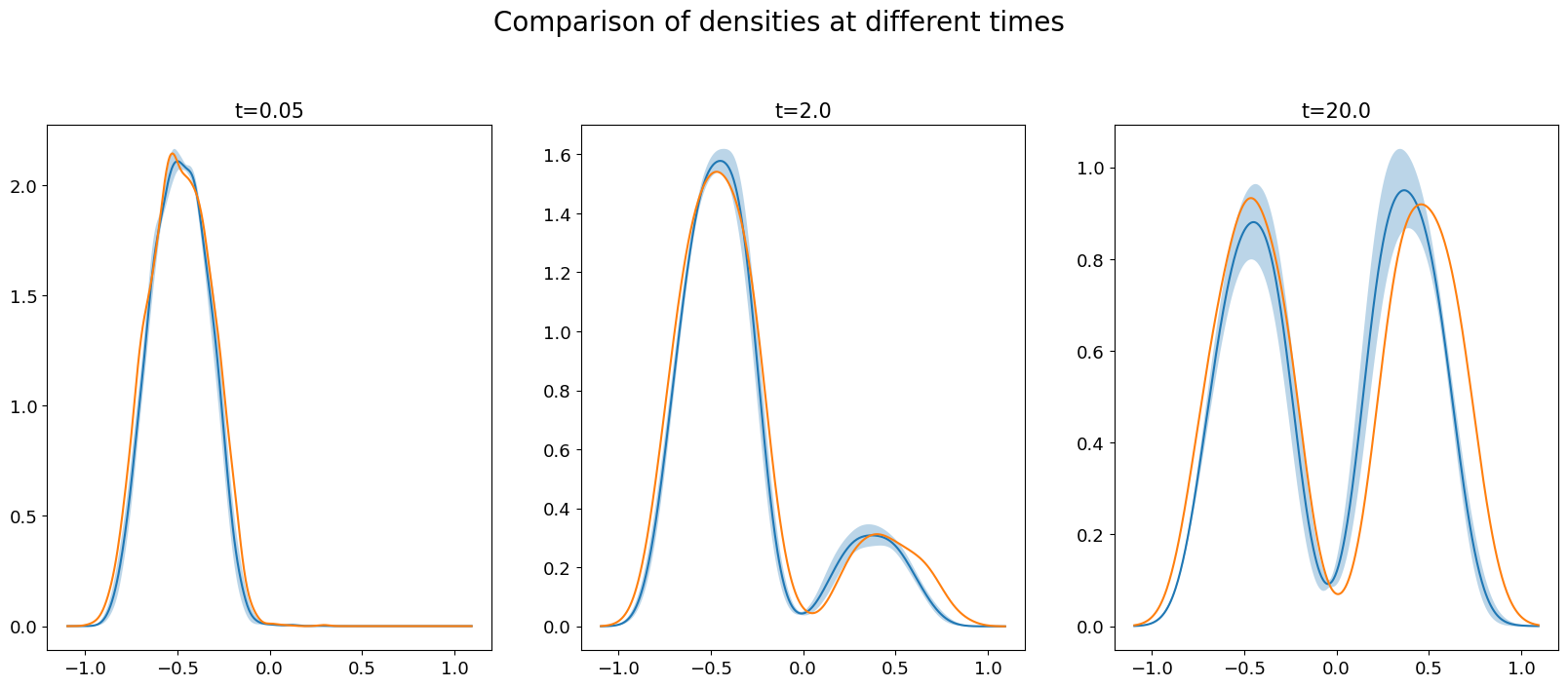}
    \caption{\textit{1D double-well Langevin dynamics.} Target flow from an initial distribution and its estimation from data.}
    \label{fig:1D2W_flow}
\end{figure}

\subsection{Chignolin}\label{app:chignolin}

For this experiment, we used the schnet architecture \cite{schnet} as implemented in the mlcolvar library \cite{mlcolvar}. Only the positions of the alpha carbons were taken into account, and for more expressibility, the one hot encoding of the atomic number node feature information was replaced with the residue type. We used with 3 interaction blocks, 14 RBF functions and an hidden dimension of 64. The value of $\shift$ used was 5000 with $\alpha=1.0$. To optimize the weights, we used the AdamW optimizer. The variational ansatz for the features was of the form: $Z_{\theta}(x) = G_\theta {\rm softmax}(h_\theta)$. Where $G_\theta$ is a linear layer and $h_\theta$ is the output of the graph neural network. 

\subsection{Cobalt nanoparticle}\label{app:cobalt}

\begin{figure}[h!]
    \centering
    \includegraphics[width=0.75\linewidth]{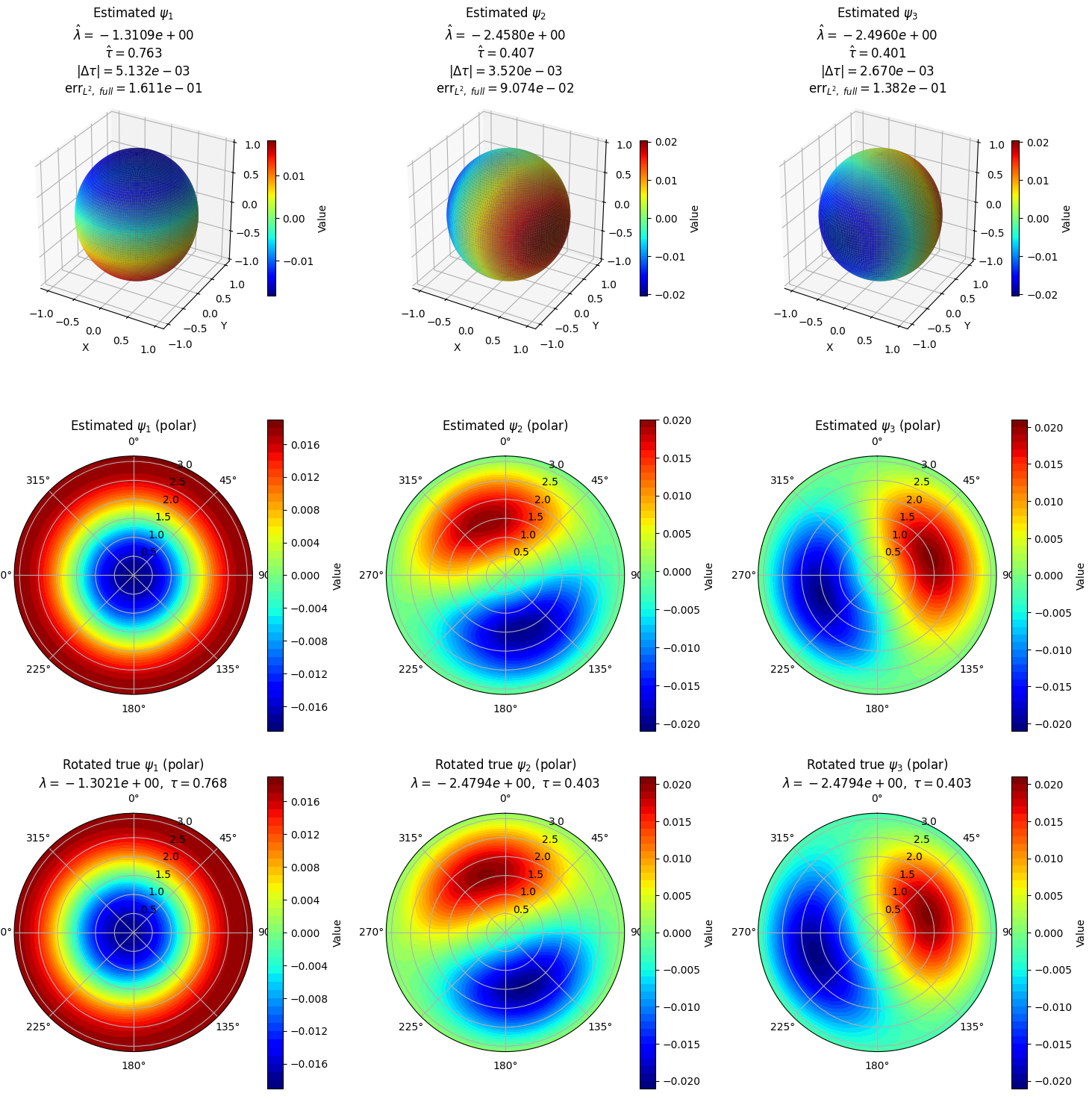}\\
    \caption{\textit{First triplet of leading eigenfunctions of the Langevin generator for a cobalt nanoparticle potential.} First row show 3D visualization on the unit sphere $\mathbb{S}^2$, colored by eigenfunction value, while the second one shows polar coordinate representation $\psi(\theta,\varphi)$. Third row shows reult obtained by LITL. Multiplicity of learned eigenvalues (i.e. time-scales) and spatial symmetry of eigenfunction are consistent with symmetries of the potential energy, and eigenvalue and eignfunction errors are reported in the header.}
    \label{fig:eigenfunctions_app_1}
\end{figure}

\begin{figure}[h!]
    \centering
    \includegraphics[width=0.75\linewidth]{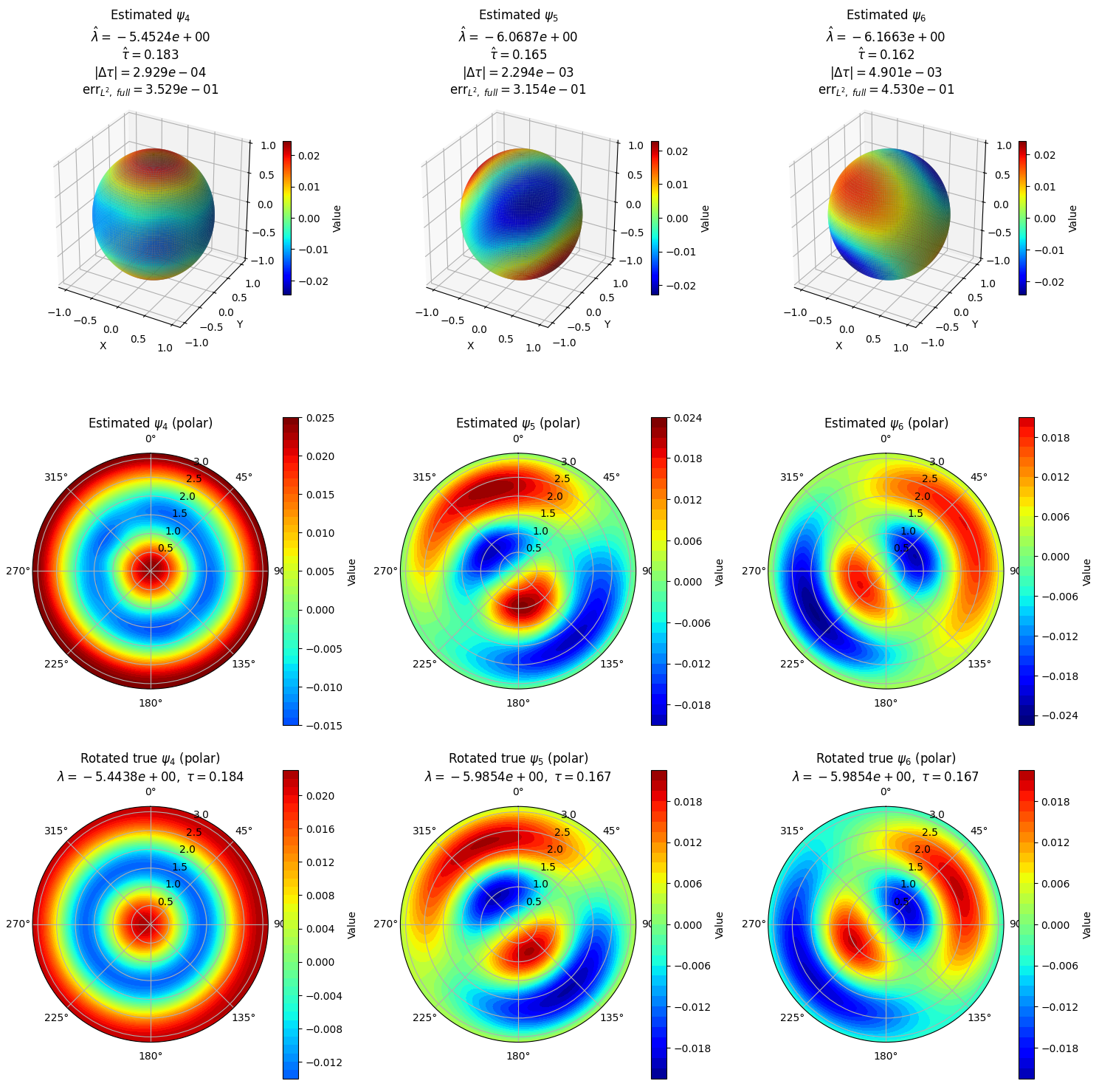}\\
    \caption{\textit{Second triplet of leading eigenfunctions of the Langevin generator for a cobalt nanoparticle potential.} First row show 3D visualization on the unit sphere $\mathbb{S}^2$, colored by eigenfunction value, while the second one shows polar coordinate representation $\psi(\theta,\varphi)$. Third row shows reult obtained by LITL. Multiplicity of learned eigenvalues (i.e. time-scales) and spatial symmetry of eigenfunction are consistent with symmetries of the potential energy, and eigenvalue and eignfunction errors are reported in the header.}
    \label{fig:eigenfunctions_app_2}
\end{figure}

\begin{figure}[h!]
    \centering
    \includegraphics[width=0.75\linewidth]{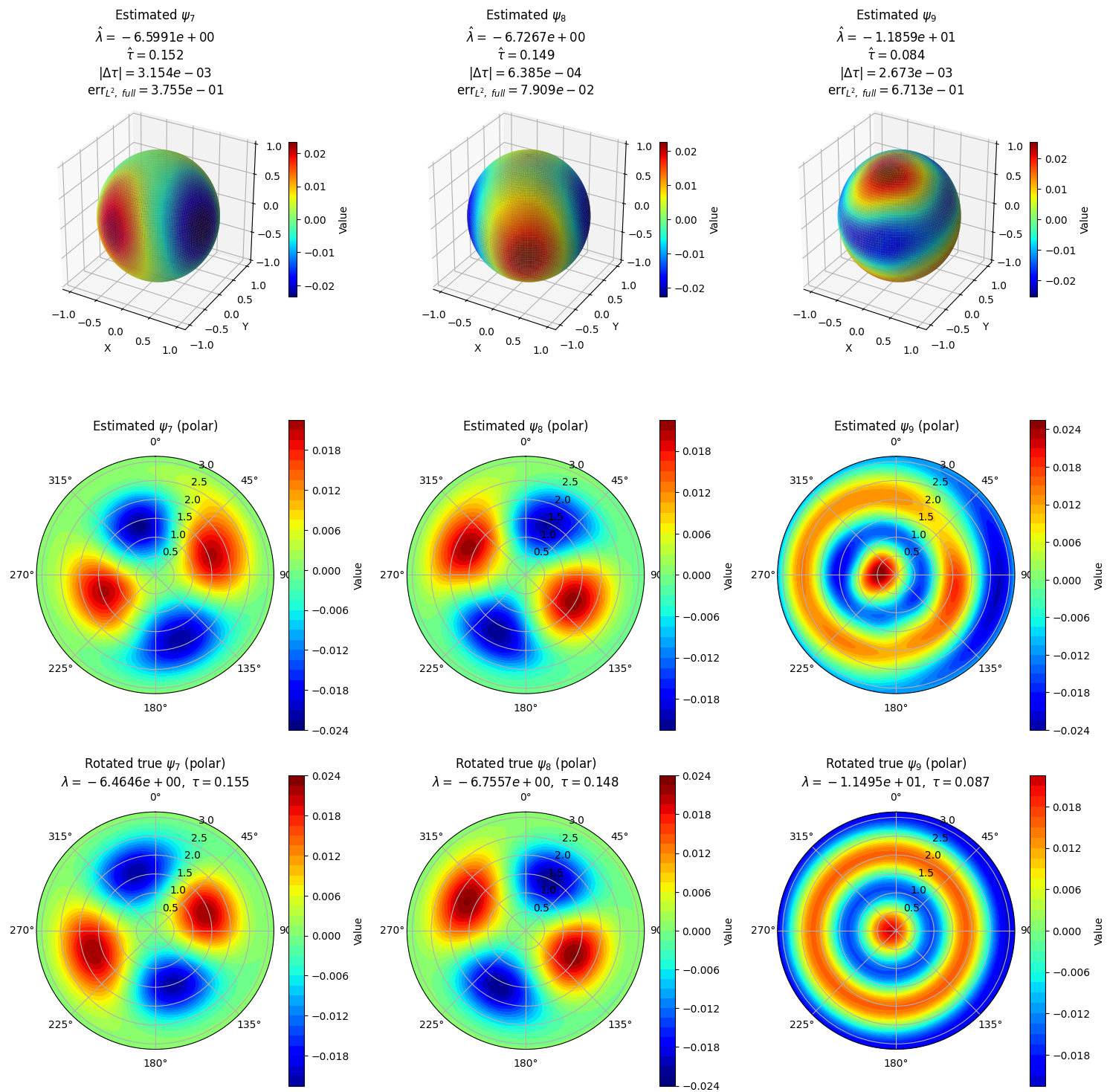}\\
    \caption{\textit{Third triplet of leading eigenfunctions of the Langevin generator for a cobalt nanoparticle potential.} First row show 3D visualization on the unit sphere $\mathbb{S}^2$, colored by eigenfunction value, while the second one shows polar coordinate representation $\psi(\theta,\varphi)$. Third row shows reult obtained by LITL. Multiplicity of learned eigenvalues (i.e. time-scales) and spatial symmetry of eigenfunction are consistent with symmetries of the potential energy, and eigenvalue and eignfunction errors are reported in the header.}
    \label{fig:eigenfunctions_app_3}
\end{figure}

\textbf{Cobalt nanoparticle magnetization dynamics.} We learn the spectral properties of the Langevin generator on a sphere $\mathbb{S}^2$, modeling the magnetization dynamics of a cobalt nanoparticle under the potential $U(\mathbf{m}) \,{=}\, {-}K_u (\mathbf{m} \,{\cdot}\, \mathbf{n}_u)^2 \,{+}\, K_c (m_x^2 m_y^2 \,{+}\, m_y^2 m_z^2 \,{+}\, m_z^2 m_x^2)$, $\mathbf{n}_u$ being easy axis of uniaxal anisotropy. \citep{garanin1997fokker,garcia1998langevin}. For uniaxal and cubic anisotropy constants $K_u \,{=}\, 1.0$ and $K_c \,{=}\, 0.3$, respectively, four metastable states are separated by energy barriers. Understanding the thermodynamics and kinetics of cobalt nanoparticles, in particular the metastable states and transition rates revealed by the Langevin generator, helps set switching fields for memory devices, control blocking temperatures and data retention times, and optimize heat dissipation in hyperthermia applications.
Our method infers the generator's eigenvalues and eigenfunctions using only potential evaluations - without accessing its gradient - simulating experimental scenarios where analytic form of the potential is unknown, but instead  anisotropy landscapes are empirically reconstructed via techniques like torque magnetometry or micro-SQUID measurements \cite{jamet2001magnetic} which probe switching field distributions without direct gradient information. Furthermore, learning the generator directly from sampled trajectories enables prediction of long-time magnetization relaxation, identification of dominant transition pathways, and extraction of effective energy barriers, all critical aspect for designing nanoparticle-based hyperthermia agents, where heating efficiency depends on the slowest dynamical modes and only equilibrium data may be available.

In magnetic nanoparticle systems, knowledge of metastable states directly governs key technological metrics. The number, depth, and connectivity of these states determine the switching field in magnetic memory devices—each metastable orientation can encode a distinct bit. The energy barriers between states set the blocking temperature and data retention time in storage applications, with $\beta\Delta U > 60$ enabling non-volatile memory stable for years. For magnetic hyperthermia cancer treatment, the transition rates between states control the heat dissipation rate under alternating fields; optimizing the anisotropy landscape maximizes tumor heating while minimizing nanoparticle dosage. In magnetic sensors, engineered metastable states enhance sensitivity by creating steep magnetization transitions at specific field thresholds. Furthermore, the multi-state dynamics produce multi-exponential relaxation in AC susceptibility measurements, allowing experimentalists to fingerprint nanoparticle size distributions and anisotropy disorders.

Remarkably, our method was able to recover over ten eigenfunctions with the correct symmetry groups of the model, see Figures \ref{fig:eigenfunctions_app_1}-\ref{fig:eigenfunctions_app_3}, purely from black-box potential evaluations, without prior knowledge of the functional form. The learned eigenfunctions exhibit degeneracy patterns characteristic of the underlying $D2h$ point group symmetry including a singlet stationary distribution and doublets of symmetry-related eigenfunctions linked by rotations of $\pi/2$ in azimuthal angle $\phi$, and reflections through the equatorial plane. Identifying these symmetries is fundamental because they dictate the multiplicity of relaxation timescales.

In magnetic nanoparticles, such symmetry dictates the multiplicity of relaxation timescales, and knowing it reveals whether the anisotropy is purely uniaxial or includes cubic contributions, directly impacting applications from data storage (bit stability) to hyperthermia (heat dissipation pathways). The ability of our method to extract this symmetry structure from potential-only queries demonstrates that the method can infer not just dynamical rates but also the underlying geometric constraints of experimentally measured systems.  To summarize, LITL's capacity to map the metastable landscape isn't merely academic, it enables rational design of nanoparticles for targeted applications by linking atomic-scale anisotropy to macroscopic magnetic performance unlocking exciting new avenues of research.

\subsection{Fairness-aware transfer via latent diffusion.}\label{app:fairness}

We evaluate the proposed method on the Adult Income dataset ({\it https://archive.ics.uci.edu/dataset/2/adult}), a standard benchmark for fairness in binary classification, in the challenging unawareness case \cite{chen2019fairness,donini2018empirical,zemel2013learning}. The task is to predict whether an individual’s annual income exceeds \$50K, using demographic and employment-related features. Following common practice, we consider \emph{gender} as the sensitive attribute and measure fairness by Equal Opportunity (EO) given by the \emph{True Positive Rate (TPR) difference} between males and females. The dataset is split into \textit{training} (60\%), \textit{audit} (25\%) and \textit{test} (15\%) datasets, with the sensitive variable removed from the training set. The audit data is used to returns black-box feedback indicating whether a model  satisfies the desired fairness criterion. 

We first learn a classifier $f_{\omega}(x)\,{=}\,\softmax(a_{\omega}^\top e_{\omega}(x))$ via empirical risk minimization on the training set, where $\omega$ denote the classifier's parameters. In particular, $e_{\omega}(x) \in \mathbb{S}^{d-1}$ denotes the network latent (inner) representation, normalized to lie on the sphere, and $a_\omega\in\R^{d}$. This base classifier achieves strong predictive performance but exhibits a non-negligible TPR disparity between groups, both on the held-out audit set and on the test set.

We adopt a geometric viewpoint in which the latent representations produced by the trained encoder induces a compact state space on $\mathbb{S}^{d-1}$, allowing us to model their variability using intrinsic diffusion processes. As a \textit{source} reference dynamics, we consider a \emph{pure diffusion} (i.e., $\PT'=0$) on the latent sphere, whose generator is the Laplace-Beltrami operator. This source diffusion does not encode any task-specific objective; rather, it captures the intrinsic geometry and dominant modes of variation of the learned representation subject to pure randomness. 

\begin{figure}[h!]
    \centering
\includegraphics[width=0.49\linewidth]{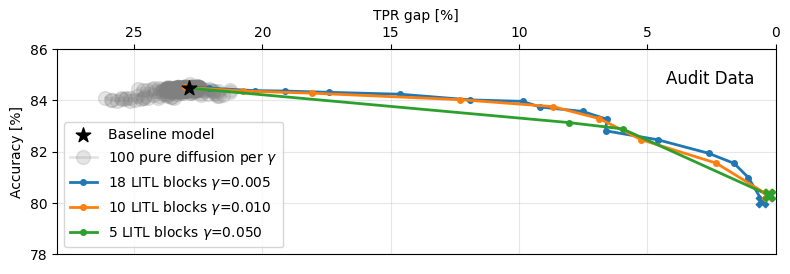}
\includegraphics[width=0.49\linewidth]{figs/LITL_fairness_pareto_tst.png}
    \caption{Accuracy--fairness Pareto curve for the Adult Income experiment. Each point corresponds to a different number of LITL steps. LITL hyperparameters are calibrated on audit set, and then used to achieve substantial reduction in TPR disparity with a limited decrease in accuracy on a test set.}
%\vspace{-.5truecm}
\label{fig:pareto_adult_app}
%\vspace{-.5truecm}
\end{figure}

The EO score objective induces an unknown  target potential $\PT$ on the latent space $\mathbb{S}^{d-1}$, whose values can be evaluated locally by probing the model on the audit data, but whose gradient is not directly accessible. To be precise, we define $\PT(e){=}|\mathrm{TPR}_{\mathrm{male}}({\cal N}_e) - \mathrm{TPR}_{\mathrm{female}}({\cal N}_e)|$, where ${\cal N}_e$ denotes the 100 nearest neighbors of $e\in\mathbb{S}^{d-1}$.

Unlike the source Laplace-Beltrami operator with potential $\PT'{=}\,0$, now the target Langevin generator's eigenfunctions define a hierarchy of fairness related slow and fast directions on the latent manifold. In particular, the leading nontrivial eigenfunctions correspond to large-scale, semantically meaningful variations in the representation, while higher modes capture localized or noisy fluctuations. This multiscale spectral structure provides a natural coordinate system for analyzing and adjusting the model’s fairness behavior in a controlled and interpretable way. 
So, LITL leverages samples from the source diffusion and scalar evaluations of the bias functional $\bias {=} {-}\PT$ obtained from the audit set to estimate the gradient of the induced potential \emph{projected onto the slow eigenspaces of the source generator}. Importantly, this estimation does not rely on differentiating the bias measure with respect to model parameters. Instead, LITL withing its two phases learns a vector field that approximates the intrinsic spherical gradient of the bias-induced potential in directions that are stable under latent diffusion.

\begin{figure}[h!]
    \centering
   \includegraphics[width=0.80\linewidth]{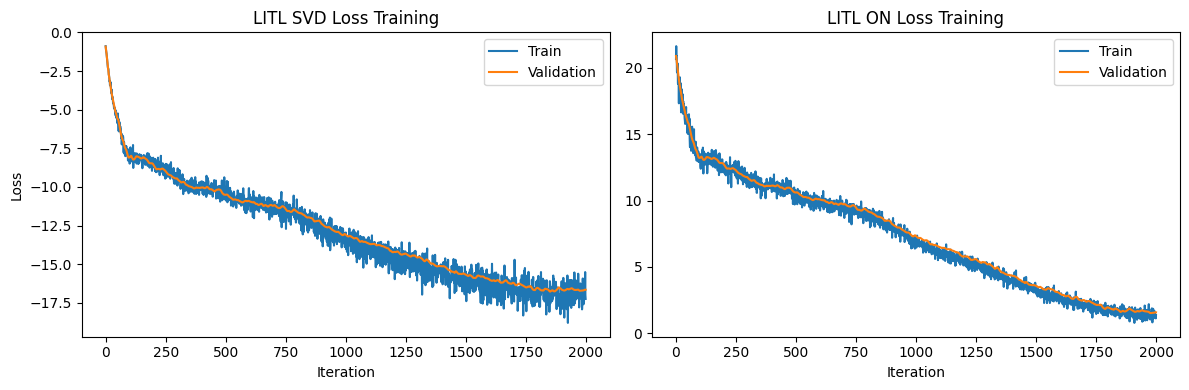}
    \caption{Training dynamics of the LITL encoder. Physics-informed contrastive loss measurign the discrepancy of }
    \label{fig:training_adult}
\end{figure}

In this experiment we sampled $n=5000$ samples from $d=32$ dimensional latent sphere, parametrized $\NNe$ as a multilayer perceptron $[32, 66, 33, 66, 33]$ with $\mathrm{Tanh}$ activations up to penultimate layer. In Figure \ref{fig:training_adult} we show the training and validation losses (on 200 samples) across the epochs. Main SVD loss is shown on the left, while the orthonormal penalty is shown on the right. 

Once LITL encoder $\NNe$ is trained, we proceed with the estimation of the spectrum and gradient field using all training samples. 

The learned gradient field $\w{\nabla}_{\mathbb{S}^{d{-}1}}\PT(e){=}(\MX{I}{-}ee^\top)\w{G}\NNe(e)$ is used to perform short-horizon, geometry-respecting updates on the latent sphere 
\[
e^{(k+1)}_\omega{=}\tfrac{(1+\gamma {e^{(k)}_\omega}^\top\w{G}\NNe(e^{(k)}_\omega)e^{(k)}_\omega{-}\gamma\w{G}\NNe(e^{(k)}_\omega)}{\|(1+\gamma {e^{(k)}_\omega}^\top\w{G}\NNe(e^{(k)}_\omega)e^{(k)}_\omega{-}\gamma\w{G}\NNe(e^{(k)}_\omega)\|},\, k\geq0,
\]
where each update corresponds to a small $\gamma>0$ step along the estimated spherical gradient, followed by projection to the sphere. Repeating this procedure steers the pretrained model $f_{\omega}$ toward configurations with reduced TPR disparity while preserving the dominant representational structure learned during initial training. Since the intervention acts only along slow diffusion modes and is applied over short time scales, it avoids the instability and performance collapse typically associated with direct re-optimization using fairness regularizers. 

We apply the LITL update iteratively and monitor both predictive accuracy and TPR difference on the audit set. Once we have calibrated hyperparameters step size and number of gradient steps, we perform the evaluation of the validated model on a test set. 

\begin{figure}[h!]
    \centering
    \includegraphics[width=0.34\linewidth]{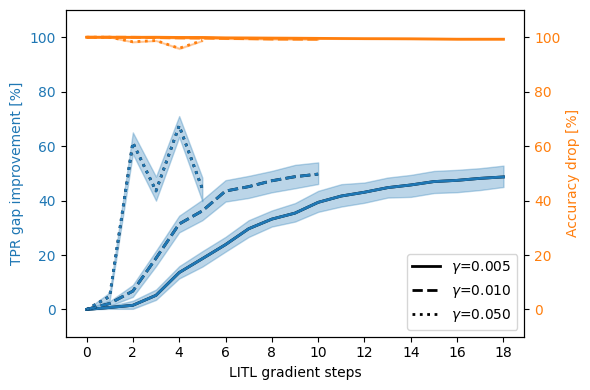}
    \includegraphics[width=0.3\linewidth]{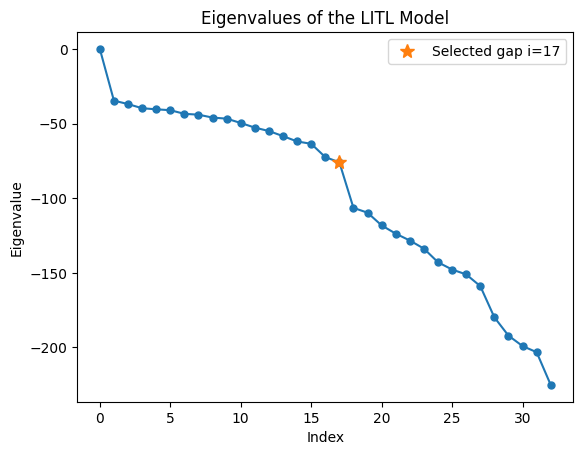}
    \includegraphics[width=0.34\linewidth]{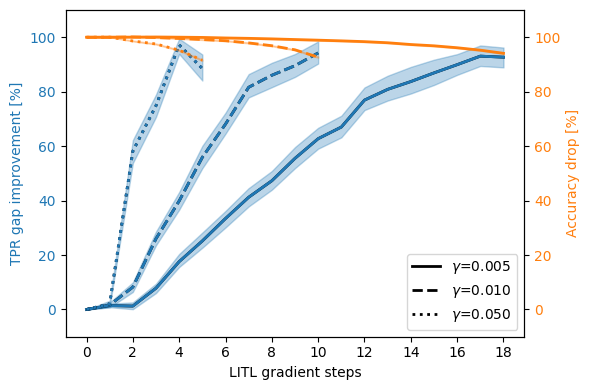}
    \caption{Accuracy deterioration and TPR disparity improvement across LITL gradient steps on 100 random test datasets. On the left the LITL model uses 16 slowest directions, while on the right the flow is across additional 16 faster ones. While slowest manifold corrects the TPR gap while minimally changing the accuracy it reaches the metastable state at around 50\% gap reduction. On the other hand, including faster dynamics allows further reduction, but the transformations of the latent space geometry induce accuracy deterioration. The gap in the spectrum shown in the middle plot shows how the two manifolds are separated. This demonstrates how spectral structure of the generator related to the feedback potential allows one to introduce alignment of the model in a control manner.}
\label{fig:improvement_adult}
\end{figure}

In Figure ~\ref{fig:pareto_adult_app} we report the resulting Pareto curves on audit and test set, illustrating the trade-off between accuracy and fairness induced by the LITL intervention. Each point on the curve corresponds to a different number of LITL steps. As the intervention increases, the TPR difference significantly decreases to acceptable 1\%, while accuracy degrades gracefully. 

Furthermore, since by inspecting the estimated spectrum of the generator, c.f. Figure \ref{fig:improvement_adult} (middle), one observes a clear separation of the slow and faster manifold, we examine how much improvement one obtains by using only first 16 nontrivial eigenfunctions in projecting the gradient of $\PT$ Figure \ref{fig:improvement_adult} (left) compared to using all 32 learned ones, Figure \ref{fig:improvement_adult} (right). Shown results are obtained via 100 randomly chosen  test sets, where the mean improvement is shown with a line, while the shaded are represents 5th and 95th percentile.

\end{document}